\documentclass{article} 
\usepackage{iclr2025_conference,times}


\usepackage{amsmath,amsfonts,bm}









\def\eqref#1{equation~\ref{#1}}









\def\1{\bm{1}}










\DeclareMathAlphabet{\mathsfit}{\encodingdefault}{\sfdefault}{m}{sl}
\SetMathAlphabet{\mathsfit}{bold}{\encodingdefault}{\sfdefault}{bx}{n}











\newcommand{\R}{\mathbb{R}}

\newcommand{\softmax}{\mathrm{softmax}}



\DeclareMathOperator*{\argmin}{arg\,min}

\PassOptionsToPackage{numbers, compress}{natbib}


\usepackage{wrapfig}




\usepackage{anyfontsize}

\RequirePackage{algorithm}
\RequirePackage{algorithmic}
\usepackage{microtype}
\usepackage[final]{graphicx}
\usepackage{booktabs} 

\usepackage{hyperref}

\usepackage{array}
\newcolumntype{+}{>{\global\let\currentrowstyle\relax}}
\newcolumntype{^}{>{\currentrowstyle}}

\usepackage{amsmath}
\usepackage{amssymb}
\usepackage{mathtools}
\usepackage{amsthm}
\usepackage{pdfpages}

\usepackage[capitalize,noabbrev]{cleveref}

\theoremstyle{plain}

\theoremstyle{definition}

\theoremstyle{remark}

\usepackage[textsize=tiny]{todonotes}


\usepackage{pgfplots}
\pgfplotsset{compat=1.18}
\usepackage{pgfplotstable}
\usepackage{enumitem}
\usepackage{pifont}

\usepackage{textcomp}
\usepackage{xcolor}
\usepackage{bbold}
\usepackage[font=tiny,labelfont=bf]{caption}
\captionsetup[figure]{font=tiny}
\usepackage{acro}
\usepackage{commath}
\usepackage{relsize}
\usepackage{tikz}
\usetikzlibrary{spy}
\usepackage{xcolor}
\usepackage[font=tiny,labelfont=bf]{subcaption}
\usepgfplotslibrary{groupplots}



\pgfplotsset{compat=newest}

\newtheoremstyle{bolditalic}%
  {}{}
  {\itshape}{}
  {\bfseries\itshape}{.}
  { }{\thmname{#1}\thmnumber{ #2}\thmnote{ (#3)}}
\theoremstyle{bolditalic}

\newtheorem{lem}{Lemma}

\newtheorem{prop}{Proposition}

\newcommand{\hby}{\hat{\boldsymbol{y}}}

\newcommand{\barA}{\Bar{A}}
\newcommand{\BarA}{\bm{\Bar{A}}}

\newcommand{\HatA}{\bm{\hat{A}}}
\newcommand{\TildeA}{\bm{\Tilde{A}}}

\newcommand{\Bara}{\bm{\Bar{a}}}
\newcommand{\Hata}{\bm{\hat{a}}}
\newcommand{\Tildea}{\bm{\Tilde{a}}}

\newcommand{\TildeD}{\bm{\Tilde{D}}}
\newcommand{\HatU}{\bm{\hat{U}}}

\newcommand{\xmark}{\ding{55}}%

\newcommand{\fp}[2]{\frac{\partial{#1}}{\partial{#2}}}


\newcommand{\bg}{\bm{g}}
\newcommand{\bh}{\bm{h}}

\newcommand{\bq}{\bm{q}}

\newcommand{\bs}{\bm{s}}

\newcommand{\bx}{\bm{x}}
\newcommand{\by}{\bm{y}}
\newcommand{\bz}{\bm{z}}
\newcommand{\bA}{\bm{A}}

\newcommand{\bD}{\bm{D}}

\newcommand{\bF}{\bm{F}}
\newcommand{\bG}{\bm{G}}
\newcommand{\bH}{\bm{H}}
\newcommand{\bI}{\bm{I}}

\newcommand{\bM}{\bm{M}}

\newcommand{\bQ}{\bm{Q}}

\newcommand{\bS}{\bm{S}}

\newcommand{\bX}{\bm{X}}
\newcommand{\bY}{\bm{Y}}
\newcommand{\bZ}{\bm{Z}}

\newcommand{\bTheta}{\bm{\Theta}}

\newcommand{\btheta}{\bm{\theta}}


\newcommand{\mbbR}{\mathbb{R}}


\newcommand{\msc}{\mathsf{c}}

\newcommand{\msf}{\mathsf{f}}

\newcommand{\mss}{\mathsf{s}}


\newcommand{\mcE}{\mathcal{E}}

\newcommand{\mcG}{\mathcal{G}}

\newcommand{\mcL}{\mathcal{L}}

\newcommand{\mcN}{\mathcal{N}}

\newcommand{\mcS}{\mathcal{S}}

\newcommand{\mcV}{\mathcal{V}}
\newcommand{\tmcV}{\tilde{\mathcal{V}}}

\newcommand{\transpose}{^\mathsf{T}}






\DeclareAcronym{gnn}{ 
    short = {GNN}, 
    long  = {graph neural network}
}
\DeclareAcronym{fl}{ 
    short = {FL}, 
    long  = {federated learning}
}
\DeclareAcronym{ml}{
    short = {ML},
    long = {machine learning}
}
\DeclareAcronym{dnn}{
    short = {DNN},
    long = {deep neural network}
}
\DeclareAcronym{sdsfl}{ 
    short = {SDSFL}, 
    long  = {structure decoupled subgraph federated learning}
}
\DeclareAcronym{sfv}{ 
    short = {SFV}, 
    long  = {structure feature vector}
}

\DeclareAcronym{rsfv}{
    short={RSFV},
    long = {random structure feature vector}
}
\DeclareAcronym{sfm}{ 
    short = {SFM}, 
    long  = {structure feature matrix}
}
\DeclareAcronym{sfl}{ 
    short = {SFL}, 
    long  = {subgraph federated learning}
}
\DeclareAcronym{fedavg}{ 
    short = {FedAvg}, 
    long  = {federated averaging}
}
\DeclareAcronym{spm}{ 
    short = {SPM}, 
    long  = {structure predictor model}
}
\DeclareAcronym{fpm}{ 
    short = {FPM}, 
    long  = {feature predictor model}
}
\DeclareAcronym{gdv}{ 
    short = {GDV}, 
    long  = {graphlet degree vector}
}
\DeclareAcronym{nfe}{ 
    short = {NFE}, 
    long  = {node feature embedding}
}
\DeclareAcronym{nse}{ 
    short = {NSE}, 
    long  = {node structure embedding}
}
\DeclareAcronym{sem}{ 
    short = {SEM}, 
    long  = {structure embedding matrix}
}
\DeclareAcronym{fem}{ 
    short = {FEM}, 
    long  = {feature embedding matrix}
}
\DeclareAcronym{sel}{ 
    short = {SEL}, 
    long  = {structure embedding loss}
}
\DeclareAcronym{dgcn}{ 
    short = {decoupled-GCN}, 
    long  = {decoupled-GCN}
}
\DeclareAcronym{mlp}{ 
    short = {MLP}, 
    long  = {multi layer perceptron}
}

\usepackage{hyperref}
\usepackage{url}

\title{Decoupled Subgraph Federated Learning}

\author{%
Javad Aliakbari$^{1}$ \quad Johan \"Ostman$^2$ \quad Alexandre Graell i Amat$^{1}$\\~\\
$^1$Chalmers University of Technology \quad $^2$AI Sweden  }

\iclrfinalcopy

\begin{document}
\maketitle

\begin{abstract}
We address the challenge of federated learning on graph-structured data distributed across multiple clients. Specifically, we focus on the prevalent scenario of interconnected subgraphs, where interconnections between different clients play a critical role.  We present a novel framework for this scenario, named \textsc{FedStruct}, that harnesses deep structural dependencies. To uphold privacy, unlike existing methods, \textsc{FedStruct} eliminates the necessity of sharing or generating sensitive node features or embeddings among clients. Instead, it leverages explicit global graph structure information to capture inter-node dependencies.
We validate the effectiveness of \textsc{FedStruct} through experimental results conducted on six 
datasets for semi-supervised node classification, showcasing performance close to the centralized approach across various scenarios, including different data partitioning methods, varying levels of label availability, and number of clients.
\end{abstract}

\section{Introduction}
Many real-world data are graph-structured, where nodes represent entities and edges capture their relationships.
For example, in anti-money laundering,  nodes symbolize accounts and edges correspond to confidential transactions.

Graph neural networks (GNNs) are specialized neural networks for graph-structured data, showing success in fields such as drug discovery,  social networks, or traffic flow modeling \citep{stokes:2020deep, fan2019graph, jiang:2022graph}.
In many practical applications, graph data is inherently distributed across multiple clients, i.e., the global graph encompasses multiple, non-overlapping subgraphs. For example, in anti-money laundering, local graphs represent internal transactional networks of financial institutions.

For this and many other applications, data sharing among clients is often restricted due to privacy, regulations, or proprietary restrictions.
Federated learning (FL)~\citep{mcmahan:2017} offers a way to utilize global graph-structured data while maintaining data privacy.
Various flavors of federated GNNs exist~\citep{liu:2022federated}.
This paper focuses on one of the most prevalent scenarios, \textbf{subgraph federated learning (SFL)}~\citep{zhang_subgraph:2021}, where clients hold disjoint subgraphs that together form a global graph with interconnections between the different local subgraphs. 
 \begin{wrapfigure}{r}{0.42\textwidth}
    \begin{center}
\definecolor{color1}{rgb}{1,0,0}
\definecolor{color2}{rgb}{0,1,0}
\definecolor{color3}{rgb}{0,0,1}
\definecolor{color4}{rgb}{1,1,0}
\definecolor{color5}{rgb}{1,0,1}
\definecolor{color6}{rgb}{0,1,1}
\definecolor{color7}{rgb}{0,0,0}
\definecolor{color8}{rgb}{0.5,0,0}
\definecolor{color9}{rgb}{0,0.5,0}
\definecolor{color10}{rgb}{0,0,0.5}
\definecolor{color11}{rgb}{0.5,0.5,0}
\definecolor{color12}{rgb}{0,0.5,0.5}
\definecolor{color13}{rgb}{0.5,0.5,0.5}
\definecolor{amber}{rgb}{1.0, 0.75, 0.0}

\begin{tikzpicture}
  \centering
  \begin{axis}[
        ybar, axis on top,
        title={},
        height=4.5cm, width=6.6cm,
        bar width=0.18cm,
        enlarge y limits=true,
        enlarge x limits=0.2,
        ymin=35, ymax=100,
        clip=false,
        label style={color=black,font=\tiny, yshift=-0.25cm},
        tick style={color=black,font=\tiny},
        y axis line style={opacity=100},
        tickwidth=0pt,
        legend image code/.code={
        \draw [#1] (0cm,-0.1cm) rectangle (0.15cm,0.1cm); },
        legend style={
            at={(0.515,0.98)},
            anchor=north west,
            font={\fontsize{4}{5}\selectfont\arraycolsep=2pt},
            align=left,
            legend cell align={left},
        },
        legend cell align={left},
        ylabel={Accuracy (\%)},
        every node near coord/.append style={font=\tiny}, 
        symbolic x coords={Cora, Chameleon,Amazon-Ratings},
        xtick=data,
        nodes near coords={
        \pgfmathprintnumber[precision=0]{\pgfplotspointmeta}
       }
    ]
       \addplot [draw=none,fill=red] coordinates {
      (Chameleon, 53.66) 
      (Cora,82.94)
      (Amazon-Ratings,41.42) };
      \addlegendentry{\textsc{Central GNN}  (Upper bound) }
    \addplot [draw=none, fill=color3] coordinates {
      (Chameleon, 52.60) 
      (Cora,79.27)
      (Amazon-Ratings, 40.97)
    }; \addlegendentry{\textsc{FedStruct} (Ours)}
    \addplot [draw=none, fill=amber] coordinates {
      (Chameleon, 48.77) 
      (Cora,82.90)
      (Amazon-Ratings, 40.78)
      };\addlegendentry{\textsc{FedGCN} (No privacy)}
    \addplot [draw=none, fill=pink] coordinates {
      (Chameleon, 36.32) 
      (Cora,66.33)
      (Amazon-Ratings, 35.85)
      };\addlegendentry{\textsc{FedSage+}}
      
   \addplot [draw=none, fill=color8] coordinates {
      (Chameleon, 36.80) 
      (Cora,66.00)
      (Amazon-Ratings, 35.96) 
      };\addlegendentry{\textsc{Fed SGD}}
    \pgfplotsset{every tick label/.append style={font=\tiny},}

  \end{axis}
  \end{tikzpicture}
    \end{center}
    \vspace{-3ex}
    \caption{
    Node classification accuracy. For all datasets, \textsc{FedStruct} exhibits  performance close to the centralized setting (\textsc{Central GNN}).    }
  \label{fig:motivation}
     \vspace{-4ex}
\end{wrapfigure}

Training a GNN involves aggregating feature representations of neighboring nodes to generate more expressive embeddings~\citep{kipf:2016semi, hamilton:2017inductive, velicovic:2018gat}. In SFL, 
a key challenge is facilitating training when some neighboring nodes, whose information is crucial for training, reside in other clients without sharing raw data across clients. 
This setup falls under the well-known \textbf{communication-privacy-accuracy trilemma} \citep{chen2020breaking} which necessitates solutions that preserve the privacy of local data and supports minimal communication overhead between clients and the central server, while achieving high accuracy for the end-to-end task.

Several approaches have been proposed to address this trilemma~\citep{zhang_subgraph:2021} (\textsc{FedSage+}), \citep{peng:2022fedni} (\textsc{FedNI}), \citep{chen:2021fedgraph, du:2022federated, lei:2023federated}  \citep{yao2024fedgcn} (\textsc{FedGCN}),   \citep{baek2023personalized} (\textsc{FedPub}).
However, except \citep{baek2023personalized}, these methods   rely on sharing node features or embeddings between clients, which raises significant \textbf{privacy concerns}.
Moreover, there are challenges related to \textbf{performance}, as the global model needs to accurately aggregate distributed knowledge, and \textbf{communication cost}, as transmitting features or embeddings between clients can be resource-intensive.

\textbf{To address privacy risks}, inspired by~\citet{cui2022positional}, we observe that the global graph structure alone can be  highly informative and is less sensitive than  node features.
This opens up the possibility of training robust classifiers without exposing raw data, \textbf{achieving high performance} that surpasses that of standard FL. 

\textbf{Our contribution.}  We address the problem of subgraph FL for node classification within a global graph containing multiple, non-overlapping, subgraphs belonging to different clients. More precisely, we consider the prevalent scenario where cross-subgraph interconnections are known (such as in transaction networks). However, we assume that neither the server nor the clients have knowledge of the global graph connections or the node features, i.e., the node features and intra-connections for each subgraph remain private. Our contributions include:
\vspace{-1.8ex}
\begin{list}{\labelitemi}{\leftmargin=1em}
\addtolength{\itemsep}{-0.4\baselineskip}
    \item     Building on the observation above, we propose a novel SFL framework, named \textsc{FedStruct},  
that exploits deep structural dependencies and tackles the key challenges of privacy, performance, and communication cost. To \textbf{safeguard privacy}, \textsc{FedStruct} decouples graph structure and node feature information:  Unlike existing approaches, \textsc{FedStruct} eliminates the need for sharing or generating node features or embeddings. Instead, it leverages global graph structure information to capture inter-node dependencies among clients.
\textsc{FedStruct} minimizes structural information exchanged between clients, limiting   privacy leakage and \textbf{reducing communication complexity} while achieving \textbf{utility close to a centralized approach}.
  
  \item We introduce  a method to generate task-dependent node structure embeddings, coined \textsc{Hop2Vec}, that adapts to the  graph and demonstrates competitive performance compared to task-agnostic methods such as \textsc{Node2Vec}~\citep{grover:2016node2vec}, graphlet counting~\citep{prvzulj:2007biological}, or the technique used in \textsc{FedStar}~\citep{tan:2023federated}.

   \item   By integrating a decoupled GCN and leveraging deep structural dependencies within the global graph, we effectively tackle the  semi-supervised learning scenario.
 Further, \textsc{FedStruct} relies on non-local neighbor-extension and inter-layer combination, techniques  commonly used to handle heterophilic graphs~\citep{zheng2022graph}.   To the best of our knowledge, \textbf{\textsc{FedStruct} is the first SFL framework capable of handling heterophilic graphs}.

    \item We validate the effectiveness of \textsc{FedStruct} on six datasets for semi-supervised node classification, showcasing excellent performance close to a centralized approach in multiple scenarios with different data partitioning methods, availability of training labels, and number of clients. Particularly, 
    \textsc{FedStruct} yields outstanding performance in scenarios with limited number of labeled training nodes.
   In heavily semi-supervised settings, it significantly outperforms both \textsc{FedSage+} and \textsc{FedPub}. Moreover, it achieves peformance close to that of \textsc{FedCGN} (which does not provide privacy and assumes server access to the global adjacency matrix) (see Figure~\ref{fig:motivation}).    The source code is publicly available in the \href{https://github.com/JavadAliakbari/FedStruct}{Github Link}.
 
\end{list}

\section{Related Work}
\label{sec:RelatedWork}

\textbf{SFL with no knowledge of cross-subgraph interconnections.}
Relevant works  include~\citep{zhang_subgraph:2021} (\textsc{FedSage+}),~\citep{peng:2022fedni} (\textsc{FedNI}),~\citep{zhang2024deep} (\textsc{FedDep}),~\citep{baek2023personalized} (\textsc{FedPub}), and \citep{zhang:2022hetero, liu:2023crosslink}. \textsc{FedSage+}, the first method for subgraph FL,  generates features for missing $1$-hop neighbors across subgraphs using a variational autoencoder. \textsc{FedNI} extends this by using a GAN to generate higher-quality node features.
The in-painting idea is further expanded to handle heterogeneous graphs in~\citep{zhang:2022hetero} and missing links in~\citep{liu:2023crosslink}. \textsc{FedDep}  builds on \textsc{FedSage+} by generating embeddings that capture deeper structural information (up to $k$-hop neighbors).  A limitation of in-painting methods, like \textsc{FedSage+} and \textsc{FedNI}, is the unpredictable quality of the generated features, which can either lead to poor models or expose sensitive information through confident predictions. \textsc{FedPub} 
avoids in-painting by employing personalized aggregation based on the functional similarity between client models.

\textbf{SFL with knowledge of cross-subgraph interconnections.}
Relevant works include~\citep{yao2024fedgcn} (\textsc{FedGCN}),~\citep{lei:2023federated} (\textsc{FedCog}), and~\citep{chen:2021fedgraph,du:2022federated}. \textsc{FedGCN} securely transmits cross-client neighbor information once before training, preventing the server from accessing local data but exposes aggregated node features to neighboring clients. This can risk data leakage since node features often contain meaningful patterns.
\textsc{FedCog} decouples local subgraphs into internal and border graphs, with graph convolution divided between them. It requires sharing intermediate embeddings, effectively performing a convolution on the global graph.
Other methods employ sampling techniques to address the challenges of SFL~\citep{chen:2021fedgraph, du:2022federated}.

In summary, all prior approaches  rely on sharing original or generated node features or embeddings (except \textsc{FedPub}) and assume homophily, which often does not hold in real-world settings.

\textbf{Structural information in GNNs.}  Recent studies have revealed limitations in the capacity of GNNs to capture the structure of the underlying graph. 
To overcome this obstacle, some works explicitly incorporate structural information into the learning, showing superior performance compared to standard GNNs~\citep{bouritsas2023improving,  tan:2023federated}. 
To increase GNN's representation power,  \citet{bouritsas2023improving} introduces structural information to the aggregation function. 
The authors demonstrate that the proposed architecture is strictly more expressive than standard GNNs.
\textsc{FedStar}~\citep{tan:2023federated} presents an FL scheme where clients  share explicit structural information to enhance local performance.
No work has leveraged explicit structural information in SFL.

\section{Preliminaries} \label{s:preliminaries}

\textbf{Graph notation.} We consider a graph denoted by
$\mcG=(\mcV, \mcE, \bX,\bY)$,  where
$\mcV$
is the set of $n$ nodes,
$\mcE$
the set of edges, 
$\bX \in \mbbR ^{n\times d}$
 the node feature matrix,  and $\bY \in \mbbR ^{n\times c}$ the label matrix. 
 For each node $v\in\mcV$, we denote by $\bx_v\in \mbbR^{d}$ its corresponding feature vector and by   $\by_v\in\mbbR^c$ its corresponding one-hot encoded label vector.  
 We consider a semi-supervised learning scenario and denote by $\tmcV\subseteq \mcV$ the set of nodes that possess labels. The labels of the remaining nodes are set to $\boldsymbol{0}$.
We also denote by $\mcN_{\mcG}(v) = \{u|(u,v) \in \mcE\}$ the  neighbors of node $v$. 

For a given matrix $\bM$, let $M_{uv}$ be its $(u,v)$-th element. The topological information of the whole graph is described by the adjacency matrix $\bA \in \R^{n\times n}$, where $A_{uv} = 1$ if $(u,v) \in \mcE$.
We define the diagonal matrix of node degrees as $\bD \in \mbbR^{n\times n}$, where $D_{uu} = \sum_{v}{A_{uv}}$.
Furthermore, we denote by  $\TildeA = \bA + \bI$  the self-loop adjacency matrix, and by $\HatA = \TildeD^{-1} \TildeA$  the normalized self-loop adjacency matrix, where $\tilde{d}_{uu} = \sum_{v\in \mcV}\tilde{A}_{uv}$.
We also define  $[ m ] = \{1, \dots, m\}$. 

\textbf{GNNs.} 
Modern GNNs use neighborhood aggregation 
followed by a learning transformation  
to iteratively update  node representations at each layer.
Let $\bh_v^{(l)}$ be the feature embedding of node $v$ at layer $l$, with $\bh_v^{(0)} = \bx_{v}$. 
At layer  $l \in [ L ]$ of the GNN we have $\bh_{\mcN_{\mcG}(v)}^{(l)} = \textsc{Agg}^{(l)}\left(\left\{\bh_{u}^{(l-1)},~ \forall u \in \mcN_{\mcG}(v) \right\}\right)$ and $\bh_{v} = \textsc{Upd}^{(l)}\left(   
         \bh_{v}^{(l-1)}\, ,  \,   \bh_{\mcN_{\mcG}(v)}^{(l)}  \, , \bTheta^{(l)}
     \right)$, 
where $\bTheta^{(l)}, \textsc{Agg}^{(l)}$ and $\textsc{Upd}^{(l)}$ denote the learnable weight matrix, aggregation function, and update function associated with layer $l$, respectively.

\textbf{Decoupled GCNs.} 
A significant limitation of GNNs is over-smoothing \citep{
liu:2020towards, dong:2021equivalence}, which results in performance degradation when multiple layers are applied.  
Over-smoothing is characterized by node embeddings becoming inseparable, making it challenging to distinguish between them.
This phenomenon is attributed to the interweaving of the propagation and update steps within each GNN layer \citep{
liu:2020towards, dong:2021equivalence}.
To address over-smoothing, a well-known solution (hereafter referred to as \emph{decoupled GCN}) is to decouple the propagation and update steps \citep{liu:2020towards, dong:2021equivalence}. 
Let $\boldsymbol{f}_{\btheta}(\cdot)$ be a \ac{mlp} network with learning parameters $\btheta$.
A decoupled GCN can be described with the  model
\begin{equation}
    \bH^{(L)} = \BarA \bF_{\btheta}(\bX)\,, \label{eq:DGCN_prop}
\end{equation}
where $\bF_{\btheta}(\bX) = ||(\boldsymbol{f}_{\btheta}(\bx_v)\transpose \quad \forall v \in \mcV)$,
with $||$ being the concatenation operation, and $\transpose$ denotes the transpose operation. 
$\BarA$ is the $L$-hop \emph{combined} adjacency matrix and it can be computed as
\begin{align}
\label{eq:BarA}
\BarA = \sum_{l=1}^{L}{\beta_l\HatA^l}\,. 
\end{align}
The elements of $\BarA$ reflect the proximity of two nodes in the graph, with  $\beta_l$ determining the contribution of each hop. Parameters $\lbrace\beta_l\rbrace_{l=1}^L$ can be set manually or learned during  training.

\section{System Model} \label{s:system_model}

We consider a scenario where data is structured according to a \emph{global graph} $\mcG=(\mcV, \mcE, \bX,\bY)$, which is  distributed among $K$ clients such that each client owns a smaller \emph{local} subgraph. We denote by  
$\mcG_i=(\mcV_i,\mcV_{i}^{*}, \mcE_i, \mcE_{i}^{*}, \bX_i,\bY_i)$ the subgraph of client $i$, where   $\mcV_i \subseteq \mcV$
is the set of $n_i$ nodes that reside in client $i$, referred to as \emph{internal nodes},  for which client $i$ knows their  features.
$\mcV_i^*$ is the set of nodes that do not reside in client $i$ but have at least one connection to nodes in $\mcV_i$.
We call these nodes \emph{external nodes}.
Importantly, client $i$ does not have access to the features of nodes in $\mcV_i^*$. 
Furthermore,  $\mcE_i$ represents the set of edges between nodes owned by client $i$ (intra-connections),
$\mcE_{i}^{*}$ the set of edges between nodes of client $i$ and nodes of other clients (interconnections),
$\bX_i \in \R ^{n_i\times d}$
 the node feature matrix, and
$\bY_i \in \R ^{n_i\times c}$ 
 the label matrix for the nodes within subgraph $\mcG_i$,  and we denote by $\tmcV_i$ the set of nodes that possess labels.
Similar to graph $\mcG$, 
$\mcN_{\mcG_i}(v)$, $\bA^{(i)}$, and $\bD^{(i)}$ denote the set of local neighbors, the local adjacency matrix, and the local diagonal matrix 
for subgraph $\mcG_i$.

\subsection{Federated Learning}
\label{ss:fl}

The FL problem  can be formalized as learning the model parameters that minimize the aggregated loss across clients,
\begin{equation}
\label{eq:optimization}
 \btheta^* = \argmin_{\btheta}\;\mcL(\btheta)\,,   
\end{equation}
 with 
\begin{align}
\label{eq:loss_function}
  \mcL(\btheta) = \frac{1}{|\tmcV|} \sum_{i=1}^K{\mcL_i(\btheta)},\quad\text{and}\quad 
\mcL_i(\btheta) = \sum_{v \in \tmcV_i}{\mathrm{CE}(\by_{v}, \hat{\by}_{v})}\,,
\end{align}
where $\mathrm{CE}$ is the cross-entropy loss function between the true label $\by_v$ and the predicted label $\hat{\by}_v$.

The model $\bm{\theta}$ is trained iteratively over multiple epochs.  
At each epoch, the clients compute the local gradients $\nabla_{\btheta} \mcL_i(\btheta)$ and send them to the central server.  The server updates the model through gradient descent,
\begin{equation}
\label{eq:average_gradient}
    \btheta \leftarrow \btheta - \lambda \nabla_{\btheta} \mcL(\btheta),\quad
    \nabla_{\btheta} \mcL(\btheta) = \frac{1}{|\tmcV|}\sum_{i=1}^K{\nabla_{\btheta} \mcL_i(\btheta)}\,,
\end{equation}
where $\lambda$ is the learning rate.

\section{\textsc{FedStruct}: Structure-Exploiting Subgraph Federated Learning}\label{sFed-SD-v2}

In this section, we introduce \textsc{FedStruct}, a novel SFL framework designed to leverage inter-node dependencies among clients while safeguarding privacy. 
The central concept of \textsc{FedStruct} is to harness explicit information about the global graph's underlying structure to improve node label prediction  while ensuring that neither the server nor the clients have access to the node features. More precisely, at each client $i \in [K]$, node prediction is performed for a node $v\in\mcV_i$ as
\begin{align}
     \hspace{-0.36cm}\hat{\by}_{v} 
    &= \softmax\Big(
    \bh_v + \bz_v 
    \Big)\,, \label{eq:DGCN_y_temp}
\end{align}

where $\bh_v$ is the \emph{node feature embedding} (NFE) and $\bz_{v}$ is the
\emph{node structure embedding} (NSE), which encodes structural information of the nodes.  
Also
$\bZ = ||\big((\bz_v\transpose, \quad \forall v \in \mcV\big)$, where
$\bZ$ is the \emph{structure embedding matrix} (SEM) containing
NSEs. 

The NFEs $\bh_v$ are computed locally at each client by a GNN model based on the local node features $\bX_i$ and local connection $\mathcal{E}_i$ as 
\begin{align}
\label{eq:hv}
    \bh_v &= \textsc{fGNN}_{\btheta_{\msf}}(\bX_i, \mcE_i, v)
    = f_{\btheta_{\msf}}(v)\,.
\end{align}

The NSEs $\bz_v$ are generated based on the structural information of the global graph and clients need to collaborate to obtain them. Their use  is anticipated to enhance node classification compared to the case of a classifier solely relying on NFEs. We describe how to generate NSEs in Sec.~\ref{sec:nsf}.
\begin{figure*}
    \centering
    \makebox[\textwidth]{\includegraphics[width=\textwidth]{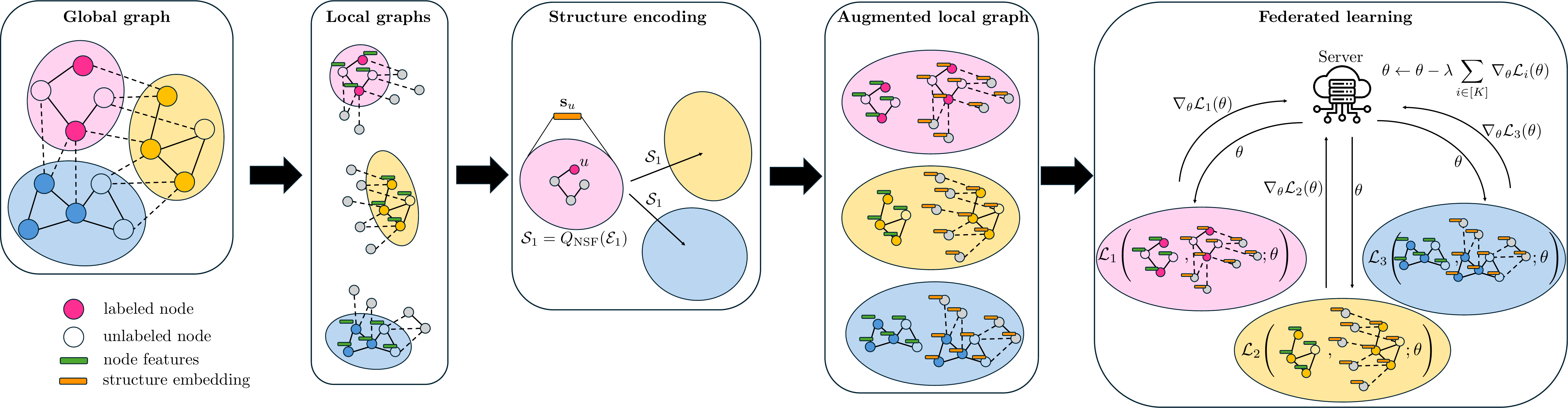}}
    \vspace{-4ex}
    \caption{General design of the \textsc{FedStruct} framework.
    \textbf{Global Graph:} underlying graph consisting of interconnected  subgraphs. 
    \textbf{Local graphs:}  clients' subgraphs augmented with  external nodes (without features or labels).
    \textbf{Structure encoding:} Generates node structure features for each node and shares them with other clients.
    \textbf{Augmented local graphs:} Generate node feature embeddings and node structure embeddings.
    \textbf{Federated learning:} Federated learning step exploiting node feature embeddings and node structure embeddings.
    }
    \label{fig:design}
    \vspace{-3ex}
\end{figure*}

The \textsc{FedStruct} framework is illustrated in Figure~\ref{fig:design} and described in Alg.~\ref{alg:local_training} in App.~\ref{app:fedstruct-v2_algorithm-H2V}.

\subsection{Node structure features, node structure embeddings, and node prediction} \label{sec:nsf}

\textbf{Node structure features.} To generate NSEs, our proposed method relies on \emph{node structure features} (NSFs), which encapsulate structural information about the nodes, such as node degree and local neighborhood connection patterns, as well as positional information such as distance to other nodes in the graph. 
For each node $v \in \mcV$, the corresponding NSF,  $\bs_v \in \mbbR ^{d_{\mss}}$, is a function of the set of edges $\mcE$, $\bs_v = \bQ_{\mathsf{NSF}}(v,\mcE)$.
We define the matrix containing all NSFs as  $\bS = ||(\bs_v\transpose, \quad \forall v \in \mcV)$.   Function $\bQ_{\mathsf{NSF}}$ can be defined through various node embedding algorithms, e.g., one-hot degree vector \citep{xu2018powerful, cui2022positional}, \ac{gdv}~\citep{prvzulj:2007biological}, \textsc{Node2Vec}~\citep{grover:2016node2vec}, or as in \textsc{FedStar}~\citep{tan:2023federated} (we will  refer to the  NSF technique in~\citep{tan:2023federated}  as \textsc{FedStar}).
We describe these different node embeddings in App.~\ref{app:NSFs}.
In Sec.~\ref{sec:hop2vec}, we introduce a method to generate task-dependent NSFs, which, in contrast to methods such as \textsc{GDV} and \textsc{Node2Vec}, does not require knowledge of the global graph.

\textbf{Node structure embeddings.} 
Obtaining the NSEs requires structural information from nodes in other clients, which, with a standard GNN would involve numerous communication rounds during the training.
By using a decoupled GCN, it is possible to precompute the necessary structural information for generating NSEs, thereby significantly reducing communication overhead.
Consequently, \textsc{FedStruct} leverages a decoupled GCN to define the NSEs.
Specifically, for client $i\in [K] $ and node $v \in \mcV_i$, let $\bg_{\btheta_{\mss}}(\bs_v)$ be an MLP network applied to NSF $\bs_v$.  Using Eq.~\ref{eq:DGCN_prop}, $\bZ$ is obtained as
\begin{align}
    \bZ = \text{sGNN}_{\btheta_{\mss}}(\bS, \mcE) =  \BarA\; \bG_{\btheta_{\mss}}(\bS) \,,\label{eq:DGCN_sGNN}
\end{align}
where 
$\bG_{\btheta_{\mss}}(\bS) = ||(\bg_{\btheta_{\mss}}(\bs_u)\transpose,\; \forall u \in \mcV)$, and $\BarA$ is
the $L$-hop combined adjacency matrices for graph $\mcG$ defined in Eq.~\ref{eq:BarA}. 
Note that computing
$\bZ$ requires the $L$-hop combined adjacency matrix $\BarA$, which is not locally available. However, since $\BarA$ remains static during the training, it can be precomputed  once  before training begins.
We address this in Sec.~\ref{sec:InformationSharing}.

\textbf{Node prediction.} Upon the creation of the NSEs, each client computes the label predictions through Eq.~\ref{eq:DGCN_y_temp}. For a node $v\in\tmcV_i$, expanding   
Eq.~\ref{eq:DGCN_sGNN} leads to
\begin{align}
    \label{eq:DGCN_h}
    \bz_v = \sum_{u \in \mcV}{\barA_{vu}\bg_{\btheta_{\mss}}(\bs_u)}\,,  
\end{align}
where $\barA_{vu}$ is the $(v,u)$-th element of $\BarA$.
Using Eq.~\ref{eq:DGCN_h}  in Eq.~\ref{eq:DGCN_y_temp} leads to
\begin{align}
     \hspace{-0.36cm}\hat{\by}_{v} 
    &= \softmax\Big(
    \sum_{u \in \mcV}{\barA_{vu}\bg_{\btheta_{\mss}}(\bs_u)}
    + f_{\btheta_{\msf}}(v)
    \Big)\,. \label{eq:DGCN_y}
\end{align}

\textbf{Optimization.} 
We optimize the learning parameters, $\btheta = (\btheta_{\msf}||\btheta_{\mss})$, by solving~Eqs.~\ref{eq:optimization}--\ref{eq:loss_function} with $\hat{\by}_v$  in Eq.~\ref{eq:DGCN_y} using stochastic gradient descent. The local gradients are provided in Prop.~\ref{th:trainingsFEd2}, App.~\ref{app:local_grad}.

\subsection{\textsc{Decoupled Graph Convolutional Network} and Filtering}
\label{ss:filtering}
Using a \textsc{Decoupled GCN} not only allows us to precompute $\BarA$, but also makes the GNN more adaptable to heterophilic graphs. To illustrate this, we see the decoupled GCN as a filter. Consider the eigendecomposition of 
the normalized self-loop adjacency
matrix $\HatA$ (defined in Sec.~\ref{s:preliminaries}),
\begin{equation}
\label{eq:eigendecomposition}
\HatA = \HatU \bm{\hat{\Lambda}} \HatU\transpose = \sum_{j=1}^n{\hat{\lambda}_j \bm{\hat{u}}_j \bm{\hat{u}}_j\transpose}\,,
\end{equation}
where \(\bm{\hat{u}}_j\) is the \(j\)-th eigenvector corresponding to the eigenvalue \(\hat{\lambda}_j\).
Using Eq.~\ref{eq:eigendecomposition} in Eq.~\ref{eq:BarA} we obtain
\[
\BarA = \HatU \left(\sum_{l=0}^{L} \beta_l \bm{\hat{\Lambda}}^l\right) \HatU\transpose = \HatU\bm{\bar{\Lambda}} \HatU\transpose,
\]
where \(\bm{\bar{\Lambda}}\) is a diagonal matrix with entries
\[
\bm{\bar{\Lambda}} = T(\bm{\hat{\Lambda}}) = \text{diag}\left(T( \hat{\lambda}_1), \dots, T( \hat{\lambda}_n)\right)\,,
\]
with $T(x) = \sum_{l=0}^{L}{\beta_l x^l}$.
Therefore, Eq.~\ref{eq:DGCN_sGNN} can be re-written as 
\begin{align}
\label{eq:Fourier}
    \bZ = \HatU\bigg(T(\bm{\hat{\Lambda}}) \bm{\Hat{S}}\bigg)\,,
\end{align}
where $\bm{\Hat{S}} = \HatU\transpose\bG_{\btheta_{\mss}}(\bS)$ is the graph Fourier transform of input signal $\bG_{\btheta_{\mss}}(\bS)$.  Eq.~\ref{eq:Fourier} corresponds to the filtering  of the input signal $\bG_{\btheta_{\mss}}(\bS)$ with the polynomial filter $T(\bm{\hat{\Lambda}})$. 
The parameters $\lbrace\beta_l\rbrace_{l=1}^L$ can be used to facilitate the learning process by filtering for the relevant signal.
For instance, a low-pass filter is suitable for homophilic graphs,  whereas higher frequencies may be required for heterophilic graphs to capture information from nodes multiple hops away.

In practice, calculating the eigenvalue decomposition of $\HatA$ in the decentralized setup is expensive as the computational complexity of the eigenvalue decomposition of an $n\times n$ matrix is $\mathcal{O}(n^3)$. Therefore, we use Eq.~\ref{eq:DGCN_sGNN} to calculate $\bZ$. 

\subsection{Pruning}\label{sec:pruning}

The exact computation of the $L$-hop combined adjacency matrix $\BarA$ scales with $\mathcal{O}(n^2)$, making it impractical for very large networks. 
However real-world graph-structured datasets are often sparse. Moreover, as shown in Sec.~\ref{ss:filtering}, $\BarA$ is a filtered version of $\HatA$ and is, therefore, of low rank, see Figure~\ref{fig:Cora_orig}.
Inspired by this, we introduce an estimation method that significantly reduces the computation complexity of calculating $\BarA$ while maintaining its general structure.

\begin{figure*}
    \centering
    \begin{subfigure}
    {0.45\textwidth}
        \centering
        \begin{tikzpicture}
            \node[draw=black, thick, line width=0.5mm, inner sep=0] {\includegraphics[height=3cm]{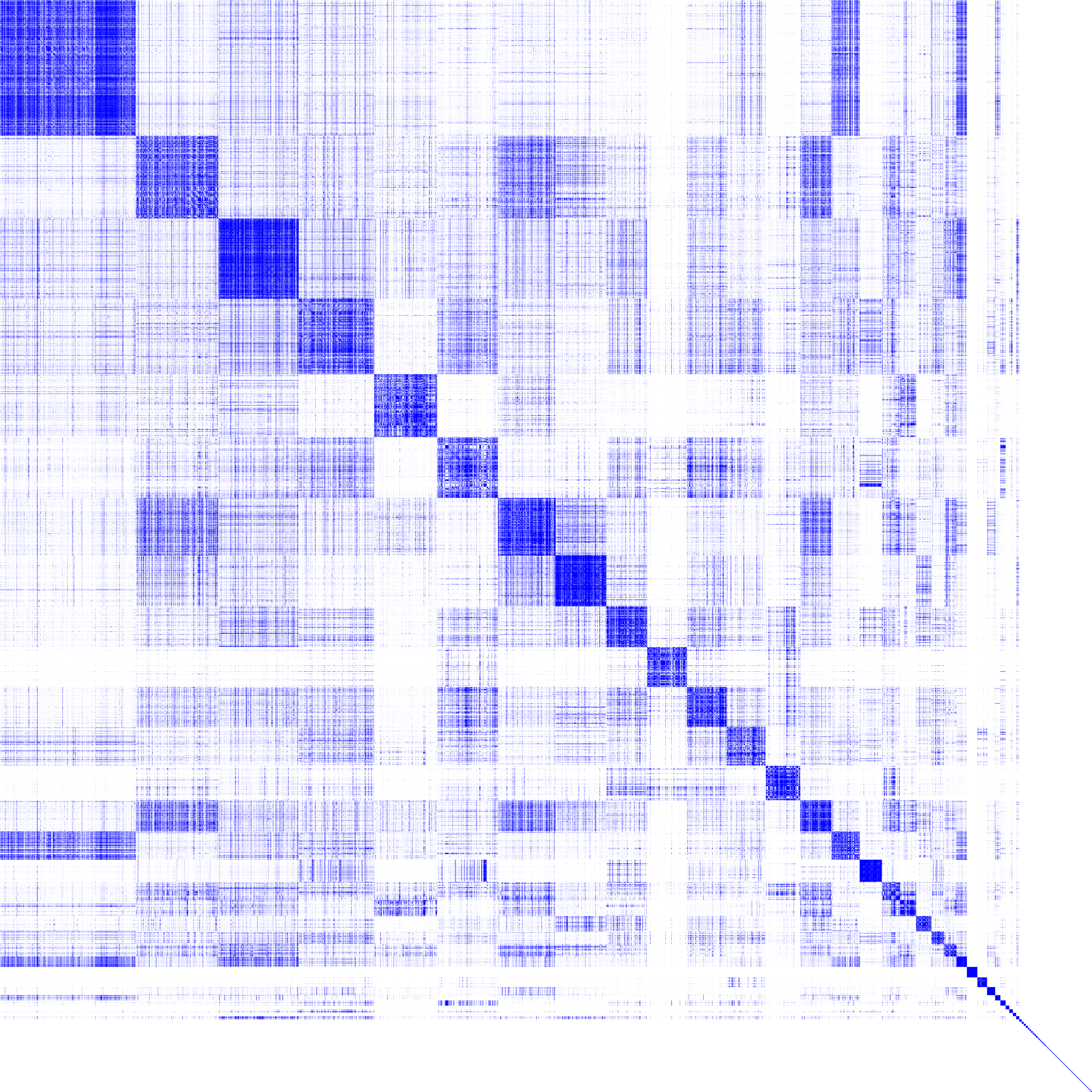}};
        \end{tikzpicture}
        \caption{The $L$-hop combined adjacency matrix of the Cora dataset}
        \label{fig:Cora_orig}
    \end{subfigure}
    \begin{subfigure}
    {0.45\textwidth}
        \centering
        \begin{tikzpicture}
            \node[draw=black, thick, line width=0.5mm, inner sep=0] {\includegraphics[height=3cm]{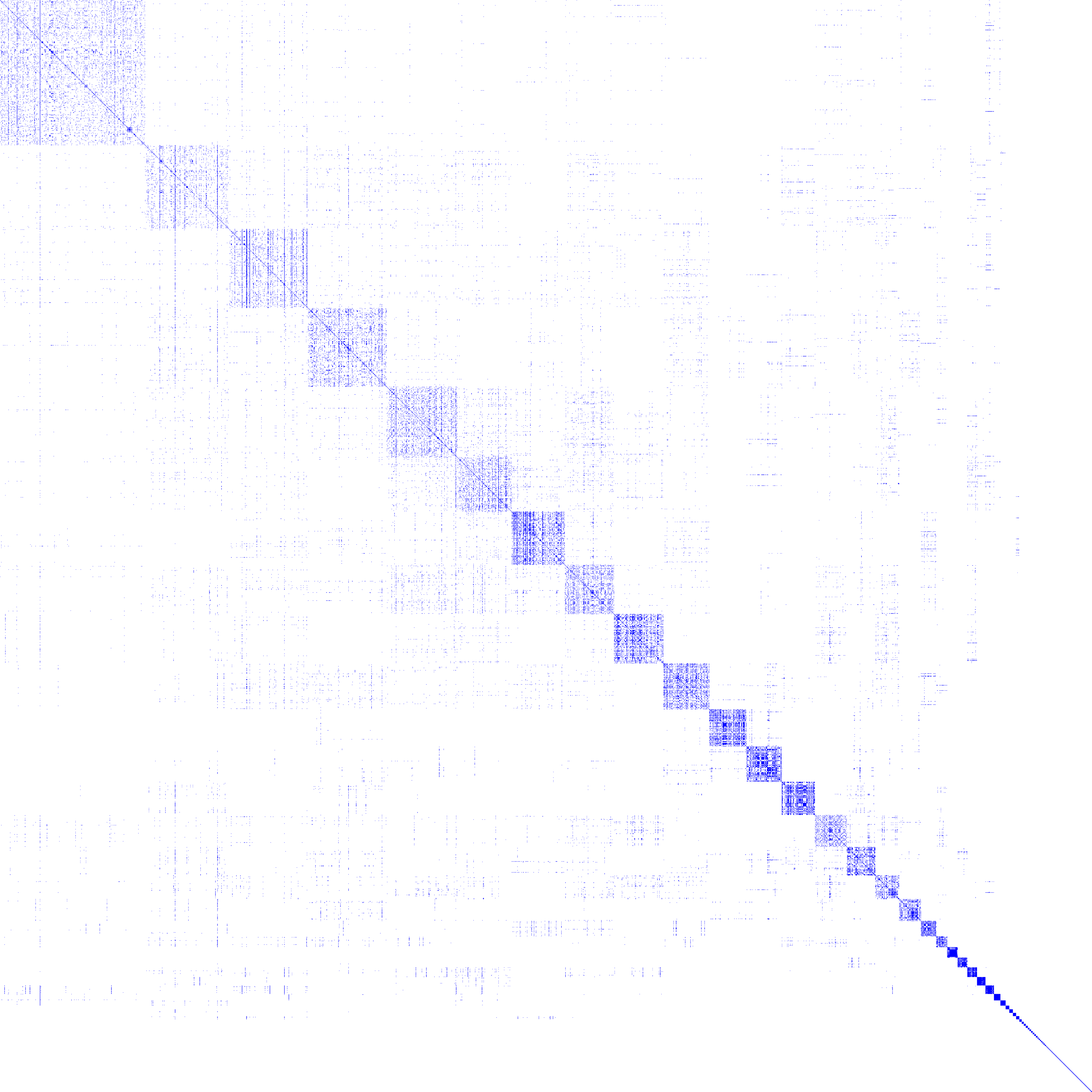}};

            \node[draw=brown, thick, line width=0.5mm, inner sep=0] at (3,0.5) {\includegraphics[width=2cm]{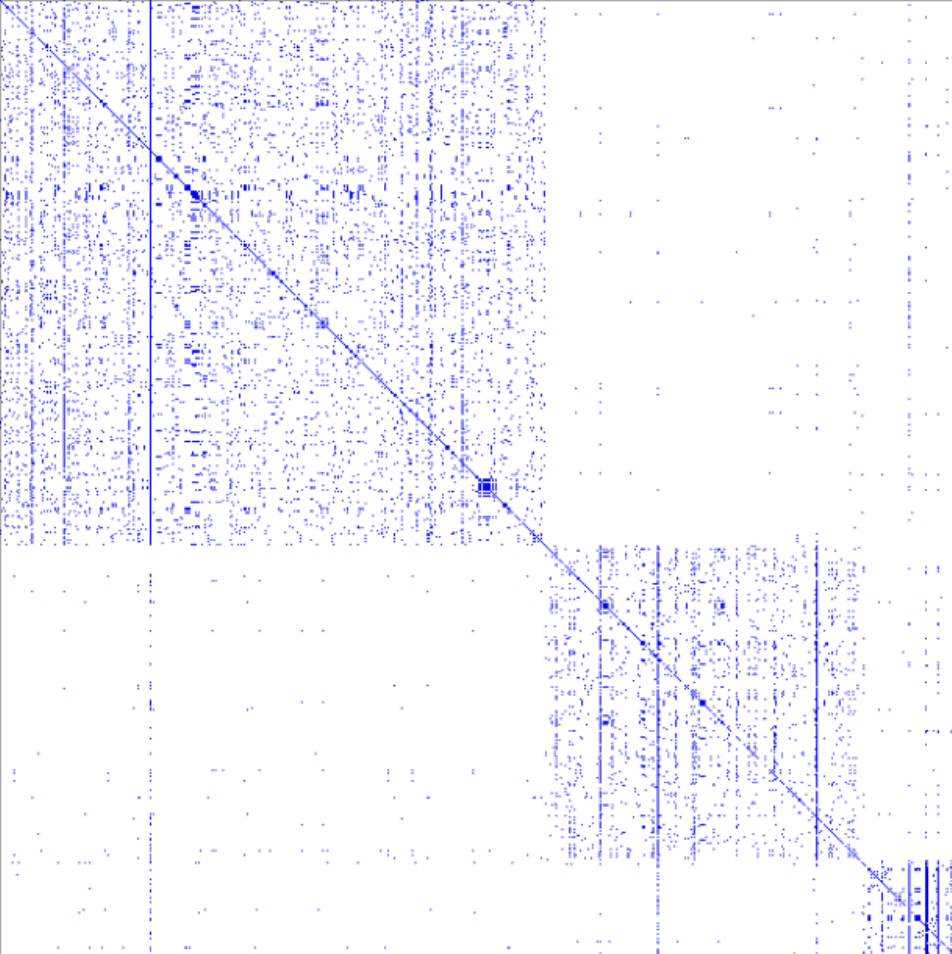}};
    
            \draw[brown, thick] (-1.15,1.15) -- (2.0,1.4); 
            \draw[brown, thick] (-1.15,1.15) circle [radius=0.5cm];

        \end{tikzpicture}
        
        \caption{The pruned $L$-hop combined adjacency matrix of the Cora dataset}
        \label{fig:Cora_prune}
    \end{subfigure}%
    \caption{Comparison between the $L$-hop combined adjacency matrix of the original dataset and the pruned version  with $p = 30$. 
    Although some details are lost in the pruning, the main community structure of the graph is preserved.}
    \label{fig:pruning}
    \vspace{-3ex}
\end{figure*}

The key idea is to apply a pruning technique when calculating $\HatA^l$ for $l \in [L_{\mss}]$.
Specifically, we choose an integer $p\ll n$ and retain only the top $p \cdot n$ values of matrix $\HatA^l$ at each iteration, reducing complexity from $L_{\mss}\cdot n^2$ to $L_{\mss}\cdot p\cdot n$. 
In Figure~\ref{fig:pruning}, we compare $\BarA$ to its pruned version ($p=30$) on Cora. 
As can be seen, pruning mostly removes off-diagonal connections (links between nodes in different communities), while preserving the main community structure of the graph.
As shown in Sec.~\ref{s:eval}, this does not negatively impact the learning process, as nodes in the same community tend to have similar information, allowing the model to maintain its performance.

\subsection{Structural Information Sharing}
\label{sec:InformationSharing}
For a given client $i$, training \textsc{FedStruct} involves computing $\nabla_{\btheta} \mcL_i(\btheta)$ and predicting node labels ${\hby_{v}, \forall v \in \tmcV_i}$. We denote by $\boldsymbol{\Bar{A}}^{ [i] } = ||(\BarA_{v,:}\transpose,\;\forall v \in \tmcV_i)$ client $i$'s local partition of $\BarA$, where $\BarA_{v,:} = ||(\barA_{vu},\; \forall u \in \mcV)$. By observing that Eq.~\ref{eq:DGCN_y} depends only on $\boldsymbol{\Bar{A}}^{ [i] }$ and $\bS$, we have the following proposition.

\begin{prop}\label{th:trainingsFEd3}
For client $i$, training \textsc{FedStruct}, i.e., computing $\{\hby_{v},\; \forall v \in \tmcV_i\}$ and $\nabla_{\btheta} \mcL_i(\btheta)$, requires only the local partition $\boldsymbol{\Bar{A}}^{ [i] }$ and $\bS$.
\end{prop}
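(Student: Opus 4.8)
\textbf{Proof plan for Proposition~\ref{th:trainingsFEd3}.} The statement is a direct corollary of the two preceding propositions, so the plan is simply to glue them together and observe that no new external inputs are introduced by either half of the training computation. First I would recall that training at client $i$ consists of exactly two operations: (i) forming the predictions $\{\hby_v, \forall v\in\tmcV_i\}$ via~\eqref{eq:DGCN_y}, and (ii) forming the local gradient $\nabla_{\btheta}\mcL_i(\btheta)$ via Prop.~\ref{th:trainingsFEd2}. By Prop.~\ref{th:trainingsFEd}, step (i) requires only the external inputs $\boldsymbol{\Bar{A}}^{[i]}$ and $\bS$ (everything else in~\eqref{eq:DGCN_y}, namely $\boldsymbol{\Bar{A}}^{(i)}$, $\bX_i$, and the parameters $\btheta$, is held locally by client $i$).

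Next I would inspect the gradient expressions in~\eqref{eq:NSF_structures} term by term. The $\btheta_{\mss}$-block sums over $v\in\tmcV_i$ and $u\in\mcV$, and its ingredients are: the scalars $\bar a_{vu}$ (rows of $\BarA$ indexed by $\tmcV_i$, i.e.\ a subset of the rows in $\boldsymbol{\Bar{A}}^{[i]}$); the Jacobians $\partial\bg_{\btheta_{\mss}}(\bs_u)/\partial\theta_{\mss,j}$, which depend on $\bS$ and the local parameters; and the residuals $\hby_v-\by_v$, where $\hby_v$ is already available from step (i) and $\by_v$ is a local label. The $\btheta_{\msf}$-block sums over $u\in\mcV_i$ only, using $\bar a^{(i)}_{vu}$ from $\boldsymbol{\Bar{A}}^{(i)}$ (local), the Jacobians of $\boldsymbol{f}_{\btheta_{\msf}}$ evaluated at local features $\bx_u$, and the same residuals. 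The $\btheta_{\msc}$-block is subsumed because, per the remark after~\eqref{eq:DGCN_y}, $\bTheta_{\msc}^{(\mss)}$ and $\bTheta_{\msc}^{(\msf)}$ have been absorbed into $\bg_{\btheta_{\mss}}$ and $\boldsymbol{f}_{\btheta_{\msf}}$. Hence the only quantities in~\eqref{eq:NSF_structures} not computable from client $i$'s own subgraph data are again precisely $\boldsymbol{\Bar{A}}^{[i]}$ and $\bS$.

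Combining the two observations, both training operations draw on the same two external objects and nothing more, which is exactly the claim; I would close by noting this is immediate from Prop.~\ref{th:trainingsFEd} and Prop.~\ref{th:trainingsFEd2}. I do not anticipate a genuine obstacle here — the proposition is essentially bookkeeping — but the one point that deserves explicit care is verifying that the gradient formula does not secretly need more of $\BarA$ than the rows already present in $\boldsymbol{\Bar{A}}^{[i]}$: the sum in the $\btheta_{\mss}$-block runs over $v\in\tmcV_i\subseteq\mcV_i$, so it touches only rows of $\BarA$ that $\boldsymbol{\Bar{A}}^{[i]}$ already supplies, and no columns beyond the full index set $\mcV$ that $\boldsymbol{\Bar{A}}^{[i]}\in\R^{n_i\times n}$ contains. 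Once that is spelled out, the proof is complete.
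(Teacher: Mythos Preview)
Your proposal is correct and matches the paper's approach exactly: the paper derives Prop.~\ref{th:trainingsFEd3} in one line by ``combining Prop.~\ref{th:trainingsFEd} and Prop.~\ref{th:trainingsFEd2},'' and your write-up simply unpacks that combination, with the useful extra care of verifying that the $\btheta_{\mss}$-gradient in~\eqref{eq:NSF_structures} only touches rows of $\BarA$ indexed by $\tmcV_i$, hence already contained in $\boldsymbol{\Bar{A}}^{[i]}$.
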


Prop.~\ref{th:trainingsFEd3} (proved in App.~\ref{app:gradient_v3}) states that clients do not require  knowledge of the global adjacency matrix to predict a node's label or compute gradients.
Instead, only the local partition of the global $L$-hop combined adjacency matrix,  $\boldsymbol{\Bar{A}}^{ [i] }$, is required to operate \textsc{FedStruct}.  Combined with the heavy pruning operation, which effectively prunes  $\boldsymbol{\Bar{A}}^{ [i] }$, this approach significantly limits the  structural information shared between clients. 
Optionally, homomorphic encryption may be used.
As a result, it is very difficult to reconstruct other clients’ adjacency matrices from the pruned $\boldsymbol{\Bar{A}}^{ [i] }$, enhancing privacy. Additionally, due to  graph isomorphism,  $\boldsymbol{\Bar{A}}^{ [i] }$ cannot be leveraged to uniquely reconstruct other clients’ adjacency matrices, further reinforcing privacy. Clients have to collaboratively calculate $\boldsymbol{\Bar{A}}^{ [i] }$. In App.~\ref{app:schemes}, we provide an algorithm and an example to obtain $\boldsymbol{\Bar{A}}^{ [i] }$ before  training begins.

\subsection{\textsc{Hop2Vec}: Task-Dependent Node Structure Embedding}
\label{sec:hop2vec}
Well-known NSF methods like \textsc{Node2Vec} and GDV require knowledge of the $L$-hop neighborhood of a node, are task agnostic, and hinge on a homophily assumption (see App.~\ref{app:NSFs}).
To circumvent these shortcomings, we propose \textsc{Hop2Vec}, a novel  method to generate task-dependent NSFs that captures structural information beyond direct neighbors without requiring knowledge of the $L$-hop neighborhood and is applicable to heterophilic graphs.

The idea behind \textsc{Hop2Vec} is to view NSFs as learnable features.
To this end, NSFs are randomly initialized and updated during the training of \textsc{FedStruct}.
More precisely, for fixed $\btheta$, the NSFs are optimized by solving
\begin{equation}
\label{eq:AugmentedLoss}
    \bS^* = \argmin_{\bS}{\mcL(\btheta, \bS)}\,.
    \footnote{In Eq.~\ref{eq:AugmentedLoss} we  made explicit that the loss function $\mcL(\btheta)$ (introduced in Eq.~\ref{eq:optimization} and Eq.~\ref{eq:loss_function}) 
is contingent on $\bS$ through the estimated labels $\hat{\by}_{v}$ (see Eq.~\ref{eq:DGCN_y}).}
\end{equation}
The optimization is carried out through gradient descent,
\begin{equation}
\label{eq:rNSF_update_rule}
    \bS \leftarrow \bS - \lambda_{\mss} \nabla_{\bS} \mcL(\btheta, \bS)\,,
\end{equation}
where $\lambda_{\mss}$ denotes the learning rate.

Using \textsc{Hop2Vec}, we note that the learning of $\bg_{\btheta_{\mss}}(\bs_u)$ (see Sec.~\ref{sec:nsf}, e.g., Eq.~\ref{eq:DGCN_sGNN}) is equivalent to the learning of $\bs_u$, i.e., the purpose of passing the NSFs through the sGNN is to generate task-dependent NSEs.
Since the NSFs $\{ \bs_u \}_{u\in\mathcal{V}}$ are subject to optimization, we can integrate the learning parameters of $\bg_{\btheta_{\mss}}(\bs_u)$ into $\bs_u$ and rewrite~Eq.~\ref{eq:DGCN_y} as
\begin{equation}
    \hat{\by}_{v} = \softmax\Big(
    \sum_{u \in \mcV}{\barA_{vu}\bs_u}
    + f_{\btheta_{\msf}}(v)
    \Big)\,.
    \label{eq:rNSF_y}
\end{equation}
To operate \textsc{FedStruct}, as in Sec.~\ref{sec:InformationSharing}, client $i$ only need access to $\boldsymbol{\Bar{A}}^{ [i] }$ as shown in~{Prop.~\ref{th:v2_grad_rSNF}.
\begin{prop}
\label{th:v2_grad_rSNF}
Let $\mcL_i(\btheta, \bS)$ in~Eq.~\ref{eq:loss_function} use the cross-entropy loss and let $\bS = ||(\bs_q\transpose, \quad \forall q \in \mcV)$ represent the matrix containing NSFs generated by \textsc{Hop2Vec}. 
Additionally,  let $\hby_v$ be as in~Eq.~\ref{eq:rNSF_y}. 
The gradient $\nabla_{\bS} \mcL_i(\btheta, \bS)$
is given by
\begin{align}
\label{eq:rNSF_structures}
    \nabla_{\bS} \mcL_i(\btheta, \bS) =  \sum_{v \in \tmcV_i}{   \BarA_{v,:} (\hat{\by}_v - \by_v)\transpose  }\,.
\end{align}
\end{prop}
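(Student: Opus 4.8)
The plan is to compute the gradient of $\mcL_i(\btheta,\bS)$ with respect to each entry of $\bS$ directly from the definition and chain rule, then reassemble the result into the compact matrix form stated. First I would recall that $\mcL_i(\btheta,\bS) = \sum_{v\in\tmcV_i}\mathrm{CE}(\by_v,\hby_v)$ and that, by \eqref{eq:rNSF_y}, each predicted label $\hby_v$ depends on $\bS$ only through the pre-softmax logit vector $\bz_v := \sum_{u\in\mcV}\bar a_{vu}\bs_u + \sum_{u\in\mcV_i}\bar a^{(i)}_{vu}\boldsymbol f_{\btheta_\msf}(\bx_u)$. The key elementary fact is the well-known identity for the gradient of the cross-entropy composed with softmax: $\nabla_{\bz_v}\mathrm{CE}(\by_v,\softmax(\bz_v)) = \hby_v-\by_v$. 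I would state this (it is the same identity already used implicitly in the proof of Prop.~\ref{th:trainingsFEd2}, so I can cite App.~\ref{app:gradient_v2}).

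Next I would apply the chain rule through the linear map $\bS\mapsto\bz_v$. Fixing a node $q\in\mcV$ and a coordinate, $\partial \bz_v/\partial \bs_q = \bar a_{vq}\bI$ (only the term $u=q$ in the first sum contributes, and the fGNN term does not depend on $\bS$). Hence, for a single summand $v$,
\begin{align*}
\nabla_{\bs_q}\mathrm{CE}(\by_v,\hby_v) = \bar a_{vq}\,(\hby_v-\by_v).
\end{align*}
Summing over $v\in\tmcV_i$ gives the $q$-th row block $\nabla_{\bs_q}\mcL_i(\btheta,\bS) = \sum_{v\in\tmcV_i}\bar a_{vq}(\hby_v-\by_v)$. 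Stacking these row blocks over all $q\in\mcV$ and recognizing that $\Bara_v = ||(\bar a_{vu},\ \forall u\in\mcV)$ is exactly the vector of coefficients $(\bar a_{vq})_{q\in\mcV}$, the full gradient matrix is the sum of outer products $\sum_{v\in\tmcV_i}\Bara_v(\hby_v-\by_v)\transpose$, which is \eqref{eq:rNSF_structures}.

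I do not expect a genuine obstacle here — the statement is essentially a bookkeeping consequence of the softmax–cross-entropy gradient identity plus linearity of the logit in $\bS$. The only points requiring care are (i) being explicit that rows of $\bS$ indexed by nodes \emph{not} appearing in any $\bar a_{vu}$ with $v\in\tmcV_i$ receive zero gradient contribution (consistent with the formula, since those $\bar a_{vq}$ vanish), and (ii) matching index/transpose conventions so that $\Bara_v(\hby_v-\by_v)\transpose$ has the correct shape $n\times c$ agreeing with $\bS\in\mbbR^{n\times d_\mss}$ when $d_\mss=c$ after absorbing $\bTheta_\msc^{(\mss)}$ and $\bg_{\btheta_\mss}$ into $\bs_u$ as done just before \eqref{eq:rNSF_y}. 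I would close by noting this mirrors the structure-parameter gradient in Prop.~\ref{th:trainingsFEd2} with $\partial\bg_{\btheta_\mss}(\bs_u)/\partial\theta_{\mss,j}$ replaced by $\partial\bs_u/\partial\bs_q = \delta_{uq}\bI$, so the sum over $u$ collapses.
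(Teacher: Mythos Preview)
Your proposal is correct and follows essentially the same approach as the paper's own proof: both apply the softmax--cross-entropy gradient identity (Lemma~\ref{lem:gradient}) to obtain $\nabla_{\bz_v}\mcL_i=\hby_v-\by_v$, compute $\partial\bz_v/\partial\bs_q=\bar a_{vq}\bI$ by differentiating the linear logit, and then stack the per-$q$ gradients into the outer-product form. Your additional remarks on dimensions and the collapse of the $u$-sum via $\partial\bs_u/\partial\bs_q=\delta_{uq}\bI$ are accurate and mirror the paper's derivation.
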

\vspace{-3ex}
\begin{proof}
See App.~\ref{app:gradient_v4}.
\end{proof}
The \textsc{FedStruct} framework with \textsc{Hop2Vec} is described in Alg.~\ref{alg:fedstruct-v2_algorithm-H2V} in App.~ \ref{app:fedstruct-v2_algorithm-H2V}.

\vspace{-1ex}
\subsection{Communication Complexity}
\label{sec:complexity}

In~\cref{tab:complexity}, we present the communication complexity of \textsc{FedStruct} alongside that of  \textsc{FedSage+}. 
We divide the communication complexity into two parts: a pre-training component, accounting for events taking place before the training is initiated, and an online component accounting for the actual training phase. 
In the table, parameters $E$, $K$, and $n$ represent the number of training rounds, clients, and nodes in the graph, respectively. 
Parameter $d$ is the node feature size (all feature dimensions are assumed to be equal to $d$), and  $|\btheta|$ is the number of model parameters (all model parameters are assumed to be of the same order). 

As seen in~\cref{tab:complexity}, the online complexity of \textsc{FedStruct} is significantly lower than that of \textsc{FedSage+}, which is on the order of $n$. 
The online complexity of \textsc{FedStruct} with \textsc{Hop2Vec}, however, is also on the order of $n$. 
The pre-training complexity of \textsc{FedStruct} is on the order of $L_{\mss}\cdot p\cdot n$.  
Overall the complexity of \textsc{FedStruct}  is the same of \textsc{FedSage+}, i.e., on the order of $n$.

\begin{table*} 
\vspace{-5ex}
\caption{Communication Complexity of \textsc{FedStruct}.}
\label{tab:complexity}
\vspace{-3ex}
\fontsize{8}{10}\selectfont
\begin{center}
\begin{sc}
\setlength\tabcolsep{0pt}
\begin{tabular*}{\textwidth}{@{\extracolsep{\fill}} l *{2}{c} }
\toprule
Algorithm                                        & pre-training                        &training            \\
\midrule
\textsc{FedStruct}
& $\mathcal{O}(K \cdot d \cdot n + L_{\mss} \cdot K \cdot p \cdot n)$  & $\mathcal{O}(E\cdot K\cdot |\btheta|)$      \\
\textsc{FedStruct} + \textsc{Hop2Vec}     &$\mathcal{O}(L_{\mss} \cdot K \cdot p \cdot n)                              $& $\mathcal{O}(E\cdot K\cdot |\btheta| + E\cdot K \cdot d \cdot n)$       \\
\textsc{FedSage+}                            &         0       &  $\mathcal{O}(E \cdot K^2\cdot |\btheta|+E \cdot K\cdot d \cdot n)$\\  \bottomrule
\end{tabular*}
\end{sc}
\end{center}
\vspace{-4ex}
\end{table*}

\section{Evaluation} \label{s:eval}

To demonstrate the performance and versatility of \textsc{FedStruct}, we conduct experiments on six datasets pertaining to node classification under scenarios with varying i) amounts of labeled training nodes,  ii) number of clients, iii) number of layers, and iv) data partitioning methods .
The interconnections between clients heavily depend on the latter.
The datasets considered are: Cora~\citep{sen2008collective}, Citeseer~\citep{sen2008collective}, Pubmed~\citep{namata2012query}, Chameleon~\citep{pei:2020geom}, Amazon Photo~\citep{shchur2018pitfalls}, and Amazon Ratings~\citep{platonov2023critical}.
Statistics of the different datasets are provided in App.~\ref{app:experiments}.

\textbf{Experimental setting.} 
We focus on a strongly semi-supervised setting where data is split into training, validation, and test sets containing $10\%$, $10\%$, and $80\%$ of the nodes, respectively. 
We artificially partition the datasets into interconnected subgraphs that are then allocated to the clients.
Here, we consider a random partitioning, which assigns nodes to subgraphs uniformly at random.
This partitioning constitutes a very challenging setting 
as the number of interconnections is high and, hence, the learning scheme must exploit such connections, and  is relevant in, e.g., transaction networks,  where accounts do not necessarily have a preference to interact with nodes in the same subgraph.  
In App.~\ref{app:MoreResults}, we also provide details and results for two other partitionings using the Louvain algorithm, as in~\citep{zhang_subgraph:2021}, and the K-means algorithm~\citep{lei:2023federated}.

To provide upper and lower benchmarks, all experiments are also conducted for a centralized setting utilizing the global graph and a localized setting using only the local subgraphs.
In App.~\ref{app:MoreResults} we also give results for an MLP approach to highlight the importance of utilizing the spatial structure within the data.
For the GNN, we rely on \textsc{GraphSage}~\citep{hamilton:2017inductive} with two or three layers, depending on the dataset (decoupled GCN performs similarly as shown in Table~\ref{tab:parti} in App.~\ref{app:experiments}).
We further compare \textsc{FedStruct} to vanilla FL (\textsc{FedSGD GNN}), \textsc{FedPub}, and \textsc{FedSage+}~\citep{zhang_subgraph:2021}, which do not exploit cross-clients interconnections,  and \textsc{FedGCN}~\citep{yao2024fedgcn} as an SFL method that exploits interconnection.
For \textsc{FedCGN}, it is important to note that the server must access the global adjacency matrix to aggregate encrypted node features (shared by the clients) and forward the result. 
Moreover, the homomorphic encryption only protects against the server, not against the clients, which have access to aggregated node features of other clients, constituting a breach of privacy as recently shown in~\citep{ngo2024secure}.

For
\textsc{FedStruct}, we consider three 
methods to create NSFs: one-hot degree vector (\textsc{Deg}), \textsc{FedStar} (\textsc{Fed$\star$}), and the task-dependent approach proposed in Sec.~\ref{sec:nsf} \textsc{Hop2Vec} (\textsc{H2V}), see App.~\ref{app:NSFs} for details.
For \textsc{H2V},  we also provide the performance of the non-pruned version (\textsc{FedStruct} (\textsc{H2V})-F), for comparison.
Throughout, we set the filter parameters (see Sec.~\ref{ss:filtering}) $\beta_{L_{\mathrm{s}}} = 1$ and $\beta_l = 0$ otherwise, effectively filtering out links between dissimilar nodes. While these parameters could be optimized for specific graphs, we chose this setup to demonstrate the robustness of \textsc{FedStruct}.

\begin{table*}
\vspace{-2ex}
\caption{Node classification accuracy with random partitioning. Nodes are split into train-val-test as 10\%-10\%-80\%. For each result, the mean and standard deviation are shown for $10$ independent runs. The top performance is highlighted in black bold, and the second-best in blue bold. Edge homophily ratio ($h$) is given in brackets. }
\label{tab:results1}
\vspace{-3ex}
\fontsize{6}{9}\selectfont
\begin{center}
\begin{sc}
\setlength\tabcolsep{2pt}
\begin{tabular*}{\textwidth}{@{\extracolsep{\fill}} l|ccc|ccc|ccc }
\toprule
\multicolumn{1}{l}{} & \multicolumn{3}{c}{\textbf{Cora} ($h=0.81$)} & \multicolumn{3}{c}{\textbf{Citeseer} ($h=0.74$)}  & \multicolumn{3}{c}{\textbf{Pubmed} ($h=0.80$)} \\
\cmidrule(rl){2-4} \cmidrule(rl){5-7} \cmidrule(rl){8-10}
\multicolumn{1}{l}{Central GNN}     & \multicolumn{3}{c}{82.94$\pm$ 1.26}          & \multicolumn{3}{c}{69.37$\pm$ 1.07}              & \multicolumn{3}{c}{85.12$\pm$ 1.15}\\
                           & {5 clients}    & {10 clients}   & {20 clients}   & {5 clients}    & {10 clients}   & {20 clients}   & {5 clients}    & {10 clients}   & {20 clients}   \\
\midrule
\textsc{FedSGD} GNN  
& 67.55$\pm$ 1.05 & 66.00$\pm$ 1.51 & 64.47$\pm$ 1.26 & 64.59$\pm$ 0.80 & 63.38$\pm$ 0.76 & 63.91$\pm$ 1.09 & 84.74$\pm$ 0.36 & 84.66$\pm$ 0.22 & 84.55$\pm$ 0.52  \\ 
\textsc{FedSage+} 
& 68.03$\pm$ 0.87 & 66.33$\pm$ 1.69 & 64.64$\pm$ 1.72 & 64.38$\pm$ 0.91 & 63.93$\pm$ 0.97 & 63.63$\pm$ 1.24 & 84.61$\pm$ 0.35 & 84.64$\pm$ 0.37 & 84.45$\pm$ 0.37  \\ 
\textsc{FedPub} 
& 67.59$\pm$ 1.16 & 61.82$\pm$ 1.84 & 48.47$\pm$ 2.66 & 64.50$\pm$ 0.95 & 62.91$\pm$ 0.76 & 52.08$\pm$ 3.78 & 82.16$\pm$ 0.49 & 82.39$\pm$ 0.41 & 82.30$\pm$ 0.44  \\ 
\textsc{FedGCN}-2hop \footnotemark[2]
& \textbf{82.21$\pm$ 0.95} & \textbf{82.90$\pm$ 0.95} & \textbf{82.39$\pm$ 1.26} & \textbf{70.20$\pm$ 1.13} & \textbf{70.49$\pm$ 1.03} & \textbf{70.42$\pm$ 0.81} & \textbf{85.46$\pm$ 0.69} & \color{blue}{\textbf{85.73$\pm$ 0.77}} & \color{blue}{\textbf{85.90$\pm$ 0.80}}  \\ 
\midrule
\textsc{FedStruct} (\textsc{Deg})
& 72.24$\pm$ 0.81 & 69.89$\pm$ 1.85 & 66.01$\pm$ 1.82 & 65.35$\pm$ 1.05 & 63.54$\pm$ 0.64 & 62.34$\pm$ 1.00 & 84.03$\pm$ 0.42 & 84.01$\pm$ 0.38 & 83.64$\pm$ 0.36  \\ 
\textsc{FedStruct} (\textsc{Fed$\star$})
& 72.25$\pm$ 0.90 & 69.61$\pm$ 1.87 & 66.13$\pm$ 1.54 & 65.54$\pm$ 1.07 & 63.38$\pm$ 0.90 & 61.97$\pm$ 0.89 & 83.98$\pm$ 0.38 & 84.02$\pm$ 0.35 & 83.69$\pm$ 0.47  \\ 
\textsc{FedStruct} (\textsc{H2V})
& 79.34$\pm$ 0.85 & 79.27$\pm$ 0.90 & 78.47$\pm$ 1.26 & \color{blue}{\textbf{66.20$\pm$ 0.83}} & 65.43$\pm$ 0.98 & 64.33$\pm$ 0.92 & 84.67$\pm$ 0.34 & 85.02$\pm$ 0.43 & 85.24$\pm$ 0.40  \\ 
\textsc{FedStruct} (\textsc{H2V})-F
& \color{blue}{\textbf{79.99$\pm$ 1.04}} & \color{blue}{\textbf{79.88$\pm$ 0.92}} & \color{blue}{\textbf{78.80$\pm$ 1.48}} & 66.15$\pm$ 1.09 & \color{blue}{\textbf{65.93$\pm$ 0.81}} & \color{blue}{\textbf{64.96$\pm$ 0.79}} & \color{blue}{\textbf{85.24$\pm$ 0.44}} & \textbf{85.79$\pm$ 0.60} & \textbf{86.09$\pm$ 0.42}  \\ 
\midrule
Local GNN
& 49.36$\pm$ 1.56 & 39.24$\pm$ 1.64 & 31.02$\pm$ 1.63 & 49.00$\pm$ 1.52 & 39.88$\pm$ 1.62 & 32.72$\pm$ 1.12 & 79.00$\pm$ 0.67 & 75.27$\pm$ 0.59 & 70.74$\pm$ 0.43  \\ 

\bottomrule 
\end{tabular*}

\begin{tabular*}{\textwidth}{@{\extracolsep{\fill}} l|ccc|ccc|ccc }
\toprule
\multicolumn{1}{c}{}    & \multicolumn{3}{c}{\textbf{Chameleon} ($h=0.23$)}           & \multicolumn{3}{c}{\textbf{Amazon Photo} ($h=0.82$)}        & \multicolumn{3}{c}{\textbf{Amazon Ratings}  
 ($h=0.38$)}      \\
\cmidrule(rl){2-4} \cmidrule(rl){5-7}  \cmidrule(rl){8-10}
\multicolumn{1}{l}{Central GNN} 
& \multicolumn{3}{c}{54.38$\pm$ 1.96}          
& \multicolumn{3}{c}{94.10$\pm$ 0.30}              
& \multicolumn{3}{c}{41.42$\pm$ 0.80}     \\
                        & {5 clients}    & {10 clients}   & {20 clients}   & {5 clients}    & {10 clients}   & {20 clients}   & {5 clients}    & {10 clients}   & {20 clients}   \\
\midrule
\textsc{FedSGD} GNN  
& 40.23$\pm$ 1.89 & 36.80$\pm$ 1.70 & 34.62$\pm$ 1.47 & 92.17$\pm$ 0.37 & \color{blue}{\textbf{91.55$\pm$ 0.34}} & 90.72$\pm$ 0.37 & 37.06$\pm$ 0.69 & 35.96$\pm$ 0.46 & 36.31$\pm$ 0.54  \\ 
\textsc{FedSage+} 
& 40.05$\pm$ 2.11 & 36.32$\pm$ 1.59 & 34.68$\pm$ 1.69 & \color{blue}{\textbf{92.18$\pm$ 0.33}} & 91.33$\pm$ 0.47 & 90.65$\pm$ 0.59 & 37.01$\pm$ 0.75 & 35.85$\pm$ 0.39 & 36.17$\pm$ 0.53  \\ 
\textsc{FedPub} 
& 40.15$\pm$ 1.44 & 33.31$\pm$ 1.37 & 28.34$\pm$ 1.13 & 90.19$\pm$ 0.38 & 88.05$\pm$ 0.68 & 85.53$\pm$ 0.74 & 36.55$\pm$ 0.63 & 35.72$\pm$ 0.60 & 36.18$\pm$ 0.93  \\ 
\textsc{FedGCN}-2hop\footnotemark[2]
& 50.35$\pm$ 1.92 & 48.77$\pm$ 1.73 & 49.35$\pm$ 2.18 & \textbf{93.76$\pm$ 0.23} & \textbf{93.72$\pm$ 0.41} & \textbf{93.72$\pm$ 0.54} & 40.87$\pm$ 0.37 & 40.78$\pm$ 0.56 & 40.35$\pm$ 0.59  \\ 
\midrule
\textsc{FedStruct} (\textsc{Deg})
& 44.33$\pm$ 1.56 & 41.82$\pm$ 1.78 & 40.13$\pm$ 1.85 & 90.73$\pm$ 0.32 & 89.72$\pm$ 0.43 & 89.42$\pm$ 0.48 & 39.35$\pm$ 0.47 & 38.67$\pm$ 0.66 & 38.31$\pm$ 0.54  \\ 
\textsc{FedStruct} (\textsc{Fed$\star$})
& 44.71$\pm$ 1.44 & 41.89$\pm$ 1.67 & 40.06$\pm$ 2.02 & 90.75$\pm$ 0.36 & 89.78$\pm$ 0.38 & 89.14$\pm$ 0.38 & 39.31$\pm$ 0.51 & 38.65$\pm$ 0.44 & 38.36$\pm$ 0.58  \\ 
\textsc{FedStruct} (\textsc{H2V})
& \textbf{53.33$\pm$ 2.25} & \color{blue}{\textbf{52.60$\pm$ 1.25}} & \color{blue}{\textbf{51.85$\pm$ 1.26}} & 91.14$\pm$ 0.46 & 90.93$\pm$ 0.27 & 91.25$\pm$ 0.38 & \textbf{41.14$\pm$ 0.43} & \color{blue}{\textbf{40.97$\pm$ 0.64}} & \color{blue}{\textbf{40.55$\pm$ 0.25}}  \\ 
\textsc{FedStruct} (\textsc{H2V})-F  
& \color{blue}{\textbf{53.20$\pm$ 1.97}} & \textbf{53.09$\pm$ 1.85} & \textbf{51.89$\pm$ 1.09} & 90.57$\pm$ 0.54 & 90.74$\pm$ 0.29 & \color{blue}{\textbf{91.29$\pm$ 0.58}} & \color{blue}{\textbf{41.13$\pm$ 0.37}} & \textbf{41.07$\pm$ 0.56} & \textbf{40.83$\pm$ 0.36}  \\ 
\midrule
Local GNN
& 35.63$\pm$ 1.79 & 29.60$\pm$ 1.25 & 23.70$\pm$ 0.66 & 87.91$\pm$ 0.65 & 77.12$\pm$ 1.75 & 60.44$\pm$ 0.98 & 34.32$\pm$ 0.46 & 32.80$\pm$ 0.43 & 31.68$\pm$ 0.52  \\ 
\bottomrule
\end{tabular*}
\end{sc}
\end{center}
{\footnotemark[2]\textbf{\textsc{FedGCN} lacks privacy} as the server must have access to  the global adjacency matrix  and aggregated node features and 2-hop structures are shared between clients, which also constitutes a privacy breach as shown in \citep{ngo2024secure}. Moreover, the official implementation overlooks isolated external neighbors removal, potentially enhancing prediction performance above its actual capabilities. }
\vspace{-5ex}
\end{table*}

\textbf{Overall performance.} 
In Table~\ref{tab:results1}, we report the node classification accuracy, after model convergence, on the different datasets for a random partitioning over $10$ independent runs.  The accuracy difference between central GNN and local GNN provides insights into the potential gains by utilizing FL.
For example, on the Cora and Amazon Ratings datasets with $10$ clients, the gap is $43.78\%$ and $9.05\%$, respectively.
For these datasets, \textsc{FedSGD GNN} closes the gap to $16.94\%$ and $5.24\%$, respectively.
\textsc{FedSage+} yields similar performance as \textsc{FedSGD GNN} for all datasets.
\textsc{FedPub} also exhibits similar performance due to the low number of labeled nodes and random partitioning, as detecting communities in this setup cannot be done efficiently. Moreover, \textsc{FedGCN} is included as a method that considers the inter-connections between nodes.
This method, relying on server access to the global adjacency matrix (see App.~\ref{sec:FedGCN}), shares 2-hop aggregated node features with other clients and, therefore, it is not private.

From Table~\ref{tab:results1}, it can be seen that \textsc{FedStruct}  further improves performance compared to \textsc{FedSGD GNN} and \textsc{FedSage+}. The improvement is significant for the Cora and Chameleon datasets. For Chameleon and 20 clients, \textsc{FedStruct} with \textsc{Hop2Vec} achieves an accuracy of 52.76\% compared to 34.33\% for \textsc{FedSage+}.  
The table also shows that the performance improves with NSFs able to collect more global information as can be seen by the improvements between \textsc{DEG}, \textsc{Fed$\star$}, and \textsc{H2V}. 
Notably, \textsc{FedStruct} achieves performance close to that of the  non-private framework \textsc{FedGCN} and the centralized approach. Also, notice that for Chameleon and Amazon-ratings,  two heterophilic graphs, \textsc{FedStruct} outperforms \textsc{FedGCN}.

Compared to, e.g., \textsc{FedSGD GNN}, \textsc{FedStruct} remains robust to an increasing number of clients and, as shown in App.~\ref{app:MoreResults},  across different partitioning methods, highlighting its ability to exploit inter-client connections. 
Furthermore, \textsc{FedStruct (H2V)} exhibits only a slight reduction in performance compared to \textsc{FedStruct (H2V)-F}, while significantly reducing  communication complexity (see Sec.~\ref{sec:complexity}).
Additional results, including federated averaging and central decoupled GCN, are provided in App.~\ref{app:MoreResults} for various partitionings, varying number of labeled training nodes,  and  degrees of heterophily. 

\subsection{Impact of the Number of Labeled Training Nodes}
\label{app:semisupervised}

As noted by \citet{liu:2020towards}, decoupled GCN addresses over-smoothing. \citet{dong:2021equivalence} further showed its suitability for semi-supervised learning by leveraging pseudo-labels from unlabeled nodes.
Figure~\ref{fig:additional_results} (a), shows \textsc{FedStruct}'s performance on Cora for varying NSF generators and labeled node ratios, using a $x$\%-10\%-$(90-x)$\% train-val-test split, where $x$ is the percentage of labeled  nodes.

\textsc{FedStruct} with \textsc{H2V} comes close to \textsc{Central GNN} across all labeled node fractions, performing well even with just 5\% labeled nodes, achieving a top-1 accuracy of ~76\%. In contrast, \textsc{FedSage+} and \textsc{FedPub} show a sharp decline, dropping from a top-1 accuracy of ~75\% with 50\%  labeled nodes to a top-1 accuracy of 60\% and 50\%, respectively, with 5\% labeled nodes. Finally, it can be seen that \textsc{local GNN} deteriorates with fewer labels.

\subsection{Impact of the number of Layers in the Decoupled GCN}

We evaluate the impact of the number of GNN/decoupled GCN layers on the Cora dataset. We compare \textsc{FedStruct} with different NSEs to \textsc{Central GNN} and \textsc{FedSGD GNN} (using \textsc{GraphSage}). We also assess \textsc{FedStruct}'s performance when only NSEs are used during inference (\textsc{H2V s}). We consider a setup with K-means partitioning and 10 clients.

Figure~\ref{fig:additional_results} (b) shows that GNNs suffer from oversmoothing  after just a few layers, limiting the ability to use distant nodes. In contrast,
\textsc{FedStruct}'s performance remains stable across a varying number of layers, suggesting effective use of distant nodes.  \textsc{FedStruct} (\textsc{H2V s}) performs poorly with fewer than 10 or more than 30 layers, peaking between 10-20 layers. This is due to the NSEs being unable to capture extended neighborhoods with few layers, and over-smoothing with many layers. \textsc{H2V s} achieves 75\% accuracy, while \textsc{H2V} reaches 80\%, indicating that while NSEs provide useful information, NFEs are necessary to achieve performance close to \textsc{Central GNN}. 

\begin{figure}[t]
\resizebox{1\textwidth}{!}
{
\input{Figures/uca}}
    \vspace{-4ex}
    \caption{(a) Accuracy vs training-ratio for \textsc{Cora} with random partitioning and $10$ clients; (b) Accuracy vs number of propagation layers on \textsc{Cora} with K-means partitioning and 5 clients; (c) Accuracy vs number of clients on \textsc{Chameleon} with random partitioning; (d) Accuracy on \textsc{Chameleon} with 10 clients for various partitioning methods.}
    
    \label{fig:additional_results}
     \vspace{-3ex}
\end{figure}

\subsection{Impact of the number clients and partitioning method}

Next, we evaluate the impact of the number of clients on the performance of \textsc{FedStruct} with different NSEs,  alongside \textsc{FedSGD GNN}, \textsc{FedPub}, and \textsc{FedGCN}, using the Chameleon dataset with random partitioning.
As shown in Figure~\ref{fig:additional_results}} (c), \textsc{FedStruct} and \textsc{FedGCN} show stable performance   as the number of clients increases. 
\textsc{FedStruct}'s ability to leverage deeper structures allows it to perform nearly as well as  \textsc{Central GNN}. In contrast, \textsc{FedPub} and \textsc{FedSGD} degrade with more clients, with \textsc{FedPub} performing the worst due to the increased difficulty in predicting communities as the number of clients grows.

In Figure~\ref{fig:additional_results} (d), we present the performance on Chameleon with 10 clients under different partitioning methods. \textsc{FedStruct} consistently achieves the best performance across all scenarios, even surpassing \textsc{Central GNN} with the less challenging Louvain partitioning. Both \textsc{FedStruct} and \textsc{FedGCN} (which does not provide privacy) exhibit robustness across different partitioning methods, while the other frameworks are more sensitive to the choice of partitioning.

\section{Concluding Remarks} \label{ss:analyze}

We introduced \textsc{FedStruct}, a framework for SFL that decouples node and structure features, explicitly exploiting  structural dependencies. \textsc{FedStruct} effectively addresses the privacy-communication-utility trilemma as follows:

\textbf{Privacy}: Unlike other SFL frameworks, which require sharing original or generated node features and/or embeddings among clients, \textsc{FedStruct} eliminates this need and 
minimizes sensitive information sharing. Privacy considerations are detailed in App.~\ref{app:privacy}.

\textbf{Utility}: \textsc{FedStruct} performs close to a centralized approach, excelling in semi-supervised learning with few labeled nodes, and significantly outperforms earlier methods like \textsc{FedSage+} and \textsc{FedPub} in challenging scenarios. It is also robust across different partitionings, client numbers, and heterophily.

\textbf{Communications}: \textsc{FedStruct}'s communication overhead scales linearly with the number of nodes, comparable to benchmarks like \textsc{FedSage+}.

\subsubsection*{Acknowledgments}

This work was partially supported by the Swedish Research Council (VR) under grants 2020-03687 and 2023-05065, by the Wallenberg AI, Autonomous Systems and Software Program (WASP) funded by the Knut and Alice Wallenberg Foundation, and by the Swedish innovation Agency (Vinnova) under grant 2022-03063.

The computations were enabled by resources provided by the National Academic Infrastructure for Supercomputing in Sweden (NAISS), partially funded by the Swedish Research Council through grant agreement no. 2022-06725.

\bibliography{references}
\bibliographystyle{plainnat}

\newpage
\appendix
\onecolumn

\section{Node Structure Feature Generation}
\label{app:NSFs}

In this appendix, we provide some background on the creation of NSFs. 
The simplest method considered in this paper is the one-hot degree encoding (\textsc{Deg}), which simply creates a vector with a single non-zero entry whose position indicates the node degree.
Naturally, the one-hot encoding is unable to capture anything beyond 1-hop neighbors.

Graphlet degree vectors (\textsc{GDVs}) go beyond the 1-hop neighborhood by representing the local structure of a node within the node structure feature (NSF).
This is achieved by, for each node, counting the number of occurrences within a pre-defined set of graphlets~\citep{prvzulj:2007biological}.
Although \textsc{GDV} is more expressive than \textsc{Deg}, this approach suffers from high complexity as the number of graphlets grows very fast with the number of nodes.

\textsc{Node2Vec} is an unsupervised node embedding algorithm that creates NSFs from random walks in the graph by capturing both local and global properties~\citep{grover:2016node2vec}.
It uses controlled random walks to explore the graph, generating node sequences similar to sentences in a language.
For each node in the graph, multiple random walks are performed creating a large corpus of node sequences.
These sequences are then used to train a skip-gram model to predict the context of a given node, i.e., the neighboring nodes in the random walks.
Once trained, the NSFs are obtained by querying the skip-gram model with a node and extracting its latent representation.

\textsc{Fed$\star$} is proposed in~\citep{tan:2023federated} to capture structural similarities in the context of federated graph classification.
In particular, for each node in a local graph-structured data record, clients create an NSF by concatenating NSFs created from \textsc{Deg} and from a random walk.
We adapt this methodology to our setting by extracting the diagonal element corresponding to a given node from the powers of $\HatA$, during the process of computing $\BarA$, in place of~\cite[Eq.~6]{tan:2023federated}.

As shown in~\cref{fedstruct-a},  \textsc{Node2Vec} yields superior performance to \textsc{Deg}, \textsc{GDV}, and \textsc{Fed$\star$} due to its ability to capture global properties of the graph. 
Note, however, that \textsc{Node2Vec} and \textsc{GDV} require access to the adjacency matrix of the global graph which is not generally available for our setting.
Moreover, the objective function of \textsc{Node2Vec} contains multiple hyperparameters and is task agnostic, i.e., its representations do not account for the node classification task at hand.

The proposed \textsc{Hop2Vec}  leverages the advantages of \textsc{Node2Vec} while also alleviating its shortcomings within our setting.
In \textsc{Hop2Vec}, instead of learning the NSFs before \textsc{FedStruct} initiates, we start from a random NSF and train a generator within \textsc{FedStruct}. 
As a result, we obtain NSFs that are tailored toward the task, e.g., to minimize miss-classification.
Notably, in sharp contrast to \textsc{Node2Vec}, \textsc{Hop2Vec} does not require knowledge of the complete adjacency matrix. Moreover, compared to \textsc{Node2Vec}, \textsc{Hop2Vec} is faster and does not need any hyperparameter tuning.
Finally, when paired with \textsc{Decoupled GCN}, it is able to capture the global properties of the graph.
A summary of the different NSF-generating methods is given in Table~\ref{NSF-table}.

The NSFs obtained by \textsc{Hop2Vec} are task-dependent, capture deep structure dependencies, and do not rely on homophily. The first property is inherent to the training\textemdash the NSFs are formed to minimize misclassification.
The second property results from the fact that  the  NSFs are optimized based on $\barA_{uv}$ through  Eq.~\ref{eq:DGCN_y}\textemdash since coefficient $\barA_{uv}$ encompasses the $L$-hop neighbors of node $v$, and $L$ can be  large, \textsc{Hop2Vec} is able to capture the global graph's structure information. Furthermore, the parameters $\beta_l$ of the trainable coefficients $\barA_{uv}$ (see Sec.~\ref{s:preliminaries}) can be  adjusted (or learned) to account for heterophilic graphs (we discuss heterophily in  App.~\ref{app:heterophily}).

\begin{table}
\caption{Comparing NSF generator algorithms}
\vspace{-3ex}
\label{NSF-table}

\begin{center}
\begin{sc}
\begin{tabular}{l *{6}{c}}
\toprule
Data                                        & Deg            & GDV                    & \textsc{Node2Vec}   & \textsc{Fed$\star$}             & \textsc{Hop2Vec}   \\
\midrule
task dependent                              & \xmark            & \xmark                 & \xmark          &\xmark       & \checkmark         \\
global properties                      & \xmark            & \xmark              & \checkmark           &\checkmark   & \checkmark          \\
fast                                        & \checkmark        & \xmark                 & \xmark          &\xmark       & \checkmark         \\
no parameter tuning                         & \checkmark        & \checkmark            & \xmark            &\xmark     & \checkmark         \\
 locally computable                             & \checkmark        & \xmark                 & \xmark       &\checkmark          & \checkmark         \\
\bottomrule
\end{tabular}
\end{sc}
\end{center}
\vskip -0.1in
\end{table}

\section{Training Process of 
\textsc{FedStruct}} \label{app:fedstruct-v2_algorithm-H2V}

The training process of
\textsc{FedStruct} with generic NSEs is described in Algorithm~\ref{alg:local_training}. The training process of \textsc{FedStruct} using \textsc{Hop2Vec} is described in Algorithm~\ref{alg:fedstruct-v2_algorithm-H2V}.

\begin{algorithm}
   \caption{\textsc{FedStruct}}\label{alg:local_training}
\begin{algorithmic}
   \INPUT K client with their respective subgraph $\{\mcG_i\}_{i=1}^{K}$ \textsc{FedStruct} model parameters $\btheta = (\btheta_{\msf}||\btheta_{\mss})$
   \FOR{i=1 to K}
        \STATE Collaboratively obtain $\BarA^{ [i] }$ based on~\cref{alg:Abar}
        \STATE Locally compute NSFs $\mcS_i = \{\bs_{u}, \forall u \in \mcV_i\}$ 
        \STATE Share $\mcS_i$ with all the clients
    \ENDFOR
    \FOR{e=1 to Epochs} 
    \FOR{i=1 to K}
        \STATE Client $i$ collects $\btheta$ from the server
        \FOR{$v \in \tmcV_i$}
            \STATE Calculate $\hat{\by}_{v}$ based on Eq.~\ref{eq:DGCN_y}
        \ENDFOR
        \STATE Calculate $\mcL_i(\btheta)$ based on Eq.~\ref{eq:loss_function}
        \STATE Calculate $\nabla_{\btheta}   \mcL_i(\btheta)$ based on ~\cref{th:trainingsFEd2}
        \STATE Send $\nabla_{\btheta}   \mcL_i(\btheta)$ to the server
    \ENDFOR
    \STATE Calculate $\nabla_{\btheta}   \mcL(\btheta)$ based on Eq.~\ref{eq:average_gradient}
    \STATE $\btheta \leftarrow \btheta - \lambda \nabla_{\btheta}   \mcL(\btheta)$
\ENDFOR
\end{algorithmic}
\end{algorithm}

\begin{algorithm}
   \caption{\textsc{FedStruct} using \textsc{Hop2Vec}}\label{alg:fedstruct-v2_algorithm-H2V}
\begin{algorithmic}
   \INPUT K client with their respective subgraph $\{\mcG_i\}_{i=1}^{K}$ \textsc{FedStruct} model parameters $\btheta = (\btheta_{\msf}||\btheta_{\mss})$
    \STATE Server initialize $\nabla_{\bS}   \mcL(\btheta, \bS)= \boldsymbol{0}$
   \FOR{i=1 to K}
        \STATE Collaboratively obtain $\BarA^{ [i] }$ based on~\cref{alg:Abar}
        \STATE Create $\mcS_i= \{\bs_{u}, \forall u \in \mcV_i\}$ by randomly initialize $\bs_{u}\quad\forall u \in \mcV_i$
        \STATE Share $\mcS_i$ with all the clients
        \STATE Collect $\mcS_j \quad \forall j \in  [K] $ and create $\bS = ||(\bs_v\transpose, \quad \forall v \in \mcV)$
    \ENDFOR
    \FOR{e=1 to Epochs} 
    \FOR{i=1 to K}
        \STATE Client $i$ collects $\btheta$ from the server
        \STATE Client $i$ collects $\nabla_{\bS} \mcL(\btheta,\bS)$ from the server
        \STATE Client $i$ updates $\bS \leftarrow \bS - \lambda_{\mss} \nabla_{\bS} \mcL(\btheta, \bS)$
        \FOR{$v \in \tmcV_i$}
            \STATE Calculate $\hat{\by}_{v}$ based on ~Eq.~\ref{eq:rNSF_y}
        \ENDFOR
        \STATE Calculate $\mcL_i(\btheta, \bS)$ based on ~Eq.~\ref{eq:loss_function}
        \STATE Calculate $\nabla_{\btheta}   \mcL_i(\btheta, \bS)$ based on ~\cref{th:trainingsFEd2}
        \STATE Calculate $\nabla_{\bS}   \mcL_i(\btheta, \bS)$ based on ~\cref{th:v2_grad_rSNF}
        \STATE Send $\nabla_{\btheta}   \mcL_i(\btheta, \bS)$ and $\nabla_{\bS}   \mcL_i(\btheta, \bS)$ to the server
    \ENDFOR
    \STATE Calculate $\nabla_{\btheta}   \mcL(\btheta, \bS)$ based on Eq.~\ref{eq:average_gradient}
    \STATE $\btheta \leftarrow \btheta - \lambda \nabla_{\btheta} \mcL(\btheta, \bS)$
    \STATE Calculate $\nabla_{\bS} \mcL(\btheta,\bS) = \frac{1}{|\tmcV|}\sum_{i=1}^K{\nabla_{\bS} \mcL_i(\btheta, \bS)}$
    
\ENDFOR
\end{algorithmic}
\end{algorithm}

\section{Proof Section} \label{app:Proof_section}
\subsection{Local Gradient} \label{app:local_grad}
To prove ~\cref{th:trainingsFEd3}, we need the following Lemma and proposition.

\begin{lem}
\label{lem:gradient}
Let $\hat{\by} = \softmax(\bq)$
and $\mcL = \mathrm{CE}(\by, \hby)$, where $\mathrm{CE}$ is the cross-entropy loss function and $\by$ is the label corresponding to the input vector $\bq$, with $\sum_{i=1}^{c}{y_i} = 1$ and $c$ being the number of classes.
The gradient of $\mcL$ with respect to $\bz$ is equal to
\begin{equation}
    \nabla_{\bq}\mcL = \hat{\by} - \by\,.
\end{equation}
\end{lem}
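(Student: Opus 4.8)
The plan is to prove the standard identity that the gradient of the softmax-cross-entropy composition equals $\hby-\by$, by writing everything out coordinate-wise and differentiating. First I would fix notation: let $\bz=(z_1,\dots,z_c)$, let $\hat y_k = \mathrm{softmax}(\bz)_k = e^{z_k}/\sum_{j=1}^c e^{z_j}$, and recall $\mathrm{CE}(\by,\hby) = -\sum_{k=1}^c y_k \log \hat y_k$. The goal is to show $\partial \mcL/\partial z_m = \hat y_m - y_m$ for each $m\in[c]$.

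The key computation is the Jacobian of the softmax. I would compute $\partial \hat y_k/\partial z_m$ directly from the quotient rule: writing $Z=\sum_j e^{z_j}$, one gets $\partial \hat y_k/\partial z_m = \hat y_k(\delta_{km} - \hat y_m)$, where $\delta_{km}$ is the Kronecker delta. Then, using $\partial \log \hat y_k/\partial z_m = (1/\hat y_k)\,\partial \hat y_k/\partial z_m = \delta_{km} - \hat y_m$, I would substitute into the chain rule:
\begin{align*}
\frac{\partial \mcL}{\partial z_m} = -\sum_{k=1}^c y_k (\delta_{km} - \hat y_m) = -y_m + \hat y_m \sum_{k=1}^c y_k = \hat y_m - y_m,
\end{align*}
where the last step uses the hypothesis $\sum_{k=1}^c y_k = 1$. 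Collecting the coordinates gives $\nabla_{\bz}\mcL = \hby - \by$.

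There is no real obstacle here — this is a routine exercise in multivariable calculus; the only things to be careful about are (i) the quotient-rule bookkeeping for the softmax Jacobian, keeping the $\delta_{km}$ term separate from the $-\hat y_k \hat y_m$ term, and (ii) invoking the normalization $\sum_k y_k = 1$ at the end (this is why the statement assumes it rather than requiring $\by$ to be one-hot — a convex combination of one-hot vectors suffices, which matters because the paper sets unlabeled nodes' labels to $\boldsymbol 0$, though those nodes are excluded from the loss sum anyway). I would present the proof in two short displayed computations plus a line of text, and that is all that is needed.
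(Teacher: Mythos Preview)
Your proof is correct and essentially equivalent to the paper's. The paper organizes the same computation slightly differently by first rewriting $\mcL = \mathrm{LSE}(\bz) - \bz^{\mathsf{T}}\by$ (using $\sum_i y_i = 1$ up front to pull the log-sum-exp out of the sum) and then invoking $\nabla_{\bz}\mathrm{LSE}(\bz) = \mathrm{softmax}(\bz)$, whereas you compute the softmax Jacobian coordinate-wise and apply the normalization at the end; both are standard derivations of the same identity.
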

\begin{proof}
    Using the definition of cross-entropy
    \begin{align}
        \mathrm{CE}(\by, \hat{\by}) = -\sum_{i=1}^{c}{y_i \mathrm{log}(\hat{y}_i)}\,,
    \end{align}
    and the definition of softmax,
    \begin{align}
        \softmax(\bq) = ||\left(\frac{e^{q_i}}{\sum_{j=1}^{c}{e^{q_j}}} \quad \forall i \in  [c] \right)\,,
    \end{align}
    we can rewrite $\mcL$ as
    \begin{align}
    \label{lm:simplify_loss}
        \mcL &= \sum_{i=1}^{c}{y_i\big(\mathrm{LSE}(\bq) - q_i\big)} \nonumber\\
             &= \mathrm{LSE}(\bq) \sum_{i=1}^{c}{y_i} - \sum_{i=1}^{c}{y_iz_i} \nonumber\\
             &= \mathrm{LSE}(\bq) - \bq\transpose\by\,,
    \end{align}
    where $\mathrm{LSE}(\bq)= \mathrm{log}(\sum_{j=1}^{c}{e^{q_j}})$ is the log-sum-exp function. 
    The partial derivative of $\mathrm{LSE}(\bq)$ with respect to $\bq$ is the $\softmax$ function
    \begin{align}
        \label{eq:LSE_grad}
        \nabla_{\bq}\mathrm{LSE}(\bz) &= ||(\fp{\mathrm{LSE}(\bq)}{q_k} \quad \forall k \in  [c] ) \nonumber\\
        &= ||(\frac{e^{q_k}}{\sum_{j=1}^{c}{e^{q_j}}}\quad \forall k \in  [c] )\nonumber\\
        &= \softmax(\bq)\,.
    \end{align}
    Therefore, using ~Eq.~\ref{eq:LSE_grad} and taking the derivative of ~Eq.~\ref{lm:simplify_loss} with respect to $\bz$ leads to
    \begin{align*}
        \nabla_{\bq}\mcL &= \nabla_{\bq}\mathrm{LSE}(\bq) - \nabla_{\bq}(\bq\transpose\by)
                         = \hat{\by} - \by\,.
    \end{align*}
\end{proof}

The local gradient $\nabla_{\btheta}   \mcL_i(\btheta)$ is given by the following proposition.
\begin{prop} \label{th:trainingsFEd2}
Let $\mcL_i(\btheta)$ in~Eq.~\ref{eq:loss_function} use the cross-entropy loss with $\btheta = (\btheta_{\msf}||\btheta_{\mss})$
and let $\hby_v$ be as in~Eq.~\ref{eq:DGCN_y}. The local gradient
    \begin{align*}
        \nabla_{\btheta} \mcL_i(\btheta) = 
    ||\left(\fp{\mcL_i(\btheta)}{\theta_{j}},   \quad\forall j \in  [ |\btheta | ] \right)
    \end{align*}
 is given by
\begin{align}
\label{eq:NSF_structures}
    \fp{    \mcL_i(\btheta)     }{\theta_{\mss,j}} 
    = \sum_{v \in \tmcV_i, u\in\mcV}{ \barA_{vu} \fp{\bg_{\btheta_{\mss}}(\bs_u)}{\theta_{\mss,j}} (\hat{\by}_v - \by_v)}
   ,\quad\quad \fp{    \mcL_i(\btheta)     }{\theta_{\msf,j}} 
    = \sum_{v \in \tmcV_i}{\fp{f_{\btheta_{\msf}}(v)}{\theta_{\msf,j}} (\hat{\by}_v - \by_v)}.
\end{align}
\end{prop}
\vspace{-3ex}
\begin{proof}

By the chain rule and~\cref{lem:gradient}, we have
    \begin{align}
    \label{eq:loss_theta}
        \fp{\mcL_i(\btheta)}{\theta_{j}} &= \sum_{v\in\tmcV_i}{\fp{\bq_v}{\theta_j}  \nabla_{\b1_v}\mcL_i(\btheta)} \nonumber\\
                                           &= \sum_{v\in\tmcV_i}{\fp{\bq_v}{\theta_j} (\hby_v - \by_v)}
                                           \quad\forall j \in  [ |\btheta | ] \,,
    \end{align}

where $\bq_v = \sum_{u \in \mcV}{\barA_{vu}\bg_{\btheta_{\mss}}(\bs_u)} + f_{\btheta_{\msf}}(v)$.
Taking the derivative of $\bq_v$ with respect to the different entries of $\btheta$ leads to
\begin{align}
    \label{eq:z2_theta1}
    \fp{\bq_v}{\theta_{\mss,j}} &= 
    \sum_{u \in \mcV}{\barA_{vu} \fp{\bg_{\btheta_{\mss}}(\bs_u)}{\theta_{\mss,j}} }
    &\quad\forall j \in  [ |\btheta_{\mss} | ] \\
    \label{eq:z2_theta2}
    \fp{\bq_v}{\theta_{\msf,j}} &= 
    \fp{f_{\btheta_{\msf}}(v)}{\theta_{\msf,j}}
    &\quad\forall j \in  [ |\btheta_{\msf} | ].
\end{align}
Substituting~Eq.~\ref{eq:z2_theta1} and~Eq.~\ref{eq:z2_theta2} into~Eq.~\ref{eq:loss_theta} and separating the summation over $\mcV$ over the different clients concludes the proof.
\end{proof}

\subsection{Proof of \cref{th:trainingsFEd3}} \label{app:gradient_v3}
\begin{enumerate}
    \item \label{app:proof_1}
To obtain $\{\hby_{v},\forall v\in\tmcV_i\}$ in~Eq.~\ref{eq:DGCN_y}, client $i$ only needs external inputs $\boldsymbol{\Bar{A}}^{ [i] }$ and $\bS$. 
    \item \label{app:proof_2}
To calculate $\nabla_{\btheta} \mcL_i(\btheta)$, client $i$ only needs external inputs $\boldsymbol{\Bar{A}}^{ [i] }$ and $\bS$.
\end{enumerate}

By combining \ref{app:proof_1} and \ref{app:proof_2} we prove \cref{th:trainingsFEd3}.
\subsection{Proof of \cref{th:v2_grad_rSNF}} \label{app:gradient_v4}
First notice that
\begin{align}
\label{eq:S_concat}
    \nabla_{\bS} \mcL_i(\btheta, \bS) = 
    ||\big( (\nabla_{\bs_p} \mcL_i(\btheta, \bS))\transpose   \quad\forall p \in \mcV\big)\,.
\end{align}
Using the chain rule and ~\cref{lem:gradient}, we have
\begin{align}
\label{eq:rsnf_loss_theta_v1}
\nabla_{\bs_p} \mcL_i(\btheta, \bS) 
    &= \sum_{v\in\tmcV_i}{\fp{\bq_v}{\bs_p}  \nabla_{\bq_v}\mcL_i(\btheta,\bS)}\nonumber\\
    &= \sum_{v\in\tmcV_i}{\fp{\bq_v}{\bs_p} (\hby_v - \by_v)}
    \quad\forall p \in \mcV\,,
\end{align}

where 
$\bq_v = \sum_{u \in \mcV}{\barA_{vu}\bs_u} + f_{\btheta_{\msf}}(v)$.
Taking the derivative of $\bq_v$ with respect to $\bs_p$ leads to
\begin{align}
    \label{eq:z3_s}
    \fp{\bq_v}{\bs_p} 
    &= \sum_{u \in \mcV}{\barA_{vu} \fp{\bs_u}{\bs_p} }\nonumber\\
    &= \barA_{vp} \fp{\bs_p}{\bs_p}\nonumber\\
    &= \barA_{vp} \bI
    \quad\forall p \in \mcV.
\end{align}
Substituting ~Eq.~\ref{eq:z3_s} into ~Eq.~\ref{eq:rsnf_loss_theta_v1} leads to
\begin{align}
\label{eq:z3_simplify}
    \nabla_{\bs_p} \mcL_i(\btheta, \bS) &= \sum_{v\in\tmcV_i}{ \barA_{vp} (\hby_v - \by_v)}
                                       \quad\forall p \in \mcV.
\end{align}

Finally, substituting ~Eq.~\ref{eq:z3_simplify} into ~Eq.~\ref{eq:S_concat} concludes the proof.

\subsection{\textsc{FedGCN} with $2$ hops needs global adjacency matrix knowledge}\label{sec:FedGCN}
In the FedGCN scheme, as detailed in Equation 3 on page 5 of the FedGCN paper, to calculate $\boldsymbol{\hat{y}}_i$, the following information must be sent to the server by Client $z$:
\begin{align}
    &\sum_{j \in \mathcal{N}_i}{\mathbb{I}_z(c(j)) \boldsymbol{A}_{ij} \boldsymbol{x}_j}\\
     &\boldsymbol{h}_{zj} = \sum_{m \in \mathcal{N}_j}{\mathbb{I}_z(c(m)) \boldsymbol{A}_{jm} \boldsymbol{x}_m} \quad \forall j \in \mathcal{N}_i \setminus i.
    \label{eq:hij}
\end{align}

Once  the server obtains $\boldsymbol{h}_{zj}$ for all clients $z\in [K]$, it calculates the aggregate $\boldsymbol{h}_{j} = \sum_{z\in [K]} \boldsymbol{h}_{zj}$ and forwards this result to the client where node $i$ resides. 
However, as $\boldsymbol{h}_{j}$ is a quantity pertaining to node $j$, the server must know that $i$ and $j$ are connected. 
Similar arguments hold for all other nodes. Hence, the server needs access to the global adjacency matrix for this scheme to work with 2 hops.

\section{Scheme to Obtain the Local Partition of the $L$-hop combined adjacency matrix}
\label{app:schemes}

\subsection{Algorithm}
As discussed in Section~\ref{sFed-SD-v2}, \textsc{FedStruct} with decoupled GCN only requires the clients to have access to their local partition of the global $L$-hop combined adjacency matrix, i.e., $\boldsymbol{\Bar{A}}^{ [i] }$.
In this appendix, we present an algorithm that enables clients to  acquire this matrix in a privacy-preserving manner. The scheme for obtaining the  local partitions of the $L$-hop combined adjacency matrix $\boldsymbol{\Bar{A}}^{ [i] }$ is designed specifically to ensure that no client has  access to the entire $\ell$-hop ($\ell\in[1,\ldots,L]$)  adjacency matrix $\HatA^\ell$.

We denote by $\TildeA^{ [i] } = ||(\Tildea_{v}\transpose,\quad \forall v \in \mcV_i)$ and $\HatA^{ [i] } = ||(\Hata_{v}\transpose,\quad \forall v \in \mcV_i)$ the local partition of $\TildeA$ and $\HatA$ pertaining to the nodes of client $i$, respectively.
Further, note that $ \HatA^{ [i] } = \big(\TildeD^{ [i] }\big)^{-1}\TildeA^{ [i] }$, where $\TildeD^{ [i] } \in \mbbR^{\vert\mcV_i\vert\times \vert\mcV_i\vert}$ is the diagonal matrix of node degrees, with $\TildeD^{ [i] }(v,v) = \sum_{u\in\mcV}{\tilde{a}_{vu}}$, $\forall v \in \mcV_i$.
Also, let $\TildeA^{ [i] }_j\in\mathbb{R}^{\vert \mcV_i \vert\times \vert \mcV_j \vert}$ denote the submatrix of $\TildeA^{ [i] }$ that connects client $i$ to client $j$ for $j\in [K] $ and define $\HatA^{ [i] }_j$ analogously.
Consequently, we have $ \HatA^{ [i] }_j = \big(\TildeD^{ [i] }\big)^{-1}\TildeA^{ [i] }_j$.

Each client  $i$  has access to:
\begin{itemize}
\item $\boldsymbol{\tilde{A}}^{[i]}_j \in \mathbb{R}^{\vert\mathcal{V}_i\vert \times \vert\mathcal{V}_j\vert}$ for all  $j \in [K]$ : representing the outgoing edges from client  $i$  to client  $j$ .
\item $\boldsymbol{\tilde{A}}^{[j]}_i \in \mathbb{R}^{\vert\mathcal{V}_j\vert \times \vert\mathcal{V}_i\vert}$ for all  $j \in [K]$ : representing the incoming edges from client  $j$  to client  $i$ .
\end{itemize}
We note that client $i$ can compute $\HatA^{ [i] }_j$ locally as it knows the destination of its outgoing edges, i.e., $\TildeA^{ [i] }_j$ for all $j\in  [K] $, and can compute $\TildeD^{ [i] }$.

In client $i$, we are interested in acquiring its local partition of $\BarA$, i.e., $\BarA^{ [i] } = ||(\Bara_{v}\transpose,\quad \forall v \in \mcV_i)$.  Let $\BarA^{ [i] }_j\in\mathbb{R}^{\vert \mcV_i \vert\times \vert \mcV_j \vert}$ denote the submatrix of $\BarA^{ [i] }$ that connects client $i$ to client $j$ for $j\in [K] $.
Based on Eq.~\ref{eq:BarA} we can write $\BarA^{ [i] }_j$ as
\begin{align}
        \BarA^{ [i] }_j &= \sum_{l=1}^{L_{\mss} } {\beta_{\ell}[\HatA^{ [i] }_j]^{\ell}}, \quad  j\in [K]  \,,
        \label{eq:partial_BarA}
\end{align}
where $[ \HatA^{[i]}_j ] ^\ell$ is the submatrix of $\HatA^\ell$ that connects client $i$ to client $j$ for $j\in [K] $.
Since $\HatA^\ell = \HatA \HatA^{\ell-1}$ we can easily show that
\begin{align}
[ \HatA^{[i]}_j ] ^\ell 
= &\sum_{k\in [K] } \HatA^{ [i] }_{k} \big[\HatA^{ [k] }_{j}\big]^{\ell-1}\\  
= &(\TildeD^{ [i] })^{-1} \sum_{k\in [K] } \underbrace{\TildeA^{ [i] }_{k} \big[\HatA^{ [k] }_{j}\big]^{\ell-1}}_{\text{computed at client } k}.
\label{eq:Bijk}
\end{align}

By induction, we can demonstrate that if  each client  $i$  has access to $[\boldsymbol{\hat{A}}^{[i]}_j]^{\ell - 1}$ for all  $j \in [K]$ , they can then collaborate with each other to compute $[\boldsymbol{\hat{A}}^{[i]}_j]^{\ell}$. Here’s a step-by-step breakdown:

\begin{enumerate}
\item Base case: For  $\ell = 1$ , we have $[\boldsymbol{\hat{A}}^{[i]}_j]^{1} = \boldsymbol{\hat{A}}^{[i]}_j$. This information is available to Client $i$ from the start for all  $j \in [K]$ .
\item Inductive step: Suppose Client  $i$  knows $[\boldsymbol{\hat{A}}^{[i]}_j]^{\ell - 1}$ for all  $j \in [K]$ . Then, Client  $k$  can compute the intermediate term 
\begin{align}    
\boldsymbol{B}_{ijk} = \boldsymbol{\tilde{A}}^{[i]}_k [\boldsymbol{\hat{A}}^{[k]}_j]^{\ell-1}
\label{eq:Bijk2}
\end{align}
for all  $j \in [K]$ (since it knows $[\boldsymbol{\hat{A}}^{[k]}_j]^{\ell - 1}$ and $\boldsymbol{\tilde{A}}^{[i]}_k$ for all  $i,j \in [K]$)  and share it with Client  $i$. 
Notice that Client $i$ cannot reconstruct $[\boldsymbol{\hat{A}}_j^{[k]}]^{\ell-1}$ from $\boldsymbol{B}_{ijk}$
due to the low rank nature of $\boldsymbol{\tilde{A}}_k^{[i]}$. 
Additionally, Client $k$ also prunes $\boldsymbol{B}_{ijk}$ to remove any concern. 
Based on Eq.~\ref{eq:Bijk},  Client  $i$  can  compute $[\boldsymbol{\hat{A}}^{[i]}_j]^{\ell}$ by:
\begin{align}
[\boldsymbol{\hat{A}}^{[i]}_j]^{\ell} = (\boldsymbol{\tilde{D}}^{[i]})^{-1} \sum_{k\in [K]}{\boldsymbol{B}_{ijk}}\,.
\end{align}

\end{enumerate}
To apply pruning, in each iteration, client $k$  sends only the top $\lceil \frac{p}{K} \rceil \cdot n_i$ values of $\boldsymbol{B}_{ijk}$ to Client $i$ where $p$ is the pruning parameter. Therefore,  client $i$ receives an estimate of $[ \HatA^{ [i] }_j ] ^{\ell}$ as
\begin{align}
    [ \HatA^{ [i] }_j ] ^{\ell} \approx (\TildeD^{[i]})^{-1} \sum_{k\in [K] } \underbrace{\boldsymbol{\Tilde{B}}_{ijk}}_{\text{computed at client } k}, \quad  j\in [K] \,.
   \label{eq:matrix_arb_power_approx}
\end{align}
where $\boldsymbol{\Tilde{B}}_{ijk}$ is the pruned version of matrix $\boldsymbol{B}_{ijk}$ with pruning parameter $\lceil \frac{p}{K} \rceil$.
This summation allows Client  $i$  to update its local adjacency matrix partition without requiring access to the entire global adjacency matrix.
At each step  $\ell$ , Client  $i$  only accesses $[\boldsymbol{\hat{A}}^{[i]}_j]^{\ell}$ for all  $j \in [K]$  and does not learn the full global matrix $[\boldsymbol{\hat{A}}]^{\ell}$. 
Moreover, if additional security is required, the summation over $\boldsymbol{B}_{ijk}$ could be performed on a secure server using homomorphic encryption.

As the final step, Client $i$ can compute $\BarA^{ [i] }_j$ sequentially based on Eq.~\ref{eq:partial_BarA}. 
The resulting algorithm is shown in~Algorithm~\ref{alg:Abar}.
Finally, we notice that Algorithm~\ref{alg:Abar} requires each client $k$ to evaluate $L_s \times n_k \times n^2$ matrix products and to communicate $L_s \times p \times n$ matrices.
This communication complexity is linear with $n$ and it is performed only once before the training initiates.

\begin{algorithm}
   \caption{Private acquisition of $\boldsymbol{\Bar{A}}^{ [i] }$}\label{alg:Abar}
\begin{algorithmic}
   \INPUT $\big\{\TildeA^{ [i] }_j, \TildeA^{[j]}_i \big\}_{j=1}^{K}, \TildeD^{ [i] } $, power $L$, and list of weights $\{\beta_{\ell}\}_{\ell=1}^L$
   \OUTPUT local matrix $\{\BarA^{ [i] } \}$

\FOR{$i=1$ to $K$}
    \FOR{$j=1$ to $K$}
        \STATE Initialize $\BarA^{ [i] }_j=\beta_1 (\TildeD^{ [i] })^{-1}\TildeA^{ [i] }_j$
   \ENDFOR
\ENDFOR
   \FOR{$\ell=2$ to $L_s$}
       \FOR{$i=1$ to $K$}
            \FOR{k=1 to K}
                \STATE Client $k$ calculate $\bm{B}_{ijk} = \TildeA^{ [i] }_{k} [ \HatA^{ [K] }_{j} ] ^{\ell-1}\quad \forall j \in [K]$
                \STATE Client $k$ sends the pruned matrix $\boldsymbol{\Tilde{B}}_{ijk}$ to client $i$ for $j \in [K]$
            \ENDFOR
            \STATE Client $i$ collects $\boldsymbol{\Tilde{B}}_{ijk}$ from clients $k\in [K] $
            \STATE Client $i$ stores $[ \HatA^{ [i] }_j]^{\ell} = (\TildeD^{ [i] })^{-1}\sum_{k=1}^K \boldsymbol{\Tilde{B}}_{ijk}$ for $j\in [K]$
            \STATE $\BarA^{ [i] }_j \leftarrow  ´\BarA^{ [i] }_j +\beta_\ell [ \HatA^{ [i] }_j] ^{\ell}$ for $j\in  [K] $
            \STATE $\BarA^{ [i] } = [\BarA^{ [i] }_1, \BarA^{ [i] }_2, \dots , \BarA^{ [i] }_K]$
        \ENDFOR
    \ENDFOR
\end{algorithmic}
\end{algorithm}

\subsection{Example}

To illustrate our algorithm  and its privacy mechanism, we provide an example to compute the 2-hop combined adjacency matrix $\boldsymbol{\Bar{A}}$, defined in Section~\ref{s:preliminaries}.

As stated in Section~\ref{sec:RelatedWork}, SFL can be categorized into two groups:
\begin{enumerate}
    \item SFL with no knowledge of cross-subgraph interconnections.
    \item SFL with knowledge of cross-subgraph interconnections.
\end{enumerate}

This paper focuses on the second scenario, where the interconnections between clients (i.e., between subgraphs) are known to the involved clients.  Specifically, client $i$ knows its incoming and outgoing edges to every other client $j$, meaning it knows $\boldsymbol{\tilde{A}}_i^{[j]}$ and $\boldsymbol{\tilde{A}}_j^{[i]}$   for all $j \in [K]$.

It is important to note that the knowledge of  $\boldsymbol{\tilde{A}}_i^{[j]}$ and $\boldsymbol{\tilde{A}}_j^{[i]}$ does not imply that the entire global adjacency matrix is accessible to client $i$. Each client  has access only to the rows and columns corresponding to its own nodes.
As an example, consider a graph with 9 nodes, partitioned across 3 clients, given by the following adjacency matrix:
\begin{align*}
\mathbf{A} = \begin{bmatrix}
0 & 1 & 1 & 0 & 0 & 0 & 0 & 0 & 0 \\
1 & 0 & 1 & 0 & 0 & 0 & 0 & 0 & 0 \\
1 & 1 & 0 & 0 & 0 & 1 & 0 & 0 & 0 \\
0 & 0 & 0 & 0 & 1 & 1 & 0 & 0 & 0 \\
0 & 0 & 0 & 1 & 0 & 1 & 0 & 0 & 0 \\
0 & 0 & 1 & 1 & 1 & 0 & 0 & 0 & 1 \\
0 & 0 & 0 & 0 & 0 & 0 & 0 & 1 & 1 \\
0 & 0 & 0 & 0 & 0 & 0 & 1 & 0 & 1 \\
0 & 0 & 0 & 0 & 0 & 1 & 1 & 1 & 0 \\
\end{bmatrix}\,,
\end{align*}
where  $A_{ij} = 1$ indicates a connection between nodes $i$ and $j$, and $A_{ij} = 0$ indicates no connection.

In this setting, Client 1 comprises nodes $\{1,2,3\}$,  Client 2 comprises nodes $\{4,5,6\}$, and  Client 3 comprises nodes $\{7,8,9\}$.

Below, we depict the entries of the adjacency matrix known to Client 2 using 1 (there is a connection), 0 (there is no connection) and `?' (unknown connection):
 \begin{align}
 \text{Known entries of } \boldsymbol{A} \text{ for Client } 2=
     \begin{bmatrix}
     ?&?&?& 0&0&0 &?&?&?\\
     ?&?&?& 1&0&0 &?&?&?\\
     ?&?&?& 0&0&1 &?&?&?\\
     0&1&0& 0&1&1 &0&0&1\\  
     0&0&0& 1&0&1 &0&0&0\\  
     0&0&1& 1&1&0 &0&0&0\\  
     ?&?&?& 0&0&0 &?&?&?\\
     ?&?&?& 0&0&0 &?&?&?\\
     ?&?&?& 1&0&0 &?&?&?\\
     \end{bmatrix}
 \end{align}
 
Client 2 knows all its internal connections (e.g., node $6$ is connected to node $5$) as well as its external edges (e.g., node $4$ is connected to node $2$ in Client 1 and node $9$ in Client 3. The incoming edges to Client 2 from Client 3 are given by
\begin{align}
\boldsymbol{\tilde{A}}_{i=2}^{[j=3]} = \begin{bmatrix}
0 & 0 & 0 \\
0 & 0 & 0 \\
1 & 0 & 0 \\
\end{bmatrix}.
\end{align}
That is, there is only one connection between Client 2 and Client 3 (between node 4 and node 9).
Thus, while Client 2 has access to its own local and inter-client edges, it remains unaware of the internal connections of other clients or the interconnections between any other clients $j\in\{1,3\}$ and $k\in\{1,3\}$.

 Hence,  Client $i$ knows a portion of the global adjacency matrix $\mathbf{A}$, i.e., $\boldsymbol{\tilde{A}}^{[i]}$, corresponding to the connections between its own nodes and to some nodes in other clients.
In particular, Client 1 knows
\[
\text{internal connections (including self loops): }
\boldsymbol{\tilde{A}}_1^{[1]} = \begin{bmatrix}
1 & 1 & 1 \\
1 & 1 & 1 \\
1 & 1 & 1 \\
\end{bmatrix}\,,
\]
\[
\text{outgoing connections: }
\boldsymbol{\tilde{A}}_2^{[1]} = \begin{bmatrix}
0 & 0 & 0 \\
0 & 0 & 0 \\
0 & 0 & 1 \\
\end{bmatrix}\quad
\boldsymbol{\tilde{A}}_3^{[1]} = \begin{bmatrix}
0 & 0 & 0 \\
0 & 0 & 0 \\
0 & 0 & 0 \\
\end{bmatrix}\,,
\]
\[
\text{incoming connections: }
\boldsymbol{\tilde{A}}_1^{[2]} = \begin{bmatrix}
0 & 0 & 0 \\
0 & 0 & 0 \\
0 & 0 & 1 \\
\end{bmatrix}\quad
\boldsymbol{\tilde{A}}_1^{[3]} = \begin{bmatrix}
0 & 0 & 0 \\
0 & 0 & 0 \\
0 & 0 & 0 \\
\end{bmatrix}.
\]

Similarly, Client 2 knows
\[
\text{internal connections (including self loops): }
\boldsymbol{\tilde{A}}_2^{[2]} = \begin{bmatrix}
1 & 1 & 1 \\
1 & 1 & 1 \\
1 & 1 & 1 \\
\end{bmatrix}\,,
\]
\[
\text{outgoing connections: }
\boldsymbol{\tilde{A}}_1^{[2]} = \begin{bmatrix}
0 & 0 & 0 \\
0 & 0 & 0 \\
0 & 0 & 1 \\
\end{bmatrix}\quad
\boldsymbol{\tilde{A}}_3^{[2]} = \begin{bmatrix}
0 & 0 & 0 \\
0 & 0 & 0 \\
0 & 0 & 1 \\
\end{bmatrix}\,,
\]
\[
\text{incoming connections: }
\boldsymbol{\tilde{A}}_2^{[1]} = \begin{bmatrix}
0 & 0 & 0 \\
0 & 0 & 0 \\
0 & 0 & 1 \\
\end{bmatrix}\quad
\boldsymbol{\tilde{A}}_2^{[3]} = \begin{bmatrix}
0 & 0 & 0 \\
0 & 0 & 0 \\
0 & 0 & 1 \\
\end{bmatrix}\,.
\]
and Client 3 knows
\[
\text{internal connections (including self loops): }
\boldsymbol{\tilde{A}}_3^{[3]} = \begin{bmatrix}
1 & 1 & 1 \\
1 & 1 & 1 \\
1 & 1 & 1 \\
\end{bmatrix}\,,
\]
\[
\text{outgoing connections: }
\boldsymbol{\tilde{A}}_1^{[3]} = \begin{bmatrix}
0 & 0 & 0 \\
0 & 0 & 0 \\
0 & 0 & 0 \\
\end{bmatrix}\quad
\boldsymbol{\tilde{A}}_2^{[3]} = \begin{bmatrix}
0 & 0 & 0 \\
0 & 0 & 0 \\
0 & 0 & 1 \\
\end{bmatrix}\,,
\]
\[
\text{incoming connections: }
\boldsymbol{\tilde{A}}_3^{[1]} = \begin{bmatrix}
0 & 0 & 0 \\
0 & 0 & 0 \\
0 & 0 & 0 \\
\end{bmatrix}\quad
\boldsymbol{\tilde{A}}_3^{[2]} = \begin{bmatrix}
0 & 0 & 0 \\
0 & 0 & 0 \\
0 & 0 & 1 \\
\end{bmatrix}\,.
\]

Using $\boldsymbol{\tilde{A}}^{[i]}_j, j \in \{1,2,3\}$, Client  $i$  can compute both the degree matrix $\boldsymbol{\tilde{D}}^{[i]}$ and the normalized adjacency matrix $\boldsymbol{\hat{A}}^{[i]}_j$ for all  $j \in [K]$. For simplicity, we assume  that $\boldsymbol{\tilde{A}}^{[i]}_j = \boldsymbol{\hat{A}}^{[i]}_j\quad \forall i,j \in [K]$ in this example, as they  differ only by a normalization constant.

Following Eq.~\ref{eq:Bijk2}, we define the intermediate matrix $\boldsymbol{B}_{ijk} = \boldsymbol{\tilde{A}}^{[i]}_k \boldsymbol{\hat{A}}^{[k]}_j$. 
The following steps outline the remaining computations:
\begin{enumerate}
\item \textbf{Calculating $\boldsymbol{B}_{ijk}$}.
    
Client $1$ calculates $\boldsymbol{B}_{211},\, \boldsymbol{B}_{221},\,\boldsymbol{B}_{231}$ as follows,
\begin{align}
\boldsymbol{B}_{211} = \begin{bmatrix}
0 & 0 & 1 \\
0 & 0 & 1 \\
0 & 0 & 1 \\
\end{bmatrix}\quad
\boldsymbol{B}_{221} = \begin{bmatrix}
0 & 0 & 0 \\
0 & 0 & 0 \\
0 & 0 & 1 \\
\end{bmatrix}\quad
\boldsymbol{B}_{231} = \begin{bmatrix}
0 & 0 & 0 \\
0 & 0 & 0 \\
0 & 0 & 0 \\
\end{bmatrix}\,.
\end{align}

Client 1 sends these matrices to Client $2$. Notice that Client $2$, by receiving $\boldsymbol{B}_{211},\, \boldsymbol{B}_{221},\,\boldsymbol{B}_{231}$, cannot reconstruct $\tilde{\bA}_1^{[1]},\, \tilde{\bA}_3^{[1]}$, and $\tilde{\bA}_1^{[3]}$ due to the low rank nature of the adjacency matrix. 
Moreover, Client 1 also prunes $\boldsymbol{B}_{ijk}$. 

Similarly Client $3$ calculates $\boldsymbol{B}_{213},\, \boldsymbol{B}_{223},\,\boldsymbol{B}_{233}$ as
\begin{align}
\boldsymbol{B}_{213} = \begin{bmatrix}
0 & 0 & 0 \\
0 & 0 & 0 \\
0 & 0 & 0 \\
\end{bmatrix}\quad
\boldsymbol{B}_{223} = \begin{bmatrix}
0 & 0 & 0 \\
0 & 0 & 0 \\
0 & 0 & 1 \\
\end{bmatrix}\quad
\boldsymbol{B}_{233} = \begin{bmatrix}
0 & 0 & 1 \\
0 & 0 & 1 \\
0 & 0 & 1 \\
\end{bmatrix}\,,
\end{align}
which are sent to Client $2$ after pruning.
A similar procedure applies for the other combinations of $\boldsymbol{B}_{ijk}$, as outlined in Algorithm 3.

\item \textbf{Calculating $[\boldsymbol{\hat{A}}^{[i]}_j]^{2}$}.

Upon receiving $\boldsymbol{B}_{211},\, \boldsymbol{B}_{221},\,\boldsymbol{B}_{231}$ and $\boldsymbol{B}_{213},\, \boldsymbol{B}_{223},\,\boldsymbol{B}_{233}$, Client $2$ can compute  $[\boldsymbol{\hat{A}}^{[2]}_1]^{2},\,[\boldsymbol{\hat{A}}^{[2]}_2]^{2},\,[\boldsymbol{\hat{A}}^{[2]}_3]^{2}$ as
\begin{align} [\boldsymbol{\hat{A}}^{[2]}_1]^{2} = (\boldsymbol{\tilde{D}}^{[2]})^{-1} 
    \left( 
    \boldsymbol{B}_{211} + \boldsymbol{B}_{212} + \boldsymbol{B}_{213}
    \right)\,,\label{eq:Ahat1}\\
    [\boldsymbol{\hat{A}}^{[2]}_2]^{2} = (\boldsymbol{\tilde{D}}^{[2]})^{-1} 
    \left( 
    \boldsymbol{B}_{221} + \boldsymbol{B}_{222} + \boldsymbol{B}_{223}
    \right)\,,\\
    [\boldsymbol{\hat{A}}^{[2]}_3]^{2} = (\boldsymbol{\tilde{D}}^{[2]})^{-1} 
    \left( 
    \boldsymbol{B}_{231} + \boldsymbol{B}_{232} + \boldsymbol{B}_{233}
    \right)\,.\label{eq:Ahat3}
\end{align}

Note that $\boldsymbol{B}_{212},\, \boldsymbol{B}_{222},\,\boldsymbol{B}_{232}$ can be computed locally in Client $2$. 
Using Eqs.~\ref{eq:Ahat1}--~\ref{eq:Ahat3}, Client  $2$  can update its local 2-hop adjacency matrix without access to the global adjacency matrix.
Clients $1$ and $3$ follow the same procedure to compute $[\boldsymbol{\hat{A}}^{[1]}_j]^{2}$ and $[\boldsymbol{\hat{A}}^{[3]}_j]^{2}$ for each $j \in \{1,2,3\}$, respectively.

\end{enumerate}

This procedure can be extended to any number of hops  $\ell$. 
At hop  $\ell$ , Client  $i$  only accesses $[\boldsymbol{\hat{A}}^{[i]}_j]^{\ell}$ for all  $j \in [K]$  and does not learn the full global matrix $[\boldsymbol{\hat{A}}]^{\ell}$. 
Moreover, if additional security is required, the summation over $\boldsymbol{B}_{ijk}$ could be performed on a secure server using homomorphic encryption.

\section{Experimental Setting} \label{app:experiments}

In this section, we provide more details about the experiments in Section~\ref{s:eval} and Appendix~\ref{app:MoreResults}.
In Table~\ref{tab:data}, statistics of the different datasets are shown.
The edge homophily ratio, measuring the fraction of edges that connect nodes with the same label, provides a measure of the homophily within the dataset.
Typically, a value above $0.5$ is considered homophilic~\citep{zheng2022graph}.
According to this rule-of-thumb, among the datasets considered in this paper, two would be considered heterophilic, i.e., Chameleon and Amazon Ratings.

Next, we provide the hyperparameters for the experiments.
In Table~\ref{tab:hyper}, we provide the step sizes $\lambda$ and $\lambda_\mathsf{s}$ for the gradient descent step during the training, the weight decay in the L2 regularization, the number of training iterations (epochs), the number of layers $L$ in the node feature embedding, the number of layers $L_\mathsf{s}$ in the \textsc{Decoupled GCN}, the dimensionality of the NSFs,  $d_\mathsf{s}$, the pruning parameter $p$, and the model architecture of the node feature and node structure feature predictors, $\boldsymbol{f}_{{\btheta}_{\msf}}$ and $\bg_{{\btheta}_{\mss}}$, respectively.

All the experiments are obtained using an Nvidia A30 with 24GB of memory.

\begin{table*} \label{ta:data_app}
\caption{Statistics of the datasets.}
\vspace{-3ex}
\label{tab:data}
\vskip 0.05in
\fontsize{8}{10}\selectfont
\begin{center}
\begin{sc}
\setlength\tabcolsep{0pt}
\begin{tabular*}{\textwidth}{@{\extracolsep{\fill}} l *{6}{c} }
\toprule
Data                    & Cora           & CiteSeer       & PubMed         & Chameleon      & Amazon Photo   & Amazon Ratings \\
\midrule
$\#$ Classes            & 7              & 6              & 3              & 5              & 8              & 5              \\
$|\mcV|$                & 2708           & 3327           & 19717          & 2277           & 7650           & 24492          \\
$|\mcE|$                & 5278           & 4676           & 44327          & 36101          & 238162         & 93050          \\
$\#$ Features           & 1433           & 3703           & 500            & 2325           & 745            & 300            \\
Edge homophily ratio         & 0.81           & 0.74           & 0.80           & 0.23           & 0.82           & 0.38           \\
\bottomrule
\end{tabular*}
\end{sc}
\end{center}
\vskip -0.1in
\end{table*}

\begin{table*} \label{ta:hyper}
\caption{Hyper-parameters of the datasets.}
\vspace{-3ex}
\label{tab:hyper}
\vskip 0.05in
\fontsize{8}{10}\selectfont
\begin{center}
\begin{sc}
\setlength\tabcolsep{0pt}
\begin{tabular*}{\textwidth}{@{\extracolsep{\fill}} l *{6}{c} }
\toprule
Data                                    & Cora           & CiteSeer       & PubMed         & Chameleon      & Amazon Photo   & Amazon Ratings \\\midrule
$\lambda$                               & 0.002          & 0.002          & 0.008          & 0.003          & 0.005          & 0.002              \\
$\lambda_{\mss}$                        & 0.002          & 0.002          & 0.008          & 0.003          & 0.005          & 0.002              \\
Weight decay                            & 0.0005         & 0.0005         & 0.001          & 0.0003         & 0.001          & 0.0002              \\
Epochs                                  & 40             & 60             & 125            & 60             & 150            & 70              \\
$L$                                     & 2              & 2              & 2              & 1              & 1              & 2          \\
$L_{\mss}$                              & 10             & 20             & 20             & 1              & 3              & 5          \\
$d_{\mss}$                              & 256            & 256            & 256            & 256            & 256            & 256          \\
$p$ &30 &30 &30 &30 &30 &30 \\
$\btheta_{\msf}$ layers   & [ 1433,64,7 ]    & [3703,64,6]    & [500,128,3]    & [2325,64,5]    & [745,256,8]    & [300,64,5]          \\
$\btheta_{\mss}$ layers   & [256,256,7]   & [256,128,64,6] & [256,128,3]    & [256,256,5]    & [256,256,8]    & [256,256,5]          \\
\bottomrule
\end{tabular*}
\end{sc}
\end{center}
\vskip -0.1in
\end{table*}

\section{Additional Results}
\label{app:MoreResults}

In this appendix, we provide additional results that we could not include in the main paper due to space constraints.
In particular, we provide results for different data partitioning methods: the Louvain algorithm, the K-means algorithm, and random partitioning.

The Louvain algorithm aims at finding communities with high link density, effectively creating subgraphs with limited interconnections.
Here, we follow~\citet{zhang_subgraph:2021} where the global graph is first divided into a number of subgraphs using the Louvain algorithm.
As we strive to have an even number of nodes among the clients, we inspect the size of the subgraphs and if any exceeds $n/K$ nodes, it is split in two.
This procedure is repeated until no subgraph has more than $n/K$ nodes.
Next, as the number of subgraphs may be larger than $K$, some subgraphs must be merged.
To this end, we sort the subgraphs in descending order with respect to their size and assign the first $K$ subgraphs to the $K$ clients. 
If there are more than $K$ subgraphs, we assign the ($K+1$)-th subgraph by iterating the $K$ clients and assigning it to the first client for which the total number of nodes after a merge would not exceed $n/K$.
This process is repeated until there are only $K$ subgraphs remaining.
This procedure results in subgraphs with approximately the same number of nodes and a low number of interconnections.

For the K-means algorithm, we follow \citet{lei:2023federated} and partition the data into $K$ clusters by proximity in the node feature space.
Similarly to the Louvain partitioning, we split any subgraph that exceeds $n/K$ nodes in two and assign it to another subgraph such that the resulting number of nodes is less than $n/K$.
Compared to the previous partitioning, the K-means approach does not consider the topology, hence, there will be more interconnections between the different subgraphs compared to the Louvain-based partitioning.
Further, as highlighted in~\citep{lei:2023federated}, as nodes with similar features tend to have the same label, this partitioning method creates a highly heterogeneous partitioning as each subgraph tends to have an over-representation of nodes of a given class.

\begin{table*}
\caption{Classification accuracies for \textsc{FedStruct} with an underlying   decoupled GCN. The results are shown for 10 clients with a 10--10--80 train-val-test split.}
\vspace{-3ex}
\label{tab:parti}
\fontsize{6}{9}\selectfont
\begin{center}
\begin{sc}
\setlength\tabcolsep{2pt}
\begin{tabular*}{\textwidth}{@{\extracolsep{\fill}}l|ccc|ccc|ccc }
\toprule
\multicolumn{1}{c}{}    & \multicolumn{3}{c}{\textbf{Cora}}                & \multicolumn{3}{c}{\textbf{Citeseer}}             & \multicolumn{3}{c}{\textbf{Pubmed}}               \\
\cmidrule(rl){2-4} \cmidrule(rl){5-7} \cmidrule(rl){8-10}
\multicolumn{1}{l}{Central GNN}     & \multicolumn{3}{c}{82.94$\pm$ 1.26}          & \multicolumn{3}{c}{69.37$\pm$ 1.07}              & \multicolumn{3}{c}{85.12$\pm$ 1.15}\\
\multicolumn{1}{l}{Central DGCN}     & \multicolumn{3}{c}{83.48 $\pm$ 1.31}          & \multicolumn{3}{c}{69.05 $\pm$ 1.20}              & \multicolumn{3}{c}{85.61 $\pm$ 0.18}\\
\multicolumn{1}{l}{Central MLP}     & \multicolumn{3}{c}{65.47$\pm$0.019}          & \multicolumn{3}{c}{63.67$\pm$ 0.81}              & \multicolumn{3}{c}{84.31$\pm$ 0.22}\\
                        & {Louvain}      & {random}       & {kmeans}       & {Louvain}      & {random}       & {kmeans}       & {Louvain}      & {random}       & {kmeans}       \\
\midrule
FedAvg GNN  
& 81.05$\pm$ 0.82 & 64.64$\pm$ 1.87 & 66.47$\pm$ 1.52 & 69.71$\pm$ 0.73 & 65.41$\pm$ 1.54 & 65.58$\pm$ 1.09 & 85.67$\pm$ 0.13 & 85.19$\pm$ 0.35 & 85.79$\pm$ 0.26  \\ 
FedSGD GNN  
& 81.22$\pm$ 0.99 & 66.00$\pm$ 1.51 & 67.57$\pm$ 1.28 & 69.25$\pm$ 1.19 & 63.38$\pm$ 0.76 & 64.64$\pm$ 1.27 & 84.87$\pm$ 0.76 & 84.66$\pm$ 0.22 & 84.85$\pm$ 0.33  \\ 
FedAvg DGCN     
& 77.49$\pm$ 3.06 & 62.60$\pm$ 3.59 & 66.18$\pm$ 1.35 & 69.88$\pm$ 0.90 & 65.04$\pm$ 1.96 & 67.10$\pm$ 1.02 & 84.36$\pm$ 0.19 & 84.42$\pm$ 0.26 & 85.41$\pm$ 0.32  \\ 
FedSGD DGCN     
& 82.05$\pm$ 0.92 & 67.15$\pm$ 2.52 & 69.85$\pm$ 1.06 & 68.51$\pm$ 1.38 & 63.23$\pm$ 0.85 & 65.39$\pm$ 1.04 & 84.74$\pm$ 0.17 & 83.92$\pm$ 0.40 & 84.68$\pm$ 0.32  \\ 
FedAvg MLP  
& 68.59$\pm$ 1.99 & 65.78$\pm$ 1.84 & 64.81$\pm$ 1.83 & 66.03$\pm$ 0.80 & 64.41$\pm$ 0.75 & 64.63$\pm$ 1.09 & 85.81$\pm$ 0.34 & 84.71$\pm$ 0.26 & 84.75$\pm$ 0.43  \\ 
FedSGD MLP  
& 65.41$\pm$ 1.33 & 65.67$\pm$ 1.62 & 64.42$\pm$ 1.60 & 64.87$\pm$ 1.04 & 63.68$\pm$ 0.74 & 64.11$\pm$ 0.80 & 84.05$\pm$ 0.41 & 84.29$\pm$ 0.31 & 84.28$\pm$ 0.41  \\ 
Fedsage+    
& 81.06$\pm$ 0.89 & 66.33$\pm$ 1.69 & 67.35$\pm$ 1.18 & 69.21$\pm$ 1.17 & 63.93$\pm$ 0.97 & 64.33$\pm$ 0.79 & 84.31$\pm$ 1.62 & 84.64$\pm$ 0.37 & 84.91$\pm$ 0.30  \\ 
\textsc{FedPub} 
& 78.75$\pm$ 1.33 & 61.82$\pm$ 1.84 & 64.38$\pm$ 1.50 & 69.20$\pm$ 1.09 & 62.91$\pm$ 0.76 & 62.56$\pm$ 1.29 & 85.16$\pm$ 0.29 & 82.39$\pm$ 0.41 & 83.85$\pm$ 0.64  \\ 
\textsc{FedGCN}-1hop    
& 82.24$\pm$ 1.13 & 82.91$\pm$ 1.24 & 81.60$\pm$ 0.95 & 70.24$\pm$ 0.81 & 70.63$\pm$ 0.87 & 69.68$\pm$ 1.14 & 86.72$\pm$ 0.17 & 86.05$\pm$ 0.65 & 85.84$\pm$ 0.53  \\ 
\textsc{FedGCN}-2hop    
& 81.88$\pm$ 1.10 & 82.90$\pm$ 0.95 & 82.00$\pm$ 0.76 & 70.45$\pm$ 0.85 & 70.49$\pm$ 1.03 & 69.29$\pm$ 1.45 & 86.62$\pm$ 0.20 & 85.73$\pm$ 0.77 & 85.83$\pm$ 0.48  \\ 
\midrule
FedStruct (\textsc{Deg})    
& 82.18$\pm$ 0.79 & 69.89$\pm$ 1.85 & 71.31$\pm$ 1.42 & 68.29$\pm$ 1.35 & 63.54$\pm$ 0.64 & 65.84$\pm$ 1.36 & 84.65$\pm$ 0.17 & 84.01$\pm$ 0.38 & 84.65$\pm$ 0.45  \\ 
\textsc{FedStruct} (\textsc{Fed$\star$})    
& 81.27$\pm$ 1.13 & 69.61$\pm$ 1.87 & 70.89$\pm$ 1.21 & 68.29$\pm$ 1.47 & 63.38$\pm$ 0.90 & 65.75$\pm$ 1.15 & 84.71$\pm$ 0.13 & 84.02$\pm$ 0.35 & 84.77$\pm$ 0.40  \\ 
\textsc{FedStruct} (\textsc{H2V}) 
& 81.30$\pm$ 0.97 & 79.27$\pm$ 0.90 & 78.36$\pm$ 0.82 & 68.65$\pm$ 1.49 & 65.43$\pm$ 0.98 & 66.44$\pm$ 1.07 & 82.23$\pm$ 0.33 & 85.02$\pm$ 0.43 & 84.81$\pm$ 0.42  \\ 
FedStruct (\textsc{H2V})-F    
& 81.02$\pm$ 0.60 & 79.88$\pm$ 0.92 & 78.79$\pm$ 0.96 & 68.35$\pm$ 1.45 & 65.93$\pm$ 0.81 & 66.84$\pm$ 0.55 & 83.03$\pm$ 0.41 & 85.79$\pm$ 0.60 & 85.47$\pm$ 0.57  \\ 
\midrule
Local GNN   
& 75.36$\pm$ 1.43 & 39.24$\pm$ 1.64 & 47.08$\pm$ 2.55 & 59.15$\pm$ 1.52 & 39.88$\pm$ 1.62 & 50.90$\pm$ 5.66 & 76.40$\pm$ 3.49 & 75.27$\pm$ 0.59 & 75.00$\pm$ 2.83  \\ 
Local DGCN  
& 76.78$\pm$ 1.18 & 41.35$\pm$ 1.33 & 51.17$\pm$ 1.98 & 59.62$\pm$ 1.45 & 41.99$\pm$ 1.35 & 53.04$\pm$ 5.33 & 81.54$\pm$ 0.47 & 76.65$\pm$ 0.48 & 80.10$\pm$ 0.49  \\ 
Local MLP   
& 67.26$\pm$ 2.05 & 40.34$\pm$ 1.64 & 46.44$\pm$ 2.15 & 54.62$\pm$ 1.64 & 41.28$\pm$ 0.89 & 52.02$\pm$ 5.49 & 81.30$\pm$ 0.44 & 75.98$\pm$ 0.53 & 78.81$\pm$ 0.38  \\ 
\bottomrule
\end{tabular*}


\setlength\tabcolsep{0pt}
\begin{tabular*}{\textwidth}{@{\extracolsep{\fill}} l|ccc|ccc|ccc }
\toprule
\multicolumn{1}{c}{}    & \multicolumn{3}{c}{\textbf{Chameleon}}           & \multicolumn{3}{c}{\textbf{Amazon Photo}}        & \multicolumn{3}{c}{\textbf{Amazon Ratings}}     \\
\cmidrule(rl){2-4} \cmidrule(rl){5-7}  \cmidrule(rl){8-10}
                        & {Louvain}      & {random}       & {kmeans}       & {Louvain}      & {random}       & {kmeans}       & {Louvain}      & {random}       & {kmeans}      \\
\midrule
\multicolumn{1}{l}{Central GNN} 
& \multicolumn{3}{c}{54.38$\pm$ 1.96}          
& \multicolumn{3}{c}{94.10$\pm$ 0.30}              
& \multicolumn{3}{c}{41.42$\pm$ 0.80}     \\
\multicolumn{1}{l}{Central DGCN}     
& \multicolumn{3}{c}{55.05 $\pm$ 1.56}          
& \multicolumn{3}{c}{92.77 $\pm$ 0.38}              
& \multicolumn{3}{c}{41.05 $\pm$ 0.41}\\
\multicolumn{1}{l}{Central MLP}     
& \multicolumn{3}{c}{31.10$\pm$ 1.71}          
& \multicolumn{3}{c}{87.69$\pm$ 1.37}              
& \multicolumn{3}{c}{37.74$\pm$ 0.39}     \\
\midrule
FedAvg GNN
& 45.87$\pm$ 1.81 & 33.06$\pm$ 1.84 & 37.74$\pm$ 2.96 & 92.45$\pm$ 0.80 & 90.25$\pm$ 0.42 & 89.32$\pm$ 0.73 & 40.09$\pm$ 0.48 & 36.31$\pm$ 0.79 & 37.35$\pm$ 0.70  \\ 
FedSGD GNN
& 47.95$\pm$ 2.01 & 36.80$\pm$ 1.70 & 38.35$\pm$ 2.04 & 93.91$\pm$ 0.40 & 91.55$\pm$ 0.34 & 91.41$\pm$ 0.37 & 40.93$\pm$ 0.88 & 35.96$\pm$ 0.46 & 37.40$\pm$ 0.32  \\ 
FedAvg DGCN 
& 42.01$\pm$ 1.47 & 31.08$\pm$ 1.24 & 34.12$\pm$ 3.40 & 90.87$\pm$ 0.34 & 88.42$\pm$ 0.60 & 87.76$\pm$ 0.52 & 40.14$\pm$ 0.60 & 37.52$\pm$ 0.37 & 37.96$\pm$ 0.36  \\ 
FedSGD DGCN 
& 48.01$\pm$ 1.43 & 34.78$\pm$ 1.39 & 35.73$\pm$ 2.02 & 91.97$\pm$ 0.39 & 89.85$\pm$ 0.39 & 90.06$\pm$ 0.32 & 40.69$\pm$ 0.49 & 37.92$\pm$ 0.39 & 38.22$\pm$ 0.33  \\ 
FedAvg MLP
& 32.63$\pm$ 2.31 & 29.49$\pm$ 1.37 & 28.43$\pm$ 1.69 & 90.16$\pm$ 1.15 & 85.39$\pm$ 1.26 & 85.87$\pm$ 2.50 & 37.84$\pm$ 0.60 & 37.23$\pm$ 0.38 & 37.40$\pm$ 0.19  \\ 
FedSGD MLP
& 31.91$\pm$ 2.14 & 33.13$\pm$ 1.24 & 32.46$\pm$ 1.86 & 87.66$\pm$ 0.86 & 88.39$\pm$ 1.07 & 87.99$\pm$ 1.35 & 37.42$\pm$ 0.40 & 37.65$\pm$ 0.38 & 37.29$\pm$ 0.51  \\ 
Fedsage+
& 47.70$\pm$ 1.68 & 36.32$\pm$ 1.59 & 37.73$\pm$ 1.58 & 93.82$\pm$ 0.34 & 91.33$\pm$ 0.47 & 91.39$\pm$ 0.40 & 40.54$\pm$ 0.71 & 35.85$\pm$ 0.39 & 37.55$\pm$ 0.38  \\ 
\textsc{FedPub}
& 47.80$\pm$ 1.65 & 33.31$\pm$ 1.37 & 34.82$\pm$ 2.07 & 91.51$\pm$ 0.32 & 88.05$\pm$ 0.68 & 88.84$\pm$ 0.58 & 40.37$\pm$ 0.66 & 35.72$\pm$ 0.60 & 37.00$\pm$ 0.62  \\ 
\textsc{FedGCN}-1hop
& 52.95$\pm$ 3.33 & 48.23$\pm$ 1.81 & 48.35$\pm$ 1.68 & 93.28$\pm$ 0.59 & 93.60$\pm$ 0.28 & 93.62$\pm$ 0.31 & 41.13$\pm$ 0.52 & 40.76$\pm$ 0.35 & 40.56$\pm$ 0.23  \\ 
\textsc{FedGCN}-2hop
& 48.87$\pm$ 2.00 & 48.77$\pm$ 1.73 & 49.14$\pm$ 1.98 & 93.54$\pm$ 0.36 & 93.72$\pm$ 0.41 & 93.59$\pm$ 0.31 & 41.19$\pm$ 0.56 & 40.78$\pm$ 0.56 & 40.82$\pm$ 0.42  \\ 
\midrule
FedStruct (\textsc{Deg})
& 49.66$\pm$ 1.61 & 41.82$\pm$ 1.78 & 42.16$\pm$ 1.86 & 92.05$\pm$ 0.44 & 89.72$\pm$ 0.43 & 90.17$\pm$ 0.28 & 40.93$\pm$ 0.29 & 38.67$\pm$ 0.66 & 38.78$\pm$ 0.35  \\ 
\textsc{FedStruct} (\textsc{Fed$\star$})
& 49.36$\pm$ 1.89 & 41.89$\pm$ 1.67 & 42.54$\pm$ 1.56 & 92.02$\pm$ 0.41 & 89.78$\pm$ 0.38 & 90.07$\pm$ 0.42 & 41.03$\pm$ 0.37 & 38.65$\pm$ 0.44 & 38.89$\pm$ 0.49  \\ 
\textsc{FedStruct} (\textsc{H2V})
& 55.74$\pm$ 1.09 & 52.60$\pm$ 1.25 & 53.40$\pm$ 1.69 & 91.21$\pm$ 0.42 & 90.93$\pm$ 0.27 & 91.42$\pm$ 0.72 & 40.61$\pm$ 0.51 & 40.97$\pm$ 0.64 & 41.06$\pm$ 0.42  \\ 
FedStruct (\textsc{H2V})-F
& 55.75$\pm$ 1.46 & 53.09$\pm$ 1.85 & 52.95$\pm$ 1.72 & 90.40$\pm$ 0.58 & 90.74$\pm$ 0.29 & 91.11$\pm$ 0.61 & 40.86$\pm$ 0.56 & 41.07$\pm$ 0.56 & 41.23$\pm$ 0.43  \\ 
\midrule
Local GNN
& 46.71$\pm$ 2.42 & 29.60$\pm$ 1.25 & 30.84$\pm$ 2.60 & 91.24$\pm$ 0.73 & 77.12$\pm$ 1.75 & 79.48$\pm$ 1.59 & 40.23$\pm$ 0.71 & 32.80$\pm$ 0.43 & 35.73$\pm$ 0.41  \\ 
Local DGCN 
& 46.23$\pm$ 1.70 & 28.55$\pm$ 1.08 & 30.12$\pm$ 3.01 & 90.64$\pm$ 0.45 & 81.64$\pm$ 0.95 & 83.09$\pm$ 0.83 & 41.30$\pm$ 0.58 & 34.16$\pm$ 0.25 & 36.32$\pm$ 0.43  \\ 
Local MLP 
& 31.44$\pm$ 2.10 & 21.17$\pm$ 0.65 & 22.70$\pm$ 2.42 & 89.27$\pm$ 0.75 & 69.89$\pm$ 1.34 & 76.05$\pm$ 0.80 & 37.67$\pm$ 0.67 & 33.23$\pm$ 0.39 & 35.25$\pm$ 0.38  \\ 

\bottomrule
\end{tabular*}
\end{sc}
\end{center}
\vskip -0.1in
\end{table*}

Finally, we consider a random partitioning where each node is allocated to a client uniformly at random. 
As this partitioning does not take into account the topology or the features, we end up with a large number of interconnections where each subgraph has the same distribution over class labels, i.e., the data is homogeneous across clients.
Notably, given the large number of interconnections, this setting arguably constitutes the most challenging scenario in subgraph FL as it is paramount to exploit the interconnections to achieve good performance.

\subsection{Performance under different partitioning methods}
In Table~\ref{tab:parti}, we show the performance of \textsc{FedStruct} over each of the different partitionings for $10$ clients and the train-val-test split according to 10\%-10\%-80\%. 
We use this split to focus on a challenging semi-supervised scenario.
Furthermore, each of the results is reported using the mean and standard deviation obtained from $10$ independent runs.

First, as the central version of the training with GNN does not depend on the partitioning, it is the same as in Table~\ref{tab:results1}. We also report the performance of central training using \textsc{Decoupled GCN} and when employing an MLP. By comparing the performance of \textsc{Central GNN} with \textsc{Central MLP}, one may infer the gains of accounting for the spatial structure within the data.
As seen in the table, exploiting the graph structure brings the largest benefits for Cora and Chameleon.
Moreover, it can be seen for all datasets GNN and \textsc{Decoupled GCN} yield similar results for centralized training.

In similar spirit, the performance gap between \textsc{Local GNN} (\textsc{Local MLP}) and \textsc{Central GNN} (\textsc{Central MLP}) indicates the potential gains of employing collaborative learning between the clients.
Notably, this gap depends on the partitioning.
From Table~\ref{tab:parti}, it can be seen that the gap is the largest for random partitioning followed by the K-means algorithm for all datasets.
This is expected, as local training suffers from a large number of interconnections. 
For example, in Cora, the gap is 42.83\%, 35.26\%, and 6.54\% for random, K-means, and Louvain partitioning, respectively.
Furthermore, \textsc{Local GNN} and \textsc{Local DCGN} performs similarly.

Considering \textsc{FedSGD GNN} and \textsc{FedSGD DGCN}, it can be seen that all scenarios benefit from collaborative learning. 
For the Louvain partitioning, due to its community-based partitioning with the low number of interconnections, \textsc{FedSGD GNN} and \textsc{FedSGD DGCN} performs well for all datasets.
For K-means and random partitioning, they performs worse, with the worst performance seen for Cora, Chameleon, and Amazon Ratings.
\textsc{FedSage+} achieves similar performance to \textsc{FedSGD GNN} whereas \textsc{FedPub} is inferior in most settings.
The \textsc{fedsage+ ideal} scheme, incorporating the knowledge of the node IDs of missing neighbors in other clients, is based on a mended graph with flawless in-painting of missing one-hop neighbors.
Hence, this scheme serves as an upper bound on techniques based on in-painting of one-hop neighbors, such as \textsc{FedSage+} and \textsc{FedNI}.
Notably, this scheme completely violates privacy as the node features of missing clients cannot be shared between clients.
From Table~\ref{tab:parti}, we see that this scheme is robust to different partitionings by offering consistent performance close to the \textsc{Central GNN}.

We consider \textsc{FedStruct} with three different methods to generate NSFs: one-hot degree vectors (\textsc{Deg}), \textsc{Fed$\star$}, and our task-dependent method \textsc{Hop2Vec} (\textsc{H2V}), see Appendix~\ref{app:NSFs} for information. The hyper-parameters are chosen as in Table~\ref{tab:hyper}.
In Table~\ref{tab:parti}, it can be seen that \textsc{DEG}, only being able to capture the local structure, perform worse than the methods able to capture more global properties of the graph.
This is especially true for the K-means and random partitionings.
\textsc{Fed$\star$} achieves performance close to \textsc{Deg}, likely due to the random walk approach not being informative, hence, capturing essentially the same information as \textsc{Deg}, see Appendix~\ref{app:NSFs}.
Moreover, it can be seen that \textsc{Hop2Vec} performs close to \textsc{fedsage+ ideal} for all scenarios and is sometimes superior, e.g., on the Chameleon datasets. 
This highlights the importance of going beyond the one-hop neighbors for some datasets, something that is not possible in \textsc{fedsage+ ideal}.

\subsection{Impact of Pruning}
To emphasize the strong performance of \textsc{FedStruct} and its robustness to structural noise, we compare it with a version of \textsc{FedStruct} that does not use pruning (denoted as -F) in \cref{tab:pruning_results}. As shown in \cref{tab:pruning_results}, \textsc{FedStruct}’s performance remains very close to the version without pruning across all datasets, demonstrating its ability to handle the removal of structural information effectively.
\begin{table*}[!t]
\caption{Classification accuracy of \textsc{FedStruct} with pruning (pruning parameter $p=30$) and without pruning. The results for pruning are presented in bold and the result without the pruning are shown with -F. 
The results are shown for 10 clients with a 10--10--80 train-val-test split.}
\label{tab:pruning_results}
\fontsize{5.5}{8}\selectfont
\begin{center}
\begin{sc}
\setlength\tabcolsep{0pt}
\begin{tabular*}{\textwidth}{@{\extracolsep{\fill}} l *{9}{c} }
\toprule
\multicolumn{1}{c}{}    & \multicolumn{3}{c}{\textbf{Cora}}                & \multicolumn{3}{c}{\textbf{Citeseer}}             & \multicolumn{3}{c}{\textbf{Pubmed}}               \\
\cmidrule(rl){2-4} \cmidrule(rl){5-7} \cmidrule(rl){8-10}
\multicolumn{1}{l}{Central GNN}     & \multicolumn{3}{c}{82.94$\pm$ 1.26}          & \multicolumn{3}{c}{69.37$\pm$ 1.07}              & \multicolumn{3}{c}{85.12$\pm$ 1.15}\\
\multicolumn{1}{l}{Central MLP}     & \multicolumn{3}{c}{65.47$\pm$0.019}          & \multicolumn{3}{c}{63.67$\pm$ 0.81}              & \multicolumn{3}{c}{84.31$\pm$ 0.22}\\
                        & {Louvain}      & {random}       & {kmeans}       & {Louvain}      & {random}       & {kmeans}       & {Louvain}      & {random}       & {kmeans}       \\
\midrule

\textbf{\textsc{FedStruct} (\textsc{Deg})}
&\textbf{82.18$\pm$ 0.79} & \textbf{69.89$\pm$ 1.85} & \textbf{71.31$\pm$ 1.42} & \textbf{68.29$\pm$ 1.35} & \textbf{63.54$\pm$ 0.64} & \textbf{65.84$\pm$ 1.36} & \textbf{84.65$\pm$ 0.17} & \textbf{84.01$\pm$ 0.38} & \textbf{84.65$\pm$ 0.45}  \\ 
\textbf{\textsc{FedStruct} (\textsc{Fed$\star$})}
&\textbf{81.27$\pm$ 1.13} & \textbf{69.61$\pm$ 1.87} & \textbf{70.89$\pm$ 1.21} & \textbf{68.29$\pm$ 1.47} & \textbf{63.38$\pm$ 0.90} & \textbf{65.75$\pm$ 1.15} & \textbf{84.71$\pm$ 0.13} & \textbf{84.02$\pm$ 0.35} & \textbf{84.77$\pm$ 0.40}  \\ 
\textbf{\textsc{FedStruct} (\textsc{H2V})}
&\textbf{81.30$\pm$ 0.97} & \textbf{79.27$\pm$ 0.90} & \textbf{78.36$\pm$ 0.82} & \textbf{68.65$\pm$ 1.49} & \textbf{65.43$\pm$ 0.98} & \textbf{66.44$\pm$ 1.07} & \textbf{82.23$\pm$ 0.33} & \textbf{85.02$\pm$ 0.43} & \textbf{84.81$\pm$ 0.42}  \\ 
\midrule
\textsc{FedStruct} (\textsc{Deg})-F
& 81.90$\pm$ 0.94 & 68.94$\pm$ 2.66 & 71.09$\pm$ 1.10 & 68.39$\pm$ 1.52 & 63.45$\pm$ 0.67 & 65.55$\pm$ 0.98 & 84.75$\pm$ 0.17 & 84.11$\pm$ 0.33 & 84.75$\pm$ 0.37  \\ 
\textsc{FedStruct} (\textsc{Fed$\star$})-F
& 81.68$\pm$ 0.90 & 68.60$\pm$ 2.85 & 70.61$\pm$ 1.69 & 68.39$\pm$ 1.48 & 63.74$\pm$ 0.85 & 65.45$\pm$ 1.21 & 84.77$\pm$ 0.18 & 84.10$\pm$ 0.33 & 84.80$\pm$ 0.34  \\ 
\textsc{FedStruct} (\textsc{H2V})-F
& 81.02$\pm$ 0.60 & 79.88$\pm$ 0.92 & 78.79$\pm$ 0.96 & 68.35$\pm$ 1.45 & 65.93$\pm$ 0.81 & 66.84$\pm$ 0.55 & 83.03$\pm$ 0.41 & 85.79$\pm$ 0.60 & 85.47$\pm$ 0.57  \\ 

\bottomrule
\end{tabular*}

\setlength\tabcolsep{0pt}
\begin{tabular*}{\textwidth}{@{\extracolsep{\fill}} l *{9}{c} }
\toprule
\multicolumn{1}{c}{}    & \multicolumn{3}{c}{\textbf{Chameleon}}           & \multicolumn{3}{c}{\textbf{Amazon Photo}}        & \multicolumn{3}{c}{\textbf{Amazon Ratings}}     \\
\cmidrule(rl){2-4} \cmidrule(rl){5-7}  \cmidrule(rl){8-10}
                        & {Louvain}      & {random}       & {kmeans}       & {Louvain}      & {random}       & {kmeans}       & {Louvain}      & {random}       & {kmeans}      \\
\midrule
\multicolumn{1}{l}{Central GNN} 
& \multicolumn{3}{c}{54.38$\pm$ 1.96}          
& \multicolumn{3}{c}{94.10$\pm$ 0.30}              
& \multicolumn{3}{c}{41.42$\pm$ 0.80}     \\
\multicolumn{1}{l}{Central MLP}     
& \multicolumn{3}{c}{31.10$\pm$ 1.71}          
& \multicolumn{3}{c}{87.69$\pm$ 1.37}              
& \multicolumn{3}{c}{37.74$\pm$ 0.39}     \\
\midrule
\textbf{\textsc{FedStruct} (\textsc{Deg}) }
&\textbf{49.66$\pm$ 1.61} & \textbf{41.82$\pm$ 1.78} & \textbf{42.16$\pm$ 1.86} & \textbf{92.05$\pm$ 0.44} & \textbf{89.72$\pm$ 0.43} & \textbf{90.17$\pm$ 0.28} & \textbf{40.93$\pm$ 0.29} & \textbf{38.67$\pm$ 0.66} & \textbf{38.78$\pm$ 0.35}  \\ 
\textbf{\textsc{FedStruct} (\textsc{Fed$\star$})}
&\textbf{49.36$\pm$ 1.89} & \textbf{41.89$\pm$ 1.67} & \textbf{42.54$\pm$ 1.56} & \textbf{92.02$\pm$ 0.41} & \textbf{89.78$\pm$ 0.38} & \textbf{90.07$\pm$ 0.42} & \textbf{41.03$\pm$ 0.37} & \textbf{38.65$\pm$ 0.44} & \textbf{38.89$\pm$ 0.49}  \\ 
\textbf{\textsc{FedStruct} (\textsc{H2V})}
&\textbf{55.74$\pm$ 1.09} & \textbf{52.60$\pm$ 1.25} & \textbf{53.40$\pm$ 1.69} & \textbf{91.21$\pm$ 0.42} & \textbf{90.93$\pm$ 0.27} & \textbf{91.42$\pm$ 0.72} & \textbf{40.61$\pm$ 0.51} & \textbf{40.97$\pm$ 0.64} & \textbf{41.06$\pm$ 0.42}  \\ 
\midrule
\textsc{FedStruct} (\textsc{Deg})-F
& 49.80$\pm$ 1.54 & 41.94$\pm$ 1.58 & 42.23$\pm$ 1.84 & 92.07$\pm$ 0.35 & 90.31$\pm$ 0.41 & 90.68$\pm$ 0.34 & 40.95$\pm$ 0.37 & 38.58$\pm$ 0.53 & 38.88$\pm$ 0.43  \\ 
\textsc{FedStruct} (\textsc{Fed$\star$})-F
& 49.23$\pm$ 1.61 & 41.60$\pm$ 1.80 & 42.45$\pm$ 2.04 & 92.00$\pm$ 0.45 & 90.12$\pm$ 0.48 & 90.64$\pm$ 0.35 & 40.98$\pm$ 0.32 & 38.67$\pm$ 0.60 & 38.97$\pm$ 0.52  \\ 
\textsc{FedStruct} (\textsc{H2V})-F
& 55.75$\pm$ 1.46 & 53.09$\pm$ 1.85 & 52.95$\pm$ 1.72 & 90.40$\pm$ 0.58 & 90.74$\pm$ 0.29 & 91.11$\pm$ 0.61 & 40.86$\pm$ 0.56 & 41.07$\pm$ 0.56 & 41.23$\pm$ 0.43  \\ 
\bottomrule
\end{tabular*}
\end{sc}
\end{center}
\vspace{-3ex}
\end{table*}

\section{Discussion}
\subsection{Privacy}
\label{app:privacy}

Previous frameworks for subgraph FL \citep{zhang_subgraph:2021,peng:2022fedni,lei:2023federated,zhang:2022hetero,liu:2023crosslink,zhang2024deep,chen:2021fedgraph,du:2022federated} require the sharing of either original or generated node features and/or embeddings among clients.
In stark contrast, our proposed framework, \textsc{FedStruct}, eliminates this need, sharing less sensitive information. 

As outlined in Section~\ref{sFed-SD-v2}, \textsc{FedStruct} requires some sharing of structural information between clients.
We begin our discussion by mounting a potential attack to show that some information may be revealed from other clients.
Thereafter, we discuss a strategy to mitigate such attacks by refraining from sharing individual information about nodes.
Specifically, we present a slightly different version of \textsc{FedStruct} that relies on sharing aggregated quantities between clients without affecting the performance of the original version.

In \textsc{FedStruct} using task-agnostic NSFs, each client is provided with $\bS$, i.e., the NSFs of all clients. 
Consider an honest-but-curious client $i$ with NSFs $\lbrace \bs_u: u\in \mcV_i \rbrace$.
Client $i$ may target client $j$ by identifying NSFs within client $j$ that closely match some of its own NSFs.
By this simple procedure, client $i$ may compare the topology of the matched local nodes and infer the topology of some of the nodes within $j$.
Furthermore, as $\bm{\theta}_s$ is shared among all clients, client $i$ would potentially be able to guess the labels of the identified nodes within client $j$. 
Given a node's label and topology, it may be further possible to infer something about the node features.
Using task-dependent NSF, i.e., \textsc{Hop2Vec}, the situation is similar.

Next, we discuss how to alter \textsc{FedStruct} to prevent reconstruction of the local NSFs, therefore mitigating the aforementioned attack, starting from the task-agnostic NSF generators.
From~\cref{th:trainingsFEd2}, the summation over $\mcV$ may be split into a summation over clients.
The prediction and local gradients for client $i$ may then be written as
\begin{align}
    \hat{\by}_{v} &= \softmax\Big(
    \sum_{j\in [K] }\sum_{u \in \mcV_j}{\barA_{vu}\bg_{\btheta_{\mss}}(\bs_u)}
    + f_{\btheta_{\msf}}(v)
    \Big)\,, \quad v\in\tmcV_i,  \label{eq:DGCN_y_app} \\
     \fp{    \mcL_i(\btheta)     }{\theta_{\mss,q}} 
    &= \sum_{v \in \tmcV_i}\sum_{j\in [K] } {\sum_{u\in\mcV_j} \barA_{vu} \fp{\bg_{\btheta_{\mss}}(\bs_u)}{\theta_{\mss,q}} (\hat{\by}_v - \by_v)}, \quad\forall q \in  [ |\btheta_s | ] .\label{app:eq_whatever}
\end{align}
Notably, client $i$ may evaluate its local gradient provided access to 
\begin{align}
    \bar{\bm{s}}^{(j)}_v &= \sum_{u \in \mcV_j}{\barA_{vu}\bg_{\btheta_{\mss}}(\bs_u)} , \quad \forall v\in\tmcV_i \label{eq:app_q1}\\
    \tilde{\bm{s}}^{(j)}_{vq} &= \sum_{u\in\mcV_j} \barA_{vu} \fp{\bg_{\btheta_{\mss}}(\bs_u)}{\theta_{\mss,q}} , \quad\forall q \in  [ |\btheta_s | ] , \forall v\in\tmcV_i.\label{eq:app_q2}
\end{align}

for each client $j\in  [K] $.
For client $j\neq i$ to be able to evaluate these quantities, it needs access to $\barA_{vu}$ for all $v\in\tmcV_i$ and $u\in \tmcV$, i.e., the weighted sum of the $\ell$-hop paths, $\ell\in [ L ]$, from nodes in client $i$ to nodes in client $j$.
Following Appendix~\ref{app:schemes}, client $j$ has access to $a_{vu}$ for $v\in\tmcV_j$ and $u\in\mcV$.
Notably, $a_{vu} \neq A_{uv}$ due to different degree normalizations.
Hence, to obtain the required quantitites in~Eq.~\ref{eq:app_q1} and Eq.~\ref{eq:app_q2}, client $i$ may share $\lbrace \barA_{vu}: v\in \tmcV_i, u\in\mcV_j \rbrace$ with client $j\in [K] $ after the algorithm in Appendix~\ref{app:schemes} has finished.

Compared to \textsc{FedStruct}, the gradients of the above method cannot be calculated locally but bring the benefit of clients only knowing their own local NSFs rather than the individual NSFs of others.
Because of this, the training has to be done in two phases for each iteration where phase 1 accounts for each client $i\in [K] $ to collect the quantities in~Eq.~\ref{eq:app_q1} and Eq.~\ref{eq:app_q2} from all other clients and where each client computes the local gradients and shares these with the server during phase 2. 
Note that the communication complexity in each training iteration does not increase as Eq.~\ref{eq:app_q1} and Eq.~\ref{eq:app_q2} amounts to sharing $\vert \tmcV_i \vert c $ and $\vert \bm{\theta}_s \vert c$ parameters, respectively, where $c$ is the number of class labels.
Hence, in total, each client $i$ must share $(\vert \tmcV\setminus \tmcV_i\vert+\vert\bm{\theta}_s\vert)c$ parameters in each iteration with the other clients.
Note that this sharing can be made either via the server or via peer-to-peer links.

Although a formal privacy analysis seems formidable, we may follow an approach similar to~\citep{lei:2023federated} to argue around the difficulty of perfectly reconstructing the local NSFs.
To this end, we take a conservative approach and consider honest-but-curious clients where all but one client is colluding. 
Hence, we may consider two clients where client 1 is the attacker and client 2 is the target.
In each iteration, client 1 receives $\tilde{\bm{s}}_v^{(2)}\in \mathbb{R}^{c}$ and $\tilde{\bm{s}}^{(2)}_{vq} \in\mathbb{R}^c$  for $v\in\tmcV_1$ and $q\in [ \vert \bm{\theta}_2 \vert ]$.
To evaluate $\tilde{\bm{s}}_v^{(2)}$, client 2 utilizes $\norm{\bm{a}_{v}}_0$ $d_s$-dimensional NSFs where the L0-norm counts the number of non-zero elements.
Hence, for each received aggregate $\tilde{\bm{s}}_v^{(2)}$, client $1$ obtains  $c$ observations from $\norm{\bm{a}_{v}}_0 d_s$ unknowns.
In total, client $1$ will collect $\vert \tmcV_1 \vert$ such observations, i.e., it observes $ c \vert \tmcV_1 \vert$ parameters.
Assuming a, for client $2$, the worst-case scenario with all $\bm{a}_v$ being non-zero in the same locations with $m$ non-zero elements, client $2$ will use the same inputs for each observation in client $1$. 
Hence, if $n d_s > c \vert \tmcV_1 \vert$, client $1$ is unable to recover individual NSFs from client $2$. 
Notably, client $2$ has full control as it knows $m$ and can count the number of queries from client $1$ and refuse to answer if it exceeds $md_s/c$.
Similarly, to evaluate $\tilde{\bm{s}}^{(2)}_{vq}$, client 2 utilizes $\norm{\bm{a}_{v}}_0$ $d_s$-dimensional inputs to compute a $c$-dimensional output.
Hence, client $2$ will share $ \vert \bm{\theta}_s \vert c $ parameters, computed from $\norm{\bm{a}_{v}}_0 d_s$ parameters $\vert \tmcV_1 \vert$ times.
Hence, again considering the worst-case view, if $m d_s > c \vert\bm{\theta}_s\vert \vert\tmcV_1\vert$, client $1$ cannot reconstruct the local gradient.
Again, client $2$ may monitor the number of queries such that the condition is not violated.

When using \textsc{Hop2Vec} as the NSF generator, we no longer optimize over $\bm{\theta}_s$ but rather over the NSFs $\bS$ directly.
The problem we face in \textsc{FedStruct} is that each client has access to $\bm{S}$.
To alleviate this issue, we use Theorem~\ref{th:v2_grad_rSNF} to split the summation over $\mcV$ to obtain
\begin{align}
    \bar{\bm{s}}^{(j)}_v &= \sum_{u \in \mcV_j}\barA_{vu}\bs_u , \quad \forall v\in\tmcV_i \label{eq:app_q3}.
\end{align}
Furthermore, the local gradients for the NSFs in client $i$ are given as
\begin{equation}
    \nabla_{\bS} \mcL_i(\btheta, \bS) =  \sum_{v \in \tmcV_i}{   \BarA_{v,:} (\hat{\by}_v - \by_v)\transpose  }.\, \label{eq:app_q4}  
\end{equation}
We note that~Eq.~\ref{eq:app_q3} is required to compute the local predictions on $\tmcV_i$, and hence to obtain~Eq.~\ref{eq:app_q4}.
In this version of \textsc{FedStruct}, the training procedure will be divided into three phases.
Phase 1 amounts to each client evaluating~Eq.~\ref{eq:app_q3} for all other clients and sharing the results. 
This results in the sharing of $\vert \tmcV\setminus \tmcV_i\vert c$ in total for each client.
Phase 2 amounts to each client evaluating its local gradient in~Eq.~\ref{eq:app_q4} and sharing it with the server, similar to the original \textsc{FedStruct}.
Next, the server aggregates the local gradients over $\bS$ and partitions the result with respect to the entries residing in each client (the server must know the unique node-IDs in the clients) and returns only the NSF gradients corresponding to the nodes residing in each client. 
Using this procedure, clients will only have information about the NSFs pertaining to their local nodes.

Next, we again consider honest-but-curious clients with all but one colluding against a single target.
We denote the colluding clients as client $1$ and the target as client $2$.
Client $1$ observes~Eq.~\ref{eq:app_q3} which constitutes $c$ observations from $\norm{\bm{a}_v}_0$ $c$-dimensional inputs.
Assuming a worst-case scenario as above with all $\bm{a}_v$ having $m$ non-zero entries in the same locations, client $1$ observes in total $c \vert \tmcV_1 \vert $ parameters from $m c$ inputs.
Hence, to prevent client $1$ from perfectly reconstructing individual NSFs, client $2$ must ensure that $m > \vert \tmcV_1 \vert$.
Again, client $2$ has full control over this by counting the number of queries ensuring it is below $m$.

To summarize, we are able to alter \textsc{FedStruct} to reveal less information about individual nodes without impacting the performance. 
Further, each client may prevent perfect reconstruction from being possible by limiting the amount of information that is shared with other clients.
To go beyond perfect reconstruction and understand the risk of approximate reconstruction seems formidable.

\subsection{Heterophily} \label{app:heterophily}

Employing a decoupled GCN within the \textsc{FedStruct} framework allows the nodes to access larger receptive fields compared to standard GNNs.
This proves advantageous in handling heterophily graphs.
This is because in heterophilic graphs, neighboring nodes may not provide substantial information about the node labels.
In such scenarios, the mixing of embeddings from higher-order neighbors involves a greater number of nodes with the same class, promoting increased similarity among nodes of the same class.
Higher-order neighbor mixing is one of the well-known approaches to deal with heterophilic graphs~\citep{zheng2022graph}.
Notably, \citep{abu:2019mixhop} aggregates embeddings from multi-hop neighbors showing superior performance compared to one-hop neighbor aggregation.
To show why higher-order neighborhoods help in heterophilic settings, \citep{zhu2020beyond} theoretically establish that, on average, the labels of $2$-hop neighbors exhibit greater similarity to the ego node label when the labels of $1$-hop neighbors are conditionally independent from the ego node.

Furthermore, the parameters $\{\beta_j\}_{j=1}^{L_{\mss}}$ can be trained or adjusted to control the mixing weight of different $l$-hop neighbors.
This allows the network to explicitly capture both local and global structural information within the graph.
Specifically, different powers of the normalized adjacency matrix
$\HatA^l ,\ l\in[ L_{\mss} ]$,
collect information from distinct localities in the graph.
Smaller powers capture more local information, while larger powers tend to collect more global information.
Consequently, selecting the appropriate parameters $\{\beta_j\}_{j=1}{L_{\mss}}$ enables the model to adapt to various structural properties within graphs.

An additional well-established strategy for addressing heterophilic graphs involves the discovery of potential neighbors, as proposed by \citep{zheng2022graph}.
The concept of potential neighbor discovery broadens the definition of neighbors by identifying nodes that may not be directly linked to the ego nodes but share similar neighborhood patterns in different regions of the graph.
\textsc{Geom-GCN} \citep{pei:2020geom} introduces a novel approach by defining a geometric Euclidean space and establishing a new neighborhood for nodes that are proximate in this latent space.
To this aim, \textsc{FedStruct} employs NSFs to foster long-range similarity between nodes that may not be in close proximity in the graph but exhibit proximity in the latent structural space. Specifically, the use of methods such as \textsc{Struc2Vec} \citep{ribeiro:2017struc2vec} for generating NSFs enables the creation of vectors that share similarity, even among nodes that may not be directly connected in the graph.
Consequently, the NSEs produced by these NSFs also demonstrate similarity, contributing to consistent node label predictions.
This approach showcases \textsc{FedStruct}'s capacity to capture latent structural relationships and enhance the model's ability to make accurate predictions in the context of heterophilic graphs.

The robustness of \textsc{FedStruct} in handling heterophilic graphs
is demonstrated  in~\cref{tab:results1} and~\cref{tab:parti} for the Chameleon and Amazon Ratings datasets. Such datasets are heterophilic datasets, with Chameleon having a higher degree of heterophily than Amazon Ratings (see Table~\ref{tab:data}).
For both datasets \textsc{FedStruct} yields performance close to Central GNN and significantly outperforms \textsc{FedSage+} and \textsc{FedPub}.
In~\cref{tab:results1}, for the Chameleon dataset  and 20 clients, \textsc{FedStruct} with \textsc{Hop2Vec} achieves an accuracy of 52.76\% compared to 34.33\% for \textsc{FedSage+}.
For the  Amazon Ratings dataset and 20 clients, \textsc{FedStruct} with \textsc{Hop2Vec} achieves an accuracy of 41.16\% compared to 36.09\% for \textsc{FedSage+}. Note that the improvement is larger for the more heterophilic dataset. Similar results can be observed for 
different partitioning methods, see~\cref{tab:parti}. For Chameleon, \textsc{FedStruct} with \textsc{Hop2Vec} outperforms \textsc{FedSage+} with 6.33\%, 16.88\% and 15.73\% for Louvain, random and K-means partitioning, respectively.
For Amazon ratings, \textsc{FedStruct} with \textsc{Hop2Vec} outperforms \textsc{FedSage+} with 0.07\%, 5.05\%, and 3.12\% for Louvain, random and K-means partitioning, respectively.

We highlight that we have not specifically optimized the parameters of \textsc{FedStruct} to handle heterophilic graph structures, hence it might be possible to improve the performance of \textsc{FedStruct} for these heterophilic graphs.

The robustness of \textsc{FedStruct} to different degrees of homophily/heterophily (in contrast to frameworks such as \textsc{FedSage+} and \textsc{FedPub}), underscores the adaptability and effectiveness of our framework to diverse graph scenarios, affirming its potential for a wide range of applications.

\newpage

\section{\textsc{FedStruct} with Knowledge of the Global Graph}
\label{fedstruct-a}

In this section, we discuss \textsc{FedStruct} for the scenario where the server has complete knowledge of the global graph's connections. This scenario is commonly encountered in applications such as smart grids and pandemic prediction. Importantly, also in this scenario we assume that the clients remain unaware of the local graphs of other clients. 

If the server has knowledge of the global graph, the NSEs may be computed centrally. Hence, contrary to the case where the central server lacks knowledge of the global graph, any GNN model may be used, i.e., it is not restricted to decoupled GCN (although, in Sec.~\ref{sec_GNNvsGCN}, we discuss the advantages of using a decoupled GCN).
Furthermore, any conventional NSF methods may be used such as \textsc{Node2Vec} and \textsc{GDV} that require knowledge of the $L$-hop neighborhood. 

At each client $i \in [K]$, node prediction is performed based on both the NFEs $\bh_v$ and 
\emph{node structure embeddings} (NSEs), $\bz_{v}$. 
To this aim, we pass the NFEs and NSEs through a fully connected layer with parameters $\bTheta_{\msc}$,
\begin{align}
    \hat{\by}_{v} = \softmax(   \bTheta_{\msc}\transpose  \big(\bz_{v}||\bh_{v} ) )\label{eq:y_sdsfl}= \softmax\big(    (\bTheta_{\msc}^{(\mss)})\transpose \bz_{v} + (\bTheta_{\msc}^{(\msf)})\transpose \bh_{v}\big) \quad \forall v \in \mcV_i\,.
\end{align}
We denote the vectorized version of $\bTheta_{\msc}$ by $\btheta_{\msc}$. From~\eqref{eq:y_sdsfl}, we have that $\btheta_{\msc} = \btheta_{\msc}^{(\mss)}\,||\btheta_{\msc}^{(\msf)}$.

As the server is able to generate the NSEs, at each iteration,  NSE updates will be sent to the corresponding clients.
For clients to send back gradient updates, they require the NSEs and their derivatives with respect to the generator weight parameters, as seen in~\cref{the:v1_gradient_main}.

The \textsc{FedStruct} framework for this setting is conceptually shown in Figure~\ref{fig:design-a}.

\begin{figure*}
    \centering
    \makebox[\textwidth]{\includegraphics[width=\textwidth]{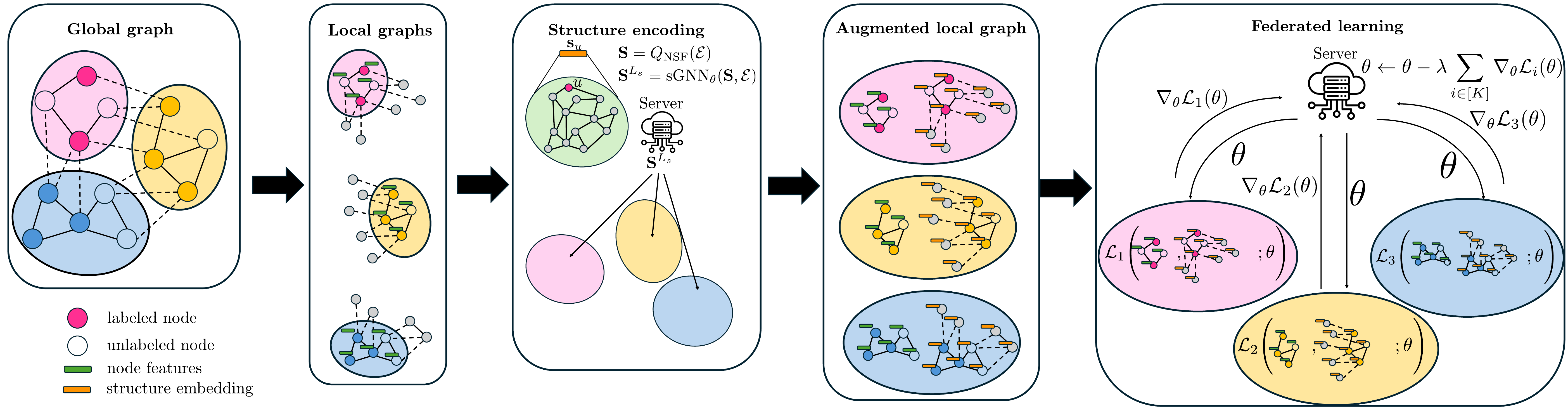}}
    \vspace{-3ex}
    \caption{\textsc{FedStruct} framework when the  server has  knowledge of
    the global graph’s connections.
    \textbf{Global Graph:} underlying graph consisting of interconnected  subgraphs. 
    \textbf{Local graphs:} clients' subgraphs augmented with  external nodes (without features or labels).
    \textbf{Structure encoding:} The server generates node structure features and node structure embeddings  for each node  and shares them with the clients.
    \textbf{Augmented local graphs:} Generate node feature embeddings.
    \textbf{Federated learning:} Federated learning step exploiting node feature embeddings and node structure embeddings.
    }
    \label{fig:design-a}
    \vspace{-3ex}
\end{figure*}

Using a generic GNN, the local gradients are given in the following theorem.

\begin{prop} \label{the:v1_gradient_main}
    Let $\mcL_i(\btheta)$, $\btheta = (\btheta_{\msf}||\btheta_{\mss}||\btheta_{\msc})$,  be the local training loss for client $i$ in~Eq.~\ref{eq:loss_function} and $\hby_v$ be given in~Eq.~\ref{eq:y_sdsfl}. 
    The gradient
     \begin{equation}
        \nabla_{\btheta} \mcL_i(\btheta) = 
        ||\left(\fp{\mcL_i(\btheta)}{\theta_{j}}   \quad\forall j \in [|\btheta|]\right)\, 
    \end{equation}
    is given by
    \begin{align*}
        \fp{\mcL_i(\btheta)}{\theta_{\mss,j}} &= 
        \sum_{v \in \tmcV}{\fp{\bz_{v}}{\theta_{\mss,j}} \bTheta_{\msc}^{(\mss)} (\hby_v - \by_v)}
        \quad\forall j \in [|\btheta_{\mss}|],\\
        \fp{\mcL_i(\btheta)}{\theta_{\msf,j}} &= 
        \sum_{v \in \tmcV}{\fp{\bh_{v}}{\theta_{\msf,j}} \bTheta_{\msc}^{(\msf)} (\hby_v - \by_v)} 
        \quad\forall j \in [|\btheta_{\msf}|],\\
        \fp{\mcL_i(\btheta)}{\theta_{\msc,j}^{(\mss)}} &= 
        \sum_{v \in \tmcV}   {    \fp{  \big( (\bTheta_{\msc}^{(\mss)})\transpose   \bz_{v} \big)  }{  \theta_{\msc,j}^{(\mss)} }  (\hby_v - \by_v)}
        \quad\forall j \in [|\btheta_{\msc}^{(\mss)}|],\\
        \fp{\mcL_i(\btheta)}{\theta_{\msc,j}^{(\msf)}} &= 
        \sum_{v \in \tmcV}   {    \fp{ \big( (\bTheta_{\msc}^{(\msf)})\transpose   \bh_{v} \big)  }{  \theta_{\msc,j}^{(\msf)} }  (\hby_v - \by_v)}
        \quad\forall j \in [|\btheta_{\msc}^{(\msf)}|]\,,
    \end{align*}
where $\theta_{\cdot,j}$ denotes the $j$-th entry of vector $\btheta_{\cdot,j}$.
\end{prop}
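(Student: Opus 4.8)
The plan is to follow the same two-step recipe used to prove \cref{th:trainingsFEd2}: first reduce the softmax/cross-entropy derivative to \cref{lem:gradient}, then apply the chain rule block by block, exploiting the fact that the feature branch (fGNN together with $\bTheta_{\msc}^{(\msf)}$) and the structure branch (sGNN together with $\bTheta_{\msc}^{(\mss)}$) are parameterized by disjoint pieces of $\btheta$. Concretely, set $\bz_v := (\bTheta_{\msc}^{(\mss)})\transpose \bs_v^{(L_{\mss})} + (\bTheta_{\msc}^{(\msf)})\transpose \bh_v^{(L)}$, so that by~\eqref{eq:y_sdsfl} we have $\hby_v = \softmax(\bz_v)$ and $\mcL_i(\btheta) = \sum_{v\in\tmcV}\mathrm{CE}(\by_v,\hby_v)$ with the cross-entropy loss. \cref{lem:gradient} gives $\nabla_{\bz_v}\mathrm{CE}(\by_v,\hby_v) = \hby_v - \by_v$, hence the chain rule yields, for every coordinate $\theta_j$ of $\btheta$,
\[
    \fp{\mcL_i(\btheta)}{\theta_j} = \sum_{v\in\tmcV} \fp{\bz_v}{\theta_j}\,(\hby_v - \by_v),
\]
where $\fp{\bz_v}{\theta_j}$ denotes the $1\times c$ Jacobian of $\bz_v$ with respect to the scalar $\theta_j$.

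The second step is to evaluate $\fp{\bz_v}{\theta_j}$ for each of the four parameter blocks. The key observation is that $\bh_v^{(L)}$ is the output of fGNN and hence a function of $\btheta_{\msf}$ only, $\bs_v^{(L_{\mss})}$ is the output of sGNN and a function of $\btheta_{\mss}$ only, and $\bTheta_{\msc}^{(\mss)},\bTheta_{\msc}^{(\msf)}$ are the two sub-blocks of $\bTheta_{\msc}$ parameterized by $\btheta_{\msc}^{(\mss)}$ and $\btheta_{\msc}^{(\msf)}$, respectively; therefore all cross-block derivatives vanish and exactly one summand of $\bz_v$ survives differentiation in each case, namely
\[
    \fp{\bz_v}{\theta_{\mss,j}} = \fp{\bs_v^{(L_{\mss})}}{\theta_{\mss,j}}\bTheta_{\msc}^{(\mss)},\qquad
    \fp{\bz_v}{\theta_{\msf,j}} = \fp{\bh_v^{(L)}}{\theta_{\msf,j}}\bTheta_{\msc}^{(\msf)},
\]
\[
    \fp{\bz_v}{\theta_{\msc,j}^{(\mss)}} = \fp{\big((\bTheta_{\msc}^{(\mss)})\transpose\bs_v^{(L_{\mss})}\big)}{\theta_{\msc,j}^{(\mss)}},\qquad
    \fp{\bz_v}{\theta_{\msc,j}^{(\msf)}} = \fp{\big((\bTheta_{\msc}^{(\msf)})\transpose\bh_v^{(L)}\big)}{\theta_{\msc,j}^{(\msf)}}.
\]
Substituting each of these into the chain-rule identity above produces the four displayed formulas in the statement.

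I expect no genuine obstacle; the argument is essentially bookkeeping, with two points deserving care. First, the transpose/row-versus-column conventions must be kept consistent so that each summand $\fp{\bz_v}{\theta_j}(\hby_v-\by_v)$ is a scalar: $\fp{\bs_v^{(L_{\mss})}}{\theta_{\mss,j}}$ is read as a row vector, $\bTheta_{\msc}^{(\mss)}$ as the embedding-to-logit matrix, and $(\hby_v-\by_v)$ as a column vector, and symmetrically on the feature side. Second, unlike in \cref{th:trainingsFEd2}, where the decoupled GCN makes $\fp{\bs_v^{(L_{\mss})}}{\theta_{\mss,j}}$ and $\fp{\bh_v^{(L)}}{\theta_{\msf,j}}$ explicit, here fGNN and sGNN are arbitrary GNNs, so these parameter-Jacobians are left in black-box (back-propagation) form; since the proposition only asserts the gradient in terms of these Jacobians, nothing further is needed. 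Finally, the sum ranges over $\tmcV$ because in this setting the server assembles the logits $\bz_v$ for all labeled nodes; restricting to the labeled nodes owned by client $i$ is the immediate specialization and changes nothing in the derivation.
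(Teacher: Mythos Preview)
Your proposal is correct and follows essentially the same approach as the paper: define $\bz_v = (\bTheta_{\msc}^{(\mss)})\transpose \bs_v^{(L_{\mss})} + (\bTheta_{\msc}^{(\msf)})\transpose \bh_v^{(L)}$, invoke \cref{lem:gradient} to obtain $\fp{\mcL_i(\btheta)}{\theta_j} = \sum_v \fp{\bz_v}{\theta_j}(\hby_v-\by_v)$, and then differentiate $\bz_v$ block by block. Your remarks on dimension conventions and on leaving the sGNN/fGNN Jacobians in black-box form are accurate and make explicit what the paper leaves implicit; your final observation about $\tmcV$ versus $\tmcV_i$ is also apt, as the paper's own proof actually sums over $\tmcV_i$ (via \eqref{eq:loss_theta}) while the statement is written with $\tmcV$.
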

The proof of the theorem is given in Section~\ref{app:gradient_v1}.

Note that client $i$ cannot compute $\nabla_{\btheta} \mcL_i(\btheta)$ directly as calculating $\fp{\bz_{v}}{\theta_{\mss,j}}$ requires knowledge of the global graph connections. Hence, the server must provide $\fp{\bz_{v}}{\theta_{\mss,j}}$ along with $\bz_{v}$ for all $v \in \tmcV_i$. The \textsc{FedStruct} framework when the server has knowledge of the global graph is described in Alg.~\ref{alg:sdsfl}.
\begin{algorithm}[tb]
   \caption{Algorithm for \textsc{FedStruct} with knowledge of the global graph}\label{alg:sdsfl}
\begin{algorithmic}
   \INPUT Global graph $\mcG$, sGNN and fGNN models, NSF generator function $\bQ_{\mathsf{NSF}}$, $K$ clients with  respective subgraphs $\{\mcG_i\}_{i=1}^{K}$,  model parameters $\btheta = (\btheta_{\msf}||\btheta_{\mss}||\btheta_{\msc})$
   
    \STATE $\bS \gets \bQ_{\mathsf{NSF}}(\mcE)$
    \FOR{e=1 to Epochs}
    \STATE $\bS^{(L_{\mss})} \gets \text{sGNN}_{\btheta_{\mss}}(\bS, \mcE)$ 
    \FOR{i=1 to K}
        \STATE Client $i$ collects $\btheta$ from the server
        \STATE Client $i$ collects $\{\bz_{v}, \forall v \in \tmcV_i\}$ from the server
        \STATE Client $i$ collects $\{\fp{\bz_{v}}{\btheta_{\mss}} , \forall v \in \tmcV_i\}$ from the server
        \STATE $\bH_i^{(L)} = \text{fGNN}_{\btheta_{\msf}}(\bX_i, \mcE_i)$
        \FOR{$v \in \tmcV_i$}
                \STATE $\hat{\by}_{v} = \softmax(\bTheta_{\msc}\transpose.(\bz_{v}||\bh_{v}))$
        \ENDFOR
        \STATE Calculate $\mcL_i(\btheta)$ based on Eq.~\ref{eq:loss_function}
        \STATE Calculate $\nabla_{\btheta} \mcL_i(\btheta)$ from ~\cref{the:v1_gradient_main}
        \STATE Send $\nabla_{\btheta} \mcL_i(\btheta)$ to the server
    \ENDFOR
    \STATE Calculate $\nabla_{\btheta} \mcL(\btheta)$ based on Eq.~\ref{eq:average_gradient}
    \STATE $\btheta \leftarrow \btheta - \lambda \nabla_{\btheta} \mcL(\btheta)$
\ENDFOR
\end{algorithmic}
\end{algorithm}

\subsection{Proof of ~\cref{the:v1_gradient_main}} \label{app:gradient_v1}
    Using ~Eq.~\ref{eq:loss_theta} we have
    \begin{align*}
        \fp{\mcL_i(\btheta)}{\theta_{j}} = \sum_{v\in\tmcV_i}{\fp{\bz_v}{\theta_j} (\hby_v - \by_v)}\,,
    \end{align*}
    where $\bq_v = (\bTheta_{\msc}^{(\mss)})\transpose \bz_{v} + (\bTheta_{\msc}^{(\msf)})\transpose \bh_{v}$.
    Taking the derivative of $\bz_v$ with respect to different entries of $\btheta$ leads to
    \begin{align}
        \label{eq:z_theta1}
        \fp{\bq_v}{\theta_{\mss,j}} &= 
        \fp{\bz_{v}}{\theta_{\mss,j}} \bTheta_{\msc}^{(\mss)}
        \quad\forall j \in  [ |\btheta_{\mss} | ] \\
        \label{eq:z_theta2}
        \fp{\bq_v}{\theta_{\msf,j}} &= 
        \fp{\bh_{v}}{\theta_{\msf,j}} \bTheta_{\msc}^{(\msf)}
        \quad\forall j \in  [ |\btheta_{\msf} | ] \\
        \label{eq:z_theta3}
        \fp{\bq_v}{\theta_{\msc,j}^{(\mss)}} &= 
        \fp{  \big( (\bTheta_{\msc}^{(\mss)})\transpose   \bz_{v} \big)  }{  \theta_{\msc,j}^{(\mss)} } 
        \quad\forall j \in  [ |\btheta_{\msc}^{(\mss)} | ] \\
        \label{eq:z_theta4}
        \fp{\bq_v}{\theta_{\msc,j}^{(\msf)}} &= 
        \fp{ \big( (\bTheta_{\msc}^{(\msf)})\transpose   \bh_{v} \big)  }{  \theta_{\msc,j}^{(\msf)} } 
        \quad\forall j \in  [ |\btheta_{\msc}^{(\msf)} | ] \,.
    \end{align}
    Substituting~Eq.~\ref{eq:z_theta1},~Eq.~\ref{eq:z_theta2},~Eq.~\ref{eq:z_theta3}, and~Eq.~\ref{eq:z_theta4} into~Eq.~\ref{eq:loss_theta} concludes the proof.

\subsection{Results}
As aforementioned, knowledge of the adjacency matrix allows \textsc{FedStruct} to utilize NSF generators such as \textsc{GDV} and \textsc{N2V}, see~\cref{app:NSFs}. 
In~\cref{tab:a_vs_b}, we provide some results for these NSFs and benchmark them against \textsc{Global DGCN} and \textsc{FedStruct (H2V)} without knowledge of the adjacency matrix.
While \textsc{FedStruct (GDV)} is mostly inferior to \textsc{FedStruct (H2V)}, leveraging \textsc{Node2Vec} as the NSF generator boosts the performance even further in several scenarios. 
For example, on Cora with Louvain partitiong, the performance is improved from 81.23\% to 82.78\%.
Even more, arbitrary GNN architectures are supported when the global adjacency matrix is available at the server and may potentially improve performance even further.

\begin{table*}
\caption{Classification accuracies for \textsc{FedStruct} with and without knowledge of the adjacency matrix. The results are shown for 10 clients with a 10--10--80 train-val-test split.}
\label{tab:a_vs_b}
\fontsize{6}{9}\selectfont
\begin{center}
\begin{sc}
\setlength\tabcolsep{0pt}
\begin{tabular*}{\textwidth}{@{\extracolsep{\fill}} l *{9}{c} }
\toprule
\multicolumn{1}{c}{}    & \multicolumn{3}{c}{\textbf{Cora}}                & \multicolumn{3}{c}{\textbf{Citeseer}}             & \multicolumn{3}{c}{\textbf{Pubmed}}               \\
\multicolumn{1}{l}{Central DGCN}     & \multicolumn{3}{c}{83.72 $\pm$ 0.64}          & \multicolumn{3}{c}{68.69 $\pm$ 1.04}              & \multicolumn{3}{c}{86.26 $\pm$ 0.34}\\
                        & {Louvain}      & {random}       & {kmeans}       & {Louvain}      & {random}       & {kmeans}       & {Louvain}      & {random}       & {kmeans}       \\
\midrule
\multicolumn{10}{c}{\textbf{Global adjacency matrix not known to server}} \\
FedStruct (DGCN+\textsc{H2V})& 81.34$\pm$ 1.40 & 79.62$\pm$ 0.85 & 80.34$\pm$ 0.62 & 68.48$\pm$ 0.98 & 66.34$\pm$ 1.03 & 66.62$\pm$ 1.59 & 83.39$\pm$ 0.34 & 85.48$\pm$ 0.29 & 85.93$\pm$ 0.30 \\ 
\midrule
\multicolumn{10}{c}{\textbf{Global adjacency matrix known to server}} \\
FedStruct (DGCN+\textsc{GDV})& 81.27$\pm$ 1.56 & 72.09$\pm$ 1.70 & 74.55$\pm$ 1.47 & 68.82$\pm$ 0.91 & 64.92$\pm$ 1.04 & 66.22$\pm$ 1.37 & 84.98$\pm$ 0.21 & 85.59$\pm$ 0.26 & 85.88$\pm$ 0.21 \\ 
FedStruct (DGCN+\textsc{N2V})& 82.26$\pm$ 1.21 & 80.76$\pm$ 0.91 & 80.97$\pm$ 1.01 & 68.46$\pm$ 1.17 & 66.88$\pm$ 1.19 & 67.23$\pm$ 0.92 & 84.84$\pm$ 0.23 & 86.87$\pm$ 0.31 & 87.08$\pm$ 0.24 \\ 
\midrule
FedStruct (GNN+\textsc{H2V})& 80.87$\pm$ 0.51 & 66.84$\pm$ 0.71 & 68.06$\pm$ 1.22 & 68.89$\pm$ 0.99 & 63.77$\pm$ 1.71 & 64.66$\pm$ 0.96 & 76.04$\pm$ 6.77 & 85.27$\pm$ 0.45 & 82.92$\pm$ 4.92 \\ 
FedStruct (GNN+\textsc{GDV})& 80.62$\pm$ 0.79 & 69.50$\pm$ 1.23 & 69.86$\pm$ 1.94 & 68.96$\pm$ 1.09 & 64.78$\pm$ 1.11 & 64.90$\pm$ 1.17 & 81.56$\pm$ 5.46 & 85.12$\pm$ 0.40 & 85.36$\pm$ 0.30 \\ 
FedStruct (GNN+ \textsc{N2V})& 81.23$\pm$ 1.02 & 75.24$\pm$ 1.00 & 75.91$\pm$ 0.90 & 69.22$\pm$ 1.26 & 64.80$\pm$ 1.71 & 65.53$\pm$ 1.15 & 76.11$\pm$ 6.80 & 86.24$\pm$ 0.87 & 81.04$\pm$ 6.28 \\ 
\end{tabular*}

\setlength\tabcolsep{0pt}
\begin{tabular*}{\textwidth}{@{\extracolsep{\fill}} l *{9}{c} }
\toprule
\multicolumn{1}{c}{}    & \multicolumn{3}{c}{\textbf{Chameleon}}           & \multicolumn{3}{c}{\textbf{Amazon Photo}}        & \multicolumn{3}{c}{\textbf{Amazon Ratings}}     \\
\multicolumn{1}{l}{Central DGCN}     & \multicolumn{3}{c}{54.39 $\pm$ 2.24}          & \multicolumn{3}{c}{92.27 $\pm$ 0.79}              & \multicolumn{3}{c}{40.94 $\pm$ 0.46}\\
\cmidrule(rl){2-4} \cmidrule(rl){5-7}  \cmidrule(rl){8-10}
                        & {Louvain}      & {random}       & {kmeans}       & {Louvain}      & {random}       & {kmeans}       & {Louvain}      & {random}       & {kmeans}      \\
\midrule
\multicolumn{10}{c}{\textbf{Global adjacency matrix not known to server}} \\
FedStruct (DGCN+\textsc{H2V})& 53.15$\pm$ 1.26 & 52.37$\pm$ 0.95 & 52.85$\pm$ 2.44 & 90.56$\pm$ 0.58 & 90.81$\pm$ 0.73 & 91.07$\pm$ 0.79 & 40.93$\pm$ 0.39 & 41.00$\pm$ 0.27 & 40.86$\pm$ 0.50  \\ 
\midrule
\multicolumn{10}{c}{\textbf{Global adjacency matrix known to server}} \\
FedStruct (DGCN+\textsc{GDV})& 48.08$\pm$ 1.64 & 39.91$\pm$ 1.07 & 40.76$\pm$ 2.76 & 91.68$\pm$ 0.72 & 90.51$\pm$ 0.55 & 90.33$\pm$ 0.57 & 41.43$\pm$ 0.51 & 39.54$\pm$ 0.41 & 39.65$\pm$ 0.43  \\ 
FedStruct (DGCN+\textsc{N2V})& 49.23$\pm$ 1.58 & 43.34$\pm$ 1.43 & 44.99$\pm$ 2.88 & 91.86$\pm$ 0.78 & 91.53$\pm$ 0.46 & 91.65$\pm$ 0.44 & 41.74$\pm$ 0.30 & 41.84$\pm$ 0.38 & 41.28$\pm$ 0.55  \\ 
\midrule
FedStruct (GNN+\textsc{H2V})& 40.42$\pm$ 2.01 & 38.36$\pm$ 2.10 & 39.47$\pm$ 1.53 & 93.36$\pm$ 0.49 & 91.07$\pm$ 0.57 & 90.81$\pm$ 0.39 & 37.58$\pm$ 0.57 & 35.13$\pm$ 0.44 & 35.65$\pm$ 0.36  \\ 
FedStruct (GNN+\textsc{GDV})& 46.40$\pm$ 2.32 & 38.80$\pm$ 2.27 & 38.91$\pm$ 2.31 & 93.17$\pm$ 1.10 & 91.52$\pm$ 1.00 & 91.63$\pm$ 0.34 & 41.17$\pm$ 0.53 & 36.74$\pm$ 0.83 & 37.41$\pm$ 0.78  \\ 
FedStruct (GNN+\textsc{N2V})& 48.46$\pm$ 2.21 & 45.12$\pm$ 2.14 & 46.02$\pm$ 2.77 & 91.78$\pm$ 1.47 & 92.00$\pm$ 0.59 & 91.59$\pm$ 0.89 & 40.96$\pm$ 0.33 & 40.29$\pm$ 0.50 & 40.12$\pm$ 0.60  \\ 
\bottomrule
\end{tabular*}
\end{sc}
\end{center}
\vskip -0.1in
\end{table*}

\subsection{GNN vs \textsc{Decoupled Graph Convolutional Network}}
\label{sec_GNNvsGCN}

As mentioned earlier, if the server has knowledge of the global graph, \textsc{FedStruct} can operate with any underlying GNN model. However, in Figure~\ref{fig:layers2} we demonstrate the advantages of incorporating a decoupled GCN into our framework. To this aim, we provide results for \textsc{FedStruct} with both an underlying decoupled GCN and a standard GNN (\textsc{GraphSage}) for a semi-supervised learning scenario with varying fraction of labeled training nodes Specifically, we show the accuracy for scenarios involving 50\%, 10\%, and 1\% labeled nodes.
For 50\% and 10\% of labeled nodes, \textsc{FedStruct}  with both underlying decoupled GCN and standard GNN yield similar results. However, in a heavily semi-supervised scenario (as encountered in applications like anti-money laundering), the accuracy of \textsc{FedStruct}  with standard GNN experiences a significant decline. In contrast,   \textsc{FedStruct}  with decoupled GCN achieves performance close to that of central GNN even in such a challenging scenario. This highlights the critical role of employing a decoupled GCN to effectively tackle the semi-supervised learning scenarios. 
Moreover, knowledge of the global graph edges enables the use of NSFs such as \textsc{Node2Vec} which, as seen in Figure~\ref{fig:layers2}, sometimes enhances the performance compared to \textsc{Hop2Vec}.

\begin{figure}
    \centering
    \begin{tikzpicture}
  \centering
  \begin{axis}[
        ybar, axis on top,
        title={},
        height=8cm, width=10cm,
        bar width=0.3cm,
        enlarge y limits=true,
        enlarge x limits=0.2,
        ymin=70, ymax=100,
        clip=false,
        axis x line*=bottom,
        axis y line*=left,
        y axis line style={opacity=100},
        tickwidth=0pt,
        legend image code/.code={
        \draw [#1] (0cm,-0.1cm) rectangle (0.15cm,0.2cm); },
        legend style={
            at={(0.05,1.2)},
            anchor=north west,
            font={\scriptsize\arraycolsep=2pt},
        },
        legend cell align ={left},
        ylabel={Accuracy (\%)},
        xlabel={Fraction of training nodes with labels (\%)},
        symbolic x coords={
           1,10,50},
       xtick=data,
       nodes near coords={
        \pgfmathprintnumber[precision=0]{\pgfplotspointmeta}
       },
    ]
   \addplot [draw=none, fill=brown!30] coordinates {
      (1,74.882)
      (10, 90.919) 
      (50, 93.739) };\addlegendentry{\textsc{FedSGD GNN}}
    \addplot [draw=none, fill=blue!30] coordinates {
      (1,82.844)
      (10, 91.439) 
      (50, 94.086) };\addlegendentry{\textsc{FedStruct} (dGCN, \textsc{H2V})}
   \addplot [draw=none,fill=red!30] coordinates {
      (1,84.820)
      (10, 91.596) 
      (50,93.585) };\addlegendentry{\textsc{FedStruct} (dGCN, \textsc{N2V})}
   \addplot [draw=none, fill=green!30] coordinates {
      (1,75.196)
      (10, 90.752) 
      (50,91.112) };\addlegendentry{\textsc{FedStruct} (GNN, \textsc{H2V})}
   \addplot [draw=none, fill=pink!30] coordinates {
      (1,79.691)
      (10, 92.691) 
      (50, 95.421) };\addlegendentry{\textsc{FedStruct} (GNN, \textsc{N2V})}
   \addplot [draw=none, fill=gray!30] coordinates {
      (1,86.746)
      (10, 94.020) 
      (50, 95.415) };\addlegendentry{Central GNN}

  \end{axis}
  \end{tikzpicture}
    \vspace{-2ex}
    \caption{Accuracy vs fraction of training nodes with labels for \textsc{FedStruct} with underlying decoupled GCN and underlying standard GNN (\textsc{GraphSage}) on the Amazon Photo dataset with K-means partitioning.}
    \label{fig:layers2}
    \vspace{-2ex}
\end{figure}

\subsection{Communication Complexity}

In this Section, we discuss the communication complexity of \textsc{FedStruct} when the server has knowledge of the global graph. Note that, in this case, the NSEs can be computed by the server. Hence,  no sharing of information between the clients is needed 
and clients only communicate with the server.

\begin{table*}
\caption{Communication Complexity of \textsc{FedStruct} when the Server has Knowledge of the Global Graph.}
\label{tab:comm-a}
\vskip 0.05in
\fontsize{8}{10}\selectfont
\begin{center}
\begin{sc}
\setlength\tabcolsep{0pt}
\begin{tabular*}{\textwidth}{@{\extracolsep{\fill}} l *{2}{c} }
\toprule
Data                                        & before training                        &training            \\
\midrule
\textsc{FedStruct}                        & 0                                                 &$\mathcal{O}(E\cdot K\cdot  |\btheta| + E \cdot n\cdot d\cdot |\btheta|)$       \\
\textsc{FedStruct} + \textsc{Hop2Vec}     & 0                                                 &$\mathcal{O}(E\cdot K\cdot  |\btheta| + E \cdot n\cdot d\cdot |\btheta| + E\cdot K \cdot n \cdot d + E \cdot n^2 \cdot d^2)$       \\
               \bottomrule
\end{tabular*}
\end{sc}
\end{center}
\vskip -0.1in
\end{table*}
    During the training, in each training round, each client collects $\btheta$ and returns $\nabla_{\btheta} \mcL_i(\btheta, \bS)$ to the server, totaling a communication of $2E \cdot K \cdot |\btheta|$ parameters during the training. 
    This is consistent across all versions of \textsc{FedStruct}. 
    Additionally, the server sends $\bz_{v}$,   $\forall v \in \tmcV_i$, and $\{\fp{\bz_{v}}{\btheta_{\mss}} , \forall v \in \tmcV_i\}$ to each client, adding up to $E \cdot n\cdot d (|\btheta| + 1)$ parameters.
    The dominant term of the  complexity is $E \cdot n\cdot d\cdot |\btheta|$, which scales with $n$. The complexity  is therefore of the same order as that of  \textsc{FedSage+}.
     
The use of \textsc{Hop2Vec} entails some additional communication complexity, corresponding to the collection of $\{\fp{\bz_{v}}{\bs_{q}}, \forall v \in \tmcV_i,q\in\mcV\}$ by client $i$, constituting  $E\cdot n^2\cdot d^2$ parameters, and sending $\nabla_{\bS} \mcL_i(\btheta, \bS)$ to the server, constituting $E\cdot K\cdot n\cdot d$ parameters. 
    In practical scenarios with large graphs, $E\cdot n^2\cdot d^2$ is the dominant term out of the two.

With \textsc{Hop2Vec}, the dominant complexity term is on the order of $n^2$, which is impractical for large networks. 
It should be noted, however,  that real-world graph-structured datasets are sparse. 
Furthermore, each node only depends on its $L_{\mss}$-hop neighbors. 
Hence, many of the values $\{\fp{\bz_{v}}{\bs_{q}}, \forall v \in \tmcV_i,q\in\mcV\}$ are zero and do not need to be communicated. 
Assuming an average node degree $\bar{d}$ and $L_{\mss}$ layers, for which each node  has access to $\bar{d}^{L_{\mss}}$ nodes, the complexity of  \textsc{FedStruct} with \textsc{Hop2Vec} is on the order of $\min\lbrace n \cdot \bar{d}^{L_{\mss}},n^2 \rbrace$. 
For small average degree $\bar{d}$ and $L_{\mss}$, the complexity can therefore be reduced significantly. 
The communication complexity is reported in~\cref{tab:comm-a}.

\end{document}